\newcommand{\R}{\mathbb{R}}
\newcommand{\Th}{\textsuperscript{th}\xspace}
\newcommand{\Nd}{\textsuperscript{nd}\xspace}
\newcommand{\St}{\textsuperscript{st}\xspace}
\newcommand{\Rd}{\textsuperscript{rd}\xspace}
\renewcommand{\S}{\mathbb{S}}
\newcommand{\eps}{\epsilon}
\newcommand{\bb}[1]{\mathbb{#1}}
\newcommand{\bc}[1]{\mathcal{#1}}
\renewcommand{\paragraph}[1]{\textit{#1.}}
\declaretheorem[name=Theorem,numberwithin=section]{lemma}
\DeclareMathOperator*{\conv}{conv} %
\DeclareMathOperator*{\argmax}{arg\,max}
\newcommand{\vrep}{$\mathcal{V}$-representation\xspace}
\newcommand{\vreps}{$\mathcal{V}$-representations\xspace}
\newcommand{\hrep}{$\mathcal{H}$-representation\xspace}
\newcommand{\hreps}{$\mathcal{H}$-representations\xspace}
\newcommand{\PDDMl}{Partial Double Description Method\xspace}
\newcommand{\PDDM}{PDDM\xspace}
\newcommand{\PDD}{PDD\xspace}
\newcommand{\PDDl}{Partial Double Description\xspace}
\newcommand{\SBLM}{SBLM\xspace}
\newcommand{\SBLl}{Split-Bound-Lift\xspace}
\newcommand{\SBLMl}{\SBLl Method\xspace}
\newcommand{\FastLin}{\textsc{Fast-Lin}\xspace}
\newcommand{\Neurify}{\textsc{Neurify}\xspace}
\newcommand{\tool}{\textsc{Prima}\xspace}
\newcommand{\DeepPoly}{\textsc{DeepPoly}\xspace}
\newcommand{\GPUpoly}{\textsc{GPUPoly}\xspace}
\newcommand{\DeepZono}{\textsc{DeepZono}\xspace}
\newcommand{\RefinePoly}{\textsc{RefinePoly}\xspace}
\newcommand{\RefineZono}{\textsc{RefineZono}\xspace}
\newcommand{\kPoly}{\textsc{kPoly}\xspace}
\newcommand{\optcv}{\textsc{OptC2V}\xspace}
\newcommand{\fastcv}{\textsc{FastC2V}\xspace}
\newcommand{\CNNCert}{\textsc{CNN-Cert}\xspace}
\newcommand{\Crown}{\textsc{Crown}\xspace}
\newcommand{\BCrown}{\mbox{\textsc{$\beta$-Crown}}\xspace}
\newcommand{\SDPFO}{\textsc{SDP-FO}\xspace}
\newcommand{\convsmall}{\texttt{ConvSmall}\xspace}
\newcommand{\CNNBadv}{\texttt{CNN-B-Adv}\xspace}
\newcommand{\CNNAmix}{\texttt{CNN-A-Mix}\xspace}
\newcommand{\convbig}{\texttt{ConvBig}\xspace}
\newcommand{\resnet}{\texttt{ResNet}\xspace}
\newcommand{\NVIDIA}{\texttt{DAVE}\xspace}
\newcommand{\CNNA}{\texttt{CNN-A}\xspace}
\newcommand{\CNNB}{\texttt{CNN-B}\xspace}
\newcommand{\cifar}{CIFAR10\xspace}
\newcommand{\mnist}{MNIST\xspace}
\def\els@aparagraph[#1]#2{\elsparagraph[#1]{#2}}
\def\els@bparagraph#1{\elsparagraph*{#1}}
\def\Figref#1{Figure~\ref{#1}}
\def\Tableref#1{Table~\ref{#1}}
\def\Secref#1{Section~\ref{#1}}
\def\Algref#1{Algorithm~\ref{#1}}
\DeclareAcronym{cli} {
    short = CLI,
    long = Command Line Interface,
    class = abbrev
}
\definecolor{ckeyword}{HTML}{7F0055}
\definecolor{ccomment}{HTML}{3F7F5F}
\definecolor{cstring}{HTML}{2A0099}
\lstdefinestyle{numbers}{
	numbers=left,
	framexleftmargin=20pt,
	numberstyle=\tiny,
	firstnumber=auto,
	numbersep=1em,
	xleftmargin=2em
}
\lstdefinestyle{layout}{
	frame=none,
	captionpos=b,
}
\lstdefinestyle{comment-style}{
	morecomment=[l]//,
	morecomment=[s]{/*}{*/},
	commentstyle={\color{ccomment}\itshape},
}
\lstdefinestyle{string-style}{
	morestring=[b]",%
	morestring=[b]',%
	stringstyle={\color{cstring}},
	showstringspaces=false,%
}
\lstdefinestyle{keyword-style}{
	keywordstyle={\ttfamily\bfseries},
	morekeywords={
		function,
		constructor,
		int,
		bool,
		return,
		returns,
		uint
	},
	morekeywords = [2]{},
	keywordstyle = [2]{\text},
	sensitive=true,
}
\lstdefinestyle{input-encoding}{
	inputencoding=utf8,
	extendedchars=true,
	literate=
	{ℝ}{$\reals$}1%
	{→}{$\rightarrow$}1%
	{α}{$\alpha$}1%
	{β}{$\beta$}1%
	{λ}{$\lambda$}1%
	{θ}{$\theta$}1%
	{ϕ}{$\phi$}1%
}
\lstdefinestyle{escaping}{
	moredelim={**[is][\color{blue}]{\%}{\%}},
	escapechar=|,
	mathescape=true
}
\lstdefinestyle{default-style}{
	basicstyle=\fontencoding{T1}\ttfamily\footnotesize,
	style=numbers,
	style=layout,
	style=comment-style,
	style=string-style,
	style=keyword-style,
	style=input-encoding,
	style=escaping,
	tabsize=2,
	upquote=true
}
\lstdefinelanguage{BASIC}{
	language=C++,
	style=default-style
}[keywords,comments,strings]%
\newcommand{\crefrangeconjunction}{--}
\crefname{listing}{Lst.}{listings}
\crefname{line}{Lin.}{Lin.}
\crefname{appendix}{App.}{App.}
\newcommand{\app}[1]{%
	\ifbool{includeappendix}{\cref{#1}}{the appendix}%
}
\newcommand{\App}[1]{%
	\ifbool{includeappendix}{\cref{#1}}{The appendix}%
}
\begin{document}

	\title{PRIMA: General and Precise Neural Network Certification via Scalable Convex Hull Approximations}

	\author{Mark Niklas Müller}
	\affiliation{
		\department{Department of Computer Science}              %
		\institution{ETH Zurich}            %
		\city{Zurich}
		\country{Switzerland}                    %
	}
	\authornote{Equal contribution}
	\email{mark.mueller@inf.ethz.ch}

	\author{Gleb Makarchuk}
	\affiliation{
		\department{Department of Computer Science}              %
		\institution{ETH Zurich}            %
		\city{Zurich}
		\country{Switzerland}                    %
	}
	\authornotemark[1]
	\email{gleb.makarchuk@gmail.com}
	
	\author{Gagandeep Singh}
	\affiliation{
		\institution{UIUC and VMware Research}           %
		\country{United States}                   %
	}
	\email{ggnds@illinois.edu}
	\email{gasingh@vmware.com}          %
	
	\author{Markus Püschel}
	\affiliation{
		\department{Department of Computer Science}              %
		\institution{ETH Zurich}            %
		\country{Switzerland}                    %
	}
	\email{pueschel@inf.ethz.ch}    
	\author{Martin Vechev}
	\affiliation{
		\department{Department of Computer Science}              %
		\institution{ETH Zurich}            %
		\city{Zurich}
		\country{Switzerland}                    %
	}
	\email{martin.vechev@inf.ethz.ch}              %

	\begin{abstract} Formal verification of neural networks is critical for their safe adoption in real-world applications. However, designing a precise and scalable verifier which can handle different activation functions, realistic network architectures and relevant specifications remains an open and difficult challenge.

In this paper, we take a major step forward in addressing this challenge and present a new verification framework, called \tool. \tool is both (i) general: it handles any non-linear activation function, and (ii) precise: it computes precise convex abstractions involving \emph{multiple} neurons via novel convex hull approximation algorithms that leverage concepts from computational geometry. The algorithms have polynomial complexity, yield fewer constraints, and minimize precision loss.

We evaluate the effectiveness of \tool on a variety of challenging tasks from prior work. Our results show that \tool is significantly more precise than the state-of-the-art, verifying robustness to input perturbations for up to 20\%, 30\%, and 34\% more images than existing work on ReLU-, Sigmoid-, and Tanh-based networks, respectively. Further, \tool enables, for the first time, the precise verification of a realistic neural network for autonomous driving within a few minutes.
\end{abstract}

	\begin{CCSXML}
		<ccs2012>
		<concept>
		<concept_id>10003752.10010124.10010138.10010142</concept_id>
		<concept_desc>Theory of computation~Program verification</concept_desc>
		<concept_significance>500</concept_significance>
		</concept>
		<concept>
		<concept_id>10003752.10003790.10011119</concept_id>
		<concept_desc>Theory of computation~Abstraction</concept_desc>
		<concept_significance>500</concept_significance>
		</concept>
		<concept>
		<concept_id>10010147.10010257.10010293.10010294</concept_id>
		<concept_desc>Computing methodologies~Neural networks</concept_desc>
		<concept_significance>500</concept_significance>
		</concept>
		</ccs2012>
	\end{CCSXML}

	\ccsdesc[500]{Theory of computation~Abstraction}
	\ccsdesc[500]{Theory of computation~Program verification}
	\ccsdesc[500]{Computing methodologies~Neural networks}

	\keywords{Robustness, Convexity, Polyhedra, Abstract Interpretation}  %

	\maketitle

	\section{Introduction}\label{sec:intro}

The growing adoption of neural networks (NNs) in many safety critical domains highlights the importance of providing formal, deterministic guarantees about their safety and robustness when deployed in the real world \cite{szegedy:13}. While the last few years have seen significant progress in formal verification of NNs, existing deterministic methods (see \citet{Caterina21Review} for a survey) still either do not scale to or are too imprecise when handling realistic networks.

\begin{figure*}[t]
	\centering
	\vspace{-2mm}
	\scalebox{0.77}{\input{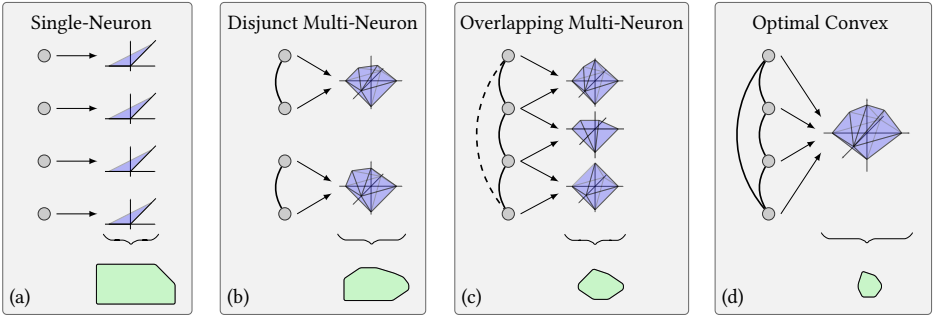}}
	\vspace{-2mm}
	\caption{Illustration of the tightness of different abstraction strategies, for a layer of four neurons (grey dots). Strong interdependencies between neurons that can be captured directly or indirectly are shown as solid or dashed lines, respectively. Individual single-neuron, multi-neuron or optimal convex abstractions are illustrated in blue and the resulting overall layer-wise abstraction in green.}%
	\label{fig:concept}
	\vspace{-4mm}
\end{figure*}

\begin{wrapfigure}[9]{r}{0.48 \textwidth}
	\centering
	\vspace{-4mm}
	\scalebox{0.92}{\begin{tikzpicture}
	\draw[->] (-2.5, 0) -- (2.5, 0) node[right,scale=0.85] {$x$};
	\draw[->] (0, -0.4) -- (0, 2.0) node[above,scale=0.85] {$y$};

	\def\a{-1.7}
	\def\b{1.7}
	\coordinate (a) at ({\a},{0});
	\coordinate (b) at ({\b},{\b});
	\coordinate (c) at ({0},{0});
	
	\node[circle, fill=black, minimum size=3pt,inner sep=0pt, outer sep=0pt] at (a) {};
	\node[circle, fill=black, minimum size=3pt,inner sep=0pt, outer sep=0pt] at (b) {};
	\node[circle, fill=black, minimum size=3pt,inner sep=0pt, outer sep=0pt] at (c) {};
	
	\draw[fill=blue!90,opacity=0.3] (a) -- (c) -- (b) -- cycle;
	\draw[black,thick] (a) -- (c) -- (b);
	\draw[-] (b) -- ($(b)+(0.3,0.3)$);
	\draw[-] ({\a},-0.4) -- ({\a},0.4);
	\draw[-] ({\b},-0.4) -- ({\b},1.9);
	
	\node[anchor=south west,align=center,scale=0.85] at ({\a},-0.50) {$l_x$};
	\node[anchor=south west,align=center,scale=0.85] at ({\b},-0.50) {$u_x$};	
	\node[anchor=south east,align=center,scale=0.85] at ({-0.1},0.70) {$y\leq \frac{u_x}{u_x-l_x} (x-l_x)$};
	\node[anchor=north east,align=center,scale=0.85] at ({-0.0},0.00) {$y\geq 0$};
	\node[anchor=north west,align=center,scale=0.85] at ({0.6},0.70) {$y\geq x$};
	
	\node[anchor=south west,align=center,scale=0.85] at ($(b)+(0.1,-0.5)$) {$y=\max(0,x)$};

\end{tikzpicture}}
	\vspace{-4mm}
	\caption{Convex single-neuron approximation (blue) of a ReLU (black) with bounded inputs $x\in [{l_x},{u_x}]$.}
	\label{fig:conv_barrier_ReLU}
\end{wrapfigure}
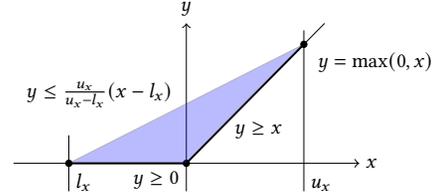
\paragraph{Key challenge: handling non-linearities} Neural networks interleave affine and non-linear activation layers (e.g., ReLU, Sigmoid), leading to highly non-linear behaviours. Because affine layers can be captured exactly using linear constraints, the key challenge in neural network verification rests in designing methods that can handle the effect of these non-linear activations in a precise and scalable manner.

Exact verification, e.g., \citep{tjeng2017evaluating, bunel2020branch, ehlers2017formal, katz2017reluplex, anderson2020strong, Pailoor19, wang2018neurify, singh2019boosting, wang2021beta}, has, in the worst-case, exponential complexity in the (large) number of non-linear activations due to a combinatorial blow-up of case distinctions (e.g., for ReLUs) and complex shapes for general activations (e.g., for Sigmoids). Therefore exact verifiers typically only handle piecewise linear activations and do not scale to larger networks.

To overcome this limitation, state-of-the-art verifiers, e.g., \citep{singh2019abstract, zhang2018crown, weng2018fastlin, Lirpa:20, singh2019beyond, tjandraatmadja2020convex}, often sacrifice completeness for scalability and leverage abstract interpretation \cite{Cousot96} to over-approximate the effect of each activation layer with convex polyhedra. Naturally, the scalability and precision of these incomplete methods are tied to the particular polyhedral fragment they utilize.

Below, we contrast different state-of-the-art abstraction approaches with our work by comparing the strong inter-neuron dependencies they can capture directly or indirectly, illustrated as solid or dashed lines, respectively, in \Figref{fig:concept} for a layer of four neurons. Individual abstractions are visualized in blue and the resulting layer-wise shape in green.

\paragraph{Optimal convex approximation}
Assume a layer of $n$ neurons, each applying the scalar, univariate, non-linear activation function $f \colon \R \rightarrow \R$ and the most precise polyhedral abstraction $\bc{P}$ of the layer's inputs $\bm{x}$. The most precise convex abstraction of the layer output is then given by the convex hull of all input-output vector pairs $\conv(\{(\bm{x},\bm{f}(\bm{x})) \, | \, \bm{x} \in \bc{P} \subseteq \R^n\})$, illustrated in \Figref{fig:concept}~(d), where all interactions are fully captured. Computing this $2n$-dimensional convex hull, however, is intractable due to the exponential cost $\bc{O}(n_v\log(n_v)+n_v^{n})$ \cite{chazelle1993optimal} in the number of neurons $n$, where the number of vertices $n_v = \bc{O}(n_c^{n})$ of the input polytope $\bc{P}$ is at worst also exponential in $n$ \cite{seidel1995upper} ($n_c$ is the number of constraints of the input polytope $\bc{P}$).

\paragraph{Single-Neuron approximation}
Most incomplete verifiers are fundamentally based on single-neuron convex abstractions, i.e., activations are approximated separately.
The tightest single-neuron abstractions maintain upper and lower bounds $l_x, u_x$ for each input $x$ and compute convex hulls of all input-output tuples: $\conv(\{({x},{f}({x})) \, | \, {x} \in [{l_x},{u_x}] \subseteq \R\})$, as illustrated in \Figref{fig:conv_barrier_ReLU} for a ReLU.
The union of the obtained constraints is the final abstraction of the layer. Geometrically, it is the Cartesian product of the convex hulls for each ReLU. This abstraction is significantly larger in volume (exponential in $n$) than the optimal convex hull discussed earlier, the key reason being that the interdependencies between neurons in the same layer are ignored, as illustrated in \Figref{fig:concept}~(a). Thus, the approximation error can grow exponentially with each layer, accumulating significant imprecision.

\paragraph{Multi-Neuron approximation}
To mitigate this limitation for ReLU networks, recent works \cite{singh2019beyond, tjandraatmadja2020convex, depalma2021scaling} introduced multi-neuron abstractions as a first compromise between the optimal but intractable layer-wise and the imprecise but scalable neuron-wise abstraction. \citet{singh2019beyond} partition the neurons of an activation layer into small sets of size $n_s\leq 5$, form groups of $k \leq 3$ neurons for each partition%
, approximate the group's input with octahedra \cite{clariso2007octahedron}, and then compute exact convex hulls \emph{jointly} approximating the \emph{output} of $k$ ReLUs for this input. These exact convex hull computations are computationally expensive and yield complex constraints, limiting the approach to only a few, mostly disjoint neuron groups, and restricting the number of captured dependencies, see \Figref{fig:concept}~(b). \citet{tjandraatmadja2020convex} and \citet{depalma2021scaling} merge the activation layer with the preceding affine layer and compute a convex approximation over the resulting multivariate activation layer for a hyperbox approximation of its input. This coarse input abstraction effectively restricts their approach to interactions over a single affine layer at a time. While both approaches currently yield state-of-the-art precision, they are limited to ReLU activations and lack scalability as they require small instances of the NP-hard convex hull problem to be solved exactly or large instances to be solved partially. They also do not address the problem of capturing enough neuron-interdependencies within a layer to come as close as possible to the optimal convex abstraction.

\paragraph{This work: precise multi-neuron approximations}
In this work, we present the first general verification framework for networks with arbitrary, bounded, multivariate activation functions called \tool (PRecIse Multi-neuron Abstraction). \tool builds on the group-wise approximations from \citet{singh2019beyond} and leverages the key insight that most interdependencies between neurons can be captured by considering a large number of relatively small, overlapping neuron-groups. While not achieving the tightness of the optimal convex approximation, \tool yields much tighter layer-wise approximations than previous methods, as shown in \Figref{fig:concept}~(c).

The key technical contributions of our work are: (i) \PDDM (\PDDMl) -- a general, precise, and fast convex hull approximation method for polytopes that enables the consideration of many neuron groups, and (ii) \SBLM (\SBLMl) -- a novel decomposition approach that builds upon the \PDDM to quickly compute multi-neuron constraints. While we combine these methods with abstraction refinement approaches in \tool, we note that they are also of general interest (beyond neural networks) and can be used independently of each other. 

\tool can be applied to any network with bounded, multivariate activation functions and arbitrary specifications expressible as polyhedra such as individual fairness \cite{ruoss2020learning}; global safety properties \cite{katz2017reluplex}; and acoustic  \cite{ryou2020fast}, geometric \cite{balunovic2019certifying}, spatial \cite{ruoss2020efficient}, and $\ell_p$-norm bounded perturbations \cite{gehr2018ai2}. Our experimental evaluation shows that \tool achieves state-of-the-art precision on the majority of our ReLU-based classifiers while remaining competitive on the rest. For Sigmoid- and Tanh-based networks, \tool significantly outperforms prior work on all benchmarks. Further, \tool enables, for the first time, precise and scalable verification of a realistic architecture for autonomous driving containing $>100$k neurons in a regression setting. Finally, while \tool is incomplete, it can be used for boosting the scalability of state-of-the-art complete verifiers \citep{singh2019boosting, wang2021beta} for ReLU-based networks that benefit from more precise convex abstractions.

\paragraph{Main contributions} Our key contributions are:
\begin{enumerate}
	\item \PDDM, a precise method for approximating the convex hull of polytopes, with worst-case polynomial time- and space-complexity and exactness guarantees in low dimensions.

	\item \SBLMl, a technique which efficiently computes joint constraints over groups of non-linear functions, by decomposing the underlying convex hull problem into lower-dimensional spaces.

	\item \tool, a novel verifier combining these approaches with a sparse neuron grouping technique and abstraction refinement, to obtain the first multi-neuron verifier for arbitrary, bounded, multivariate non-linear activations (e.g., ReLU, Sigmoid, Tanh, and MaxPool).
	
	\item An evaluation of \tool on a range of activations and network architectures (e.g., fully connected, convolutional, and residual). We show that \tool is significantly more precise than state-of-the-art, with gains of up to 20\%, 30\%, and 34\% for \mbox{ReLU-,} Sigmoid-, and Tanh-based networks, while being effective in a regression setting, scaling to large networks, and enabling verification in real-world settings such as autonomous driving.
\end{enumerate}
We release our code as part of the open-source framework ERAN at \url{https://github.com/eth-sri/eran}.

	\section{Background} \label{sec:problem}

In this section, we establish the terminology we use to discuss polyhedra, neural networks (NNs) and their verification.

\paragraph{Notation}
We use lower case Latin or Greek letters $a, b, x, \dots, \lambda, \dots$ for scalars, bold for vectors $\bm{a}$, capitalized bold for matrices $\bm{A}$, and calligraphic $\bc{A}$ or blackboard bold $\mathbb{A}$ for sets. Similarly, we denote scalar functions as $f \colon \R^d \rightarrow \R$ and vector valued functions bold as $\bm{f} \colon \R^{d} \rightarrow \R^k$.

\paragraph{Neural networks}
We focus our discussion on networks $\bm{h}(\bm{x}) \colon \bc{X} \rightarrow \R^{|\bc{Y}|}$ that map input samples (images) $\bm{x} \in \bc{X}$ to numerical scores $\bm{y} \in \R^{|\bc{Y}|}$. For a classification task, the network $\bm{h}$ classifies an input $\bm{x}$ by applying argmax to its output: $c(\bm{x})=\argmax_{j} \bm{h}(\bm{x})_j$.
While our methods can refine the abstraction of activation functions in arbitrary neural architectures \cite{Lirpa:20}, for simplicity, we discuss a feedforward architecture which is an interleaved composition of affine functions $\bm{g}(\bm{x})=\bm{W}\bm{x}+\bm{b}$, such as normalization, linear, convolutional, or average pooling layers, with non-linear activation layers $\bm{f}(\bm{x})$ such as ReLU, Tanh, Sigmoid, or MaxPool: 
\begin{equation*} 
\bm{h}(\bm{x}) = \bm{g}_L \circ \bm{f}_L \circ \bm{g}_{L-1} \circ ... \circ \bm{f}_1 \circ \bm{g}_0(\bm{x}) .
\end{equation*}
\subsection{Neural Network Verification}
\tool is an optimization-based verification approach and supports any safety specification (pre- and post-condition) which can be expressed as a convex polyhedron. Examples of such specifications include but are not limited to individual fairness \cite{ruoss2020learning}, global safety properties \cite{katz2017reluplex}, acoustic  \cite{ryou2020fast}, geometric \cite{balunovic2019certifying}, spatial \cite{ruoss2020efficient}, and $\ell_p$-norm bounded perturbations \cite{gehr2018ai2}.

At its core, \tool is based on accumulating linear constraints encoding the whole network for a given (convex) pre-condition, defining a linear optimization objective representing the property to be verified, and finally using an LP solver to derive a bound on this objective. If this bound satisfies a predetermined threshold (that depends on the property), the property is verified.

While all affine layers (e.g., linear, convolutional, and normalization layers) can be encoded exactly using linear constraints, non-linearities have to be over-approximated via constraints in their input-output space. That is, for an activation layer $\bm{f} \colon \R^{n} \rightarrow \R^d$ and a given set of inputs $\bc{P}_\text{in} \subseteq \R^{n}$, we need to derive \emph{sound} output constraints, that represent a set $\bc{P}_\text{in-out} \subseteq \R^{d+n}$ which includes all possible input-output pairs that can be obtained by applying $\bm{f}$ to the inputs in $\bc{P}_\text{in}$.

We show an over-approximation for a \emph{single} ReLU in \Figref{fig:conv_barrier_ReLU}. In the concrete, the ReLU maps input $x$ to $y = \max(0,x)$. If the bounds $0 > l_x\leq x\leq u_x > 0$ are known, the best convex approximation is given by the blue triangle. In this work we present novel methods to compute tighter shapes by considering \emph{multiple neurons jointly} in a higher dimensional space.

\subsection{Overview of Convex Polyhedra}\label{sec:poly}
We now introduce the necessary background on polyhedra. A polyhedron can be represented as the convex hull of its extremal points, called the vertex- or \vrep, or as the subspace satisfying a set of linear constraints, called the halfspace constraint or \hrep. Simultaneously maintaining both representations of the same polyhedron is called double description.

\paragraph{Vertex representation}
A polyhedron $\bc{P} \subseteq \R^d$ is the closed convex hull of a set of generators called vertices \mbox{$\bc{R} = \{x_i \in \R^d\}$}:
\begin{equation*}
	\bc{P} = \bc{P}(\bc{R}) = \biggl\{\sum_{i} \lambda_i \bm{x}_i \, |\, \bm{x}_i \in \bc{R},  \; \sum_i \lambda_i = 1,\; \lambda_i \in \R^+_0 \biggr\},
\end{equation*}
where $\R^+_0$ are the positive real numbers including $0$. A polyhedral cone $\bc{P} \subseteq \R^d$ is the positive linear span of a set of generators called rays $\bc{R} = \{x_i \in \R^d\}$ and always includes the origin:
\begin{equation*}
	\bc{P} = \bc{P}(\bc{R}) = \biggl\{\sum_{i} \lambda_i \bm{x}_i \, | \, \bm{x}_i \in \bc{R}, \; \lambda_i \in \R^+_0 \biggr\}.
\end{equation*}
\paragraph{Halfspace representation}
Alternatively, a polyhedron can be described as the set $\bc{P} \subseteq \R^d$ satisfying a system of linear inequalities (or constraints) defined by $\bm{A} \in \R ^{m \times d}$ and $\bm{b} \in \R^{m}$:
\begin{equation*}
	\bc{P} = \bc{P}(\bm{A},\bm{b}) \equiv \{\bm{x} \in \R^d \,|\, \bm{A} \bm{x} \geq \bm{b}\}.
	\label{eqn:polytope}
\end{equation*}
Geometrically, $\bc{P}$ is the intersection of $m$ closed affine halfspaces $\bc{H}_i = \{x \in \R^d \mid \bm{a}_i \bm{x} \geq b_i\}$ with $\bm{a}_i \in \R ^{d}$ and $b_i \in \R$.
For a polyhedral cone we have $\bm{b} = \bm{0}$. For convenience, a polyhedron $\bc{P}(\bm{A},\bm{b})$ can be equivalently described in so-called homogenized coordinates $\bm{x}'=[1,\bm{x}]$, where it can be expressed as \mbox{$\bc{P}(\bm{A}')=\{\bm{x'} \in \R^{d+1} ~|~ \bm{A}' \bm{x}' \geq 0\}$} with the new constraint matrix $\bm{A}'=[-\bm{b},\bm{A}]$.

A $k$-face $\bc{F}$ of a $d$-dimensional polyhedron is a $k$-dimensional subset $\bc{F} \subseteq \bc{P}$ satisfying $d-k$ linearly independent constraints\footnote{We call a set of constraints $\bm{a}_i\bm{x}\geq b_i$ linearly independent, if the $\bm{a}_i$ are linearly independent.} with equality. We call a $0$-face a vertex and a ($d-1$)-face a facet \cite{edelsbrunner2012algorithms}.
The rank of a ray or vertex in a $d$-dimensional polyhedron is the number of linearly independent constraints it satisfies with equality. We call a ray of rank $d-1$ and a vertex of rank $d$ extremal. A ray of rank $d-n$ can be represented as the positive combination of $n$ extremal rays and a vertex of rank $d-n$ as the convex combination of $n+1$ extremal points.

\paragraph{Double description}
Polyhedra static analysis \citep{motzkin1953double,fukuda1995double, Singh:17} usually maintains both representations ($\bc{H}$ and $\bc{V}$) in a pair $(\bm{A}',\bc{R})$, called double description. This is useful as computing the convex hull in the \vrep is trivial (union of generator sets), but computing intersections is NP-hard. Conversely, computing intersections in the \hrep is trivial (union of constraints), but computing the convex hull is NP-hard. The transformation from the $\bc{V}$- to the \hrep is called the \emph{convex hull} problem and the reverse the \emph{vertex enumeration} problem. Both are NP-hard in general.

\paragraph{Inclusion}
We define the inclusion of a polytope $\bc{Q}$ in a polytope $\bc{P}$ as: $\bc{Q} \subseteq \bc{P}$ or equivalently, $\forall \bm{x} \in \bc{Q}, \bm{x} \in \bc{P}$. In this setting, we say $\bc{P}$ over-approximates $\bc{Q}$ and $\bc{Q}$ under-approximates $\bc{P}$.

	\section{Overview of \tool} \label{sec:overview}

We now present an overview of \tool, our framework for faster and more precise verification of neural networks with arbitrary, bounded, multivariate, non-linear activations. We provide a complete formal description of its main components \PDDM and \SBLM in Sections~\ref{sec:PDDM} and \ref{sec:FastPoly}, and of \tool in Section~\ref{sec:framework}. In our explanations, we follow the setup outlined in~\Secref{sec:intro}: an activation layer consisting of $n$ neurons representing non-linear activations $f(x)$ (e.g., ReLU, Tanh, Sigmoid).

\paragraph{Computing a convex approximation of a whole layer}
Conceptually, given an $n$-dimensional polytope $\bc{S}$ constraining the input to the activation layer, \tool computes a set of multi-neuron constraints, forming a convex over-approximation of this layer, as follows:
\begin{enumerate}
\item {\em Group decomposition:} Decompose the set of $n$ activations in the layer into overlapping groups (subsets) of size $k$.
\item {\em Octahedral projection:} For each such group $i$, compute an octahedral over-approximation $\bc{P}^i$ of the projection of $\bc{S}$ to the input-space of group $i$.
\item {\em \SBLMl (\SBLM):} Then, for each polytope $\bc{P}^i$, compute a joint convex over-approximation $\bc{K}^i$ of the group output in the \hrep using our novel \SBLM method. This method decomposes the problem into lower dimensions and leverages our novel \PDDMl (\PDDM) with polynomial complexity to compute fast and scalable convex hull approximations. Both \SBLM and \PDDM are also key to making \tool applicable to non-piecewise-linear activations.
\item {\em Combine constraints:} Finally, take the intersection of all output constraints $\bc{K}^i$ (a union of all constraints) to obtain an over-approximation of the entire layer output.
\end{enumerate}

Verification is performed by solving an LP problem which combines the generated multi-neuron constraints with an LP encoding of the whole network (evaluated in~\Secref{sec:experiments}). We now explain the basic workings of each step and illustrate the key concepts on a running example.

\paragraph{Group decomposition}
Computing convex hulls for large sets of activations (e.g., a whole layer) is infeasible. Thus, we consider groups of size $k$, typically $k= 3$ or $4$. The key idea here is to capture dependencies between activation inputs and outputs ignored by neuron-wise approximations and thus achieve tighter approximations. The tightness increases with the number of groups and, importantly, the degree of overlap between them. Considering all possible $\binom{n}{k}$ groups for every layer is too expensive; thus we define the parameters partition size $n_s$ and group overlap $s$ for tuning the cost and precision of our approximations. We first partition the activations of a layer into sets of size $n_s$ (sorting by volume of the single neuron abstraction) and then for every set\footnote{For piecewise-linear activations, typically $n_s$ is chosen large enough such that there is only one set.} 
choose a subset of all $\binom{n_s}{k}$ groups that pairwise overlap by at most $s$, $0\leq s < k$.

\begin{wrapfigure}[9]{r}{0.35 \textwidth}
	\centering
	\vspace{-3.5mm}
	\scalebox{1.0}{\begin{tikzpicture}
	\tikzset{>=latex}
	
	\def\dx{1.1}
	\def\yo{-2.15}
	
	\node (B)
	[draw=black!60, fill=black!05, rectangle, rounded corners=2pt,
	minimum width=4.4cm, minimum height=2.10cm,
	anchor=north
	] at (0, 0) {
	};
	\node [anchor=north,align=center,font=\scriptsize] at (-\dx,0.0) { Exact Input \\ Polytope $\bc{S}$};
	\node [anchor=north,align=center,font=\scriptsize] at (\dx,0.0) { (Approximate) \\ Projection $\bc{P}^i$};

	\node[anchor=south] (p0) at (-\dx, \yo) {
		\begin{tikzpicture}[scale=0.3]

	  		\coordinate (O) at (0, 0, 0);
			\coordinate (tc) at (2, 0, 0);
			\coordinate (bc) at (-2, 0, 0);
			\coordinate (mn) at (0, -2, 0);
			\coordinate (me) at (0, 0, 2);
			\coordinate (ms) at (0, 1.2, 0);
			\coordinate (ts) at (-1, 1, 0);
			\coordinate (mw) at (0, 0, -2);
			
			\coordinate (xp) at (2.5, 0, 0);
			\coordinate (xn) at (-2.5, 0, 0);
			\coordinate (yp) at (0, 2.0, 0);
			\coordinate (yn) at (0, -2.2, 0);
			\coordinate (zp) at (0, 0, 3.8);
			\coordinate (zn) at (0, 0, -2.5);

			\draw[->,opacity=0.7] (xn) -- (xp) ;
			\draw[->,opacity=0.7] (yn) -- (yp) ;
			\draw[->,opacity=0.7] (zn) -- (zp) ;
			
			\draw[fill=green!90, opacity=0.2] (tc) -- (mn) -- (me) -- cycle;
			\draw[fill=green!90, opacity=0.2] (tc) -- (mn) -- (mw) -- cycle;
			\draw[fill=green!90, opacity=0.2] (tc) -- (ms) -- (me) -- cycle;
			\draw[fill=green!90, opacity=0.2] (tc) -- (ms) -- (mw) -- cycle;
			\draw[fill=green!90, opacity=0.2] (bc) -- (ts) -- (me) -- cycle;
			\draw[fill=green!90, opacity=0.2] (bc) -- (ts) -- (mw) -- cycle;
			\draw[fill=green!90, opacity=0.2] (bc) -- (mn) -- (me) -- cycle;
			\draw[fill=green!90, opacity=0.2] (bc) -- (mn) -- (mw) -- cycle;
			\draw[fill=green!90, opacity=0.2] (ts) -- (ms) -- (me) -- cycle;
			\draw[fill=green!90, opacity=0.2] (ts) -- (ms) -- (mw) -- cycle;
			
			\draw[opacity=0.5] (mn) -- (tc); 
			\draw[opacity=0.5] (mn) -- (bc); 
			\draw[opacity=0.5] (mn) -- (me); 
			\draw[opacity=0.5] (me) -- (ms); 
			\draw[opacity=0.5] (me) -- (ts);
			\draw[opacity=0.5] (me) -- (tc);
			\draw[opacity=0.5] (me) -- (bc);
			\draw[opacity=0.5] (tc) -- (ms);
			\draw[opacity=0.5] (bc) -- (ts);
			\draw[opacity=0.5] (ts) -- (ms);

			\node [font=\scriptsize, align=center, scale=0.8] at (2.5,-0.1,0) {$x_1$};
			\node [font=\scriptsize, align=center, scale=0.8] at (0.6,1.5,0) {$x_2$};
			\node [font=\scriptsize, align=center, scale=0.8] at (-0.8,-0.6,3) {$x_3$};

		\end{tikzpicture}
	};
	
	\node[anchor=south] (p1) at (\dx, \yo) {
		\begin{tikzpicture}[scale=0.3]

			\coordinate (tc) at (2, 0, 0);
			\coordinate (ms) at (0.8, 1.2, 0);
			\coordinate (bs) at (-0.8, 1.2, 0);
			\coordinate (bc) at (-2, 0, 0);
			\coordinate (mn) at (0, -2, 0);

			\coordinate (ms1) at (0, 1.2, 0);
			\coordinate (ts1) at (-1, 1, 0);

			\draw[fill=blue!90, opacity=0.3] (tc) -- (ms) -- (bs) -- (bc) -- (mn) -- cycle;
			\draw[fill=green!90, opacity=0.4] (tc) -- (ms1) -- (ts1) -- (bc) -- (mn) -- cycle;
			
			\coordinate (xp) at (2.5, 0, 0);
			\coordinate (xn) at (-2.5, 0, 0);
			\coordinate (yp) at (0, 2.0, 0);
			\coordinate (yn) at (0, -2.2, 0);
			
			\draw[->,opacity=0.7] (xn) -- (xp) ;
			\draw[->,opacity=0.7] (yn) -- (yp) ;

			\node [font=\scriptsize, align=center, scale=0.8] at (2.5,-0.1,0) {$x_1$};
			\node [font=\scriptsize, align=center, scale=0.8] at (0.6,1.5,0) {$x_2$};
			
		\end{tikzpicture}
	};

	\draw[->] ($(p1.west)+(-0.3,-0.06)$) -- ($(p1.west)+(0,-0.06)$);

\end{tikzpicture}}
	\vspace{-2mm}
	\caption{Exact projection of $\bc{S} \in \R^3$ (left) to $k=2$ variables (green) and its octahedral over-approximation $\bc{P}^i$ (blue).
	}
	\label{fig:overview_poly_input}
	\vspace{-3mm}
\end{wrapfigure}
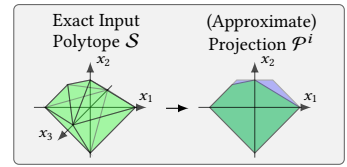
\paragraph{Octahedral projection}
Projecting the layer-wise input polytope $\bc{S}$ onto the input dimensions of every group is generally intractable due to the high dimensionality and large number of constraints. Therefore, we follow the idea of \cite{singh2019beyond} and over-approximate the projection. Empirically we find that multidimensional octahedra \cite{clariso2007octahedron}, yielding $3^k-1$ input constraints per group of $k$ neurons, provide a good trade-off between accuracy and complexity. Such a projection is illustrated in \Figref{fig:overview_poly_input} for a layer of $n=3$ neurons and $k=2$.

\subsection{\SBLMl}

The next and most demanding step takes a $k$-dimensional input polytope for a given $k$-activation group, and computes a $2k$-dimensional convex over-approximation of the output of the corresponding $k$ activations. We introduce a new technique, called \SBLMl, and illustrate its workings in~\Figref{fig:overview} on an example. We assume ReLU activations, group size \mbox{$k=2$}, and an octahedral input polytope ${\bc{P}^i}$ (left panel in \Figref{fig:overview}) described by
\begin{align*}
	{\bc{P}^i}=\{ &x_1 + x_2 \geq -2, \; - x_1 + x_2 \geq -2,   \;	x_1 - x_2 \geq -2, - x_1 - x_2 \geq -2,\; - x_2 \geq -1.2\}.
\end{align*}
\begin{figure*}[t]
	\centering
	\scalebox{0.8}{\input{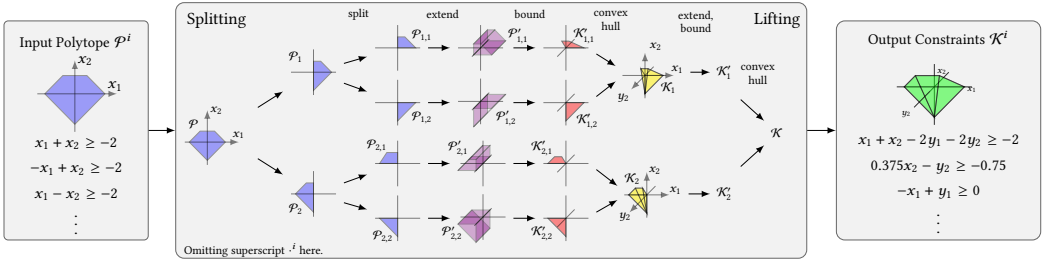}}
	\vspace{-6mm}
	\caption{Illustration of the \SBLMl for a group of $k=2$ neurons and a ReLU activation.	}
	\label{fig:overview}
	\vspace{-1mm}
\end{figure*}
Our method has three main components explained next.

\paragraph{Split the input polytope}
We first split ${\bc{P}^i}$ into regions, which we call quadrants, for which tight or even exact, linear bounds of the activation functions are available. Choosing the right splits is essential for ensuring tight approximations.
For piecewise-linear activation functions (like ReLU), splitting into their linear regions even yields exact bounds in every quadrant, leading to the tightest approximations.
For our example with ReLU activations, this corresponds to splitting along hyperplanes where the input variables $x_1$ and $x_2$ are $0$. We (randomly) choose the ordering $\{y_1,y_2\}$ of output variables and split $\bc{P}^i$ (in the following we omit the superscript $i$) along the corresponding hyperplanes. That is, we first intersect $\bc{P}$ with the halfspaces $\{\bm{x} \in\R^2 \,|\, x_1 \geq 0 \}$ and $\{\bm{x} \in\R^2 \,|\, x_1 \leq 0 \}$, obtaining $\bc{P}_1$ and $\bc{P}_2$, and then $\bc{P}_1$ and $\bc{P}_2$ with \mbox{$\{\bm{x} \in\R^2 \,|\, x_2 \geq 0 \}$} and $\{\bm{x} \in\R^2 \,|\, x_2 \leq 0 \}$. These intersections generate a tree of polytopes visualized in the first three columns in the central panel of~\Figref{fig:overview} with the quadrants as leafs (third column). For brevity, we only follow the bottom half. There, the two quadrants $\bc{P}_{2,1}$ and $\bc{P}_{2,2}$ are described by
\begin{align*}
	&{\bc{P}_{2,1}}=\{
	x_1  - x_2\geq -2,\;
	-x_1 \geq 0,\;
	-\,x_2 \geq -1.2,\;
	x_2 \geq 0\},\\
	&{\bc{P}_{2,2}}=\{
	x_1 + x_2 \geq -2,\;
	-x_1 \geq 0,\;
	-x_2 \geq 0\}.
\end{align*}

In the second part of the algorithm, we lift these quadrants step-by-step from the space of only their inputs to the space of both their inputs and outputs. We will now describe one step of lifting consisting of extending, bounding and computing a convex hull.

\paragraph{Extend and bound the quadrants} We extend\footnote{Extending a $d$-dimensional polytope by a variable defines it in the $d+1$-dimensional space, where it is (initially) unbounded in the dimension of the added variable.} the quadrants one output variable at a time, which, as we will see later, enables significant gains in speed while reducing the approximation error. In our example, we first trivially extend all quadrants from the $(x_1,x_2)$-space to the $(y_2,x_1,x_2)$-space (fourth column in~\Figref{fig:overview}).
Next, we bound the quadrants in the added dimension using the linear bounds (parametrically defined, see Section~\ref{sec:FastPoly}) corresponding to applying (an approximation of) the activation in the quadrant. Here, $y_2 \leq x_2$ and $y_2 \geq x_2$ for the quadrant $\bc{P}_{2,1}$ (since $x_2\geq0$) and $y_2 \leq 0$ and $y_2 \geq 0$ for the quadrant $\bc{P}_{2,2}$ (since $x_2\leq 0$). Note that in this case the bounds we apply on every quadrant are exact, yielding the two polytopes (fifth column) with 0 volume in their $3d$-space (in general, the bounds need not be exact):
\vspace{-0.5mm}
\begin{align*}
	{\bc{K}_{2,1}'}=\{
	& x_1-x_2 \geq -2,\;
	-x_1 \geq 0,\;
	-x_2 \geq -1.2,\;
	x_2 \geq 0, x_2 -y_2 \geq 0,\;
	-x_2 + y_2\geq 0\},\\
	{\bc{K}_{2,2}'}=\{&
	x_1+ x_2 \geq -2,\;
	-x_1 \geq 0,\;
	- x_2 \geq  0,\;
	-y_2\geq 0,\;
	y_2\geq 0\}.
\end{align*}

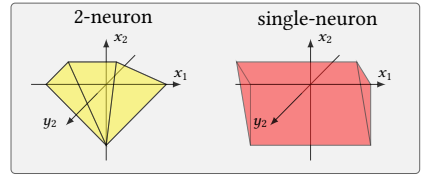
\begin{wrapfigure}[10]{r}{0.41\textwidth}
	\centering
	\vspace{-3.5mm}
	\scalebox{0.8}{\begin{tikzpicture}
	\tikzset{>=latex}
	
	\def\dx{1.7}
	\def\yo{-1.5}
	
	\node
	[draw=black!60, fill=black!05, rectangle, rounded corners=2pt,
	minimum width=6.8cm, minimum height=2.8cm,
	anchor=north
	] at (0, 0) {
	};
	\node [anchor=north,align=center,] at (-\dx,0.0) { 2-neuron};
	\node [anchor=north,align=center,] at (\dx,0.0) { single-neuron};

	\node (c2l) at (-\dx, \yo) {
		\begin{tikzpicture}[scale=0.5]
			
			\coordinate (O) at (0, 0, 0);
			\coordinate (tc) at (2, 0, 0);
			\coordinate (bc) at (-2, 0, 0);
			\coordinate (mn) at (0, -2, 0);
			\coordinate (me) at (0, 0, 2);
			\coordinate (mse) at (0.8, 1.2, 1.2);
			\coordinate (mw) at (0, 0, -2);
			\coordinate (bse) at (-0.8, 1.2,1.2);

			\coordinate (xp) at (2.5, 0, 0);
			\coordinate (xn) at (-2.5, 0, 0);
			\coordinate (yp) at (0, 1.5, 0);
			\coordinate (yn) at (0, -2.5, 0);
			\coordinate (zp) at (0, 0, 3.5);
			\coordinate (zn) at (0, 0, -2.5);
			\coordinate (zm) at (0, 0, {2*1.2/3.2});

			\draw[-,opacity=0.8] (zn) -- (zm) ;
			\draw[->,opacity=0.8] (xn) -- (xp) ;
			\draw[->,opacity=0.8] (yn) -- (yp) ;
			
			\draw[fill=yellow!90, opacity=0.3] (bc) -- (mn) -- (tc) -- cycle;
			\draw[fill=yellow!90, opacity=0.3] (bc) -- (bse) -- (mse) -- (tc) -- cycle;
			\draw[fill=yellow!90, opacity=0.3] (bc) -- (bse) -- (mn) -- cycle;
			\draw[fill=yellow!90, opacity=0.3] (mse) -- (bse) -- (mn) -- cycle;
			\draw[fill=yellow!90, opacity=0.3] (tc) -- (mse) -- (mn) -- cycle;
			
			\draw[opacity=0.5] (tc) -- (mn);  
			\draw[opacity=0.5] (tc) -- (mse);
			\draw[opacity=0.5] (bc) -- (mn);  
			\draw[opacity=0.5] (bc) -- (bse);
			\draw[opacity=0.5] (bse) -- (mse);  
			\draw[opacity=0.5] (mn) -- (mse);
			\draw[opacity=0.5] (mn) -- (bse);

		\draw[->,opacity=0.8] (zm) -- (zp) ;
			
		\node [font=\scriptsize, align=center, scale=1.] at (2.5,0.3,0) {$x_1$};
		\node [font=\scriptsize, align=center, scale=1.] at (0.5,1.5,0) {$x_2$};
		\node [font=\scriptsize, align=center, scale=1.] at (-0.3,0.3,4.1) {$y_2$};
			
		\end{tikzpicture}
	};

	\node (c2l) at (\dx, \yo) {
	\begin{tikzpicture}[scale=0.5]
		
		\coordinate (O) at (2, 0, 0);
		\coordinate (bc) at (-2, 0, 0);
		\coordinate (mn) at (-2, -2, 0);
		\coordinate (mse) at (2, 1.2, 1.2);
		\coordinate (mw) at (2, -2, 0);
		\coordinate (bse) at (-2, 1.2,1.2);
		
		\coordinate (xp) at (2.5, 0, 0);
		\coordinate (xn) at (-2.5, 0, 0);
		\coordinate (yp) at (0, 1.5, 0);
		\coordinate (yn) at (0, -2.5, 0);
		\coordinate (zp) at (0, 0, 3.5);
		\coordinate (zm) at (0, 0, {2*1.2/3.2});
		\coordinate (zn) at (0, 0, -2.5);
		
		\draw[->,opacity=0.8] (xn) -- (xp) ;
		\draw[->,opacity=0.8] (yn) -- (yp) ;
		\draw[-,opacity=0.8] (zn) -- (zm) ;

		\draw[fill=red!90, opacity=0.3] (bc) -- (bse) -- (mn) -- cycle;
		\draw[fill=red!90, opacity=0.3] (bc) -- (mn) -- (mw) -- (O) -- cycle;
		\draw[fill=red!90, opacity=0.3] (bc) -- (bse) -- (mse) -- (O) -- cycle;
		\draw[fill=red!90, opacity=0.3] (mse) -- (bse) -- (mn) -- (mw) -- cycle;
		\draw[fill=red!90, opacity=0.3] (O) -- (mse) -- (mw) -- cycle;

		\draw[->,opacity=0.8] (zm) -- (zp) ;
		
		\node [font=\scriptsize, align=center, scale=1.] at (2.5,0.3,0) {$x_1$};
\node [font=\scriptsize, align=center, scale=1.] at (0.5,1.5,0) {$x_2$};
\node [font=\scriptsize, align=center, scale=1.] at (-0.1,0.3,4.1) {$y_2$};

	\end{tikzpicture}
};
	
\end{tikzpicture}}
	\vspace{-1mm}
	\caption{Comparison of 2-neuron and 1-neuron constraints projected into $y_2$-$x_1$-$x_2$-space for a ReLU activation, given input polytope $\bc{P}^i$.}
	\label{fig:precision_comp}
\end{wrapfigure}
\paragraph{Approximate convex hull}
Next, we compute the convex hull of $\bc{K}_{2,1}'$ and $\bc{K}_{2,2}'$. Instead of using an exact method, we utilize our \PDDM to compute precise over-approximations, leveraging the concept of duality, ideas from computational geometry and our novel \PDD polyhedron representation (explained below and in more detail in \Secref{sec:PDDM}). Note that because the considered quadrants are only extended one variable at a time, the computation takes place in $3d$ despite the group-output being in the $4d$ $(y_1,y_2,x_1,x_2)$-space. This yields two main benefits: (i) precision -- directly computing $2k$-dimensional convex hulls with \PDDM will lose more precision than our decomposed method, because \PDDM is exact for polytopes of dimension up to three and loses precision only slowly for higher dimensions, and (ii) speed -- a lower-dimensional polytope with fewer constraints and generally also fewer vertices significantly reduces the time required for the individual convex hull computations.

Importantly, our approximate method scales quartically as $\bc{O}\{n_a^4 \cdot n_v + n_a^2 \log(n_a^2)\}$ in the number of input constraints $n_a$ and linear in the number of vertices $n_v$ (see Theorem~\ref{lem:complexity}) while optimal exact methods are in $\bc{O}(n_v\log(n_v)+n_v^{\lfloor d/2 \rfloor})$ \cite{chazelle1993optimal}, i.e., exponential in the number of dimensions and superlinear in the number of input vertices. 

Note that for non-piecewise-linear functions (e.g., Tanh or Sigmoid), the number of vertices doubles when extending by a dimension. This makes exact methods intractable and approximate methods not using the decompositional \SBLM approach (that is, extending by all dimensions at the same time) slow (see our evaluation in Section~\ref{sec:experiments}).

We now obtain the convex hull (sixth column) of the two polytopes $\bc{K}_{2,1}'$ and $\bc{K}_{2,2}'$ which is exact in our $3d$ case:
\begin{align*}%
	\bc{K}_{2} = \{&x_1+x_2-2y_2\geq -2,
	-x_1 \geq 0,\;
	0.375  x_2-y_2\geq -0.75, -x_2 +y_2\geq 0,\;
	y_2 \geq 0\}.
\end{align*}

We compute $\bc{K}_{1}$ analogously, thus completing the first step of lifting. The next and in this case final step of lifting starts with extending $\bc{K}_{1}$ and $\bc{K}_{2}$ by $y_1$ into the $(y_1,y_2,x_1,x_2)$-space, where we apply bounds on $y_1$ yielding (in $4d$ and thus not illustrated as figure)
\begin{align*}%
	\mathbf{\bc{K}_{1}'} = \{&-x_1+x_2-2y_2\geq -2,
	x_1 \geq 0,\;
	0.375 x_2-y_2\geq -0.75,\\
	&-x_2 +y_2\geq 0,\;
	y_2 \geq 0,\;
	x_1-y_1 \geq 0,\;
	-x_1+y_1 \geq 0\},\\
	\mathbf{\bc{K}_{2}'} = \{&x_1+x_2-2y_2\geq -2,
	-x_1 \geq 0,\;
	0.375  x_2-y_2\geq -0.75,\\
	&-x_2 +y_2\geq 0,\;
	y_2 \geq 0,\;
	-y_1 \geq 0,\;
	y_1 \geq 0\}.
\end{align*}
Completing the second and final step of lifting by computing their convex hull yields the final tight 2-neuron constraints:
\begin{align*}%
	\mathbf{\bc{K}} = \{& x_1+ x_2 - 2 y_1 -2 y_2 \geq -2,\;
	0.375 x_2 - y_2 \geq -0.75,\\
	&-x_1 + y_1 \geq 0,\;
	-x_2+ y_2 \geq 0,\;
	y_1 \geq 0,\;
	y_2 \geq 0\}.
\end{align*}
Naturally, the region $\bc{K}$ is tighter than the tightest single-neuron approximations (triangle relaxation, discussed earlier). We illustrate this point by comparing their projections into the $(y_2,x_1,x_2)$-space in \Figref{fig:precision_comp}.

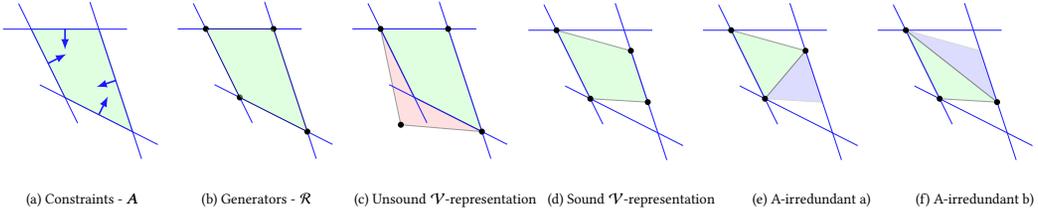
\begin{figure}[t]
	\centering
	\scalebox{0.75}{\begin{tikzpicture}
  \tikzset{>=latex}

  \def\xw{-0.5}
  \def\xlw{-2.1}
  \def\dx{3.1}
  \def\xle{2.5}
  \def\yn{0}
  \def\ys{-4}
  
  \node (Constraints) at (\xw + 0 * \dx, \yn) {
  	\begin{tikzpicture}[scale=0.6]
  		
  		\coordinate (ne) at (2, 0);
  		\coordinate (nw) at (0, 0);
  		\coordinate (se) at (3, -3);
  		\coordinate (sw) at (1, -2);

		\coordinate (n) at ($(ne)!0.5!(nw)$);
		\coordinate (nc) at ($(n)!0.6cm!90:(nw)$);
		\coordinate (w) at ($(nw)!0.5!(sw)$);
		\coordinate (wc) at ($(w)!0.6cm!90:(sw)$);
		\coordinate (s) at ($(sw)!0.5!(se)$);
		\coordinate (sc) at ($(s)!0.6cm!90:(se)$);
		\coordinate (e) at ($(se)!0.5!(ne)$);
		\coordinate (ec) at ($(e)!0.6cm!90:(ne)$);
		
		\draw[fill=green!30,opacity=0.4] (ne) -- (se) -- (sw) -- (nw) -- cycle;	
  		\draw[-,blue!90, shorten >= -0.5cm, shorten <= -0.5cm] (ne) -- (se);
  		\draw[-,blue!90, shorten >= -0.5cm, shorten <= -0.5cm] (nw) -- (sw);
  		\draw[-,blue!90, shorten >= -0.5cm, shorten <= -0.5cm] (nw) -- (ne);
  		\draw[-,blue!90, shorten >= -0.5cm, shorten <= -0.5cm] (sw) -- (se);

  		\draw[->,blue!90, thick] (n)--(nc);
  		\draw[->,blue!90, thick] (s)--(sc);
  		\draw[->,blue!90, thick] (e)--(ec);
  		\draw[->,blue!90, thick] (w)--(wc);
  	\end{tikzpicture}
  };
  \node [scale=1.05,align=center, anchor=north] at (\xw + 0 * \dx, {(0.5*\ys)+0.15}) {\scriptsize (a) Constraints - $\bm{A}$};

  \node (Vertex) at (\xw + 1 * \dx, \yn) {
	\begin{tikzpicture}[scale=0.6]
		
  		\coordinate (ne) at (2, 0);
		\coordinate (nw) at (0, 0);
		\coordinate (se) at (3, -3);
		\coordinate (sw) at (1, -2);
		
  		\draw[-,blue!90, shorten >= -0.5cm, shorten <= -0.5cm] (ne) -- (se);
		\draw[-,blue!90, shorten >= -0.5cm, shorten <= -0.5cm] (nw) -- (sw);
		\draw[-,blue!90, shorten >= -0.5cm, shorten <= -0.5cm] (nw) -- (ne);
		\draw[-,blue!90, shorten >= -0.5cm, shorten <= -0.5cm] (sw) -- (se);
		\node[circle, fill=black, minimum size=3pt,inner sep=0pt, outer sep=0pt] at (ne) {};
		\node[circle, fill=black, minimum size=3pt,inner sep=0pt, outer sep=0pt] at (nw) {};
		\node[circle, fill=black, minimum size=3pt,inner sep=0pt, outer sep=0pt] at (se) {};
		\node[circle, fill=black, minimum size=3pt,inner sep=0pt, outer sep=0pt] at (sw) {};
		
		\draw[fill=green!30,opacity=0.4] (ne) -- (se) -- (sw) -- (nw) -- cycle;	
	\end{tikzpicture}
	};
  \node [scale=1.05,align=center, anchor=north] at (\xw + 1 * \dx, {(0.5*\ys)+0.15}) {\scriptsize (b) Generators - $\bc{R}$};
  
    \node (Vertex) at (\xw + 2 * \dx, \yn) {
  	\begin{tikzpicture}[scale=0.6]
  		
  		\coordinate (ne) at (2, 0);
  		\coordinate (nw) at (0, 0);
  		\coordinate (se) at (3, -3);
  		\coordinate (sw) at (1, -2);
  		
  		\path (ne) -- (se) coordinate[pos=1] (pe);
  		\path (nw) -- (sw) coordinate[pos=1] (pw);
  		\path (ne) -- (pw) coordinate[pos=1.4] (pww);
  		
  		\draw[fill=green!30,opacity=0.4] (ne) -- (pe) -- (pw) -- (nw) -- cycle;
  		\draw[fill=red!30,opacity=0.4] (nw) -- (pw) -- (pe) -- (pww) -- cycle;

  		\draw[-,blue!90, shorten >= -0.5cm, shorten <= -0.5cm] (ne) -- (se);
  		\draw[-,blue!90, shorten >= -0.5cm, shorten <= -0.5cm] (nw) -- (sw);
  		\draw[-,blue!90, shorten >= -0.5cm, shorten <= -0.5cm] (nw) -- (ne);
  		\draw[-,blue!90, shorten >= -0.5cm, shorten <= -0.5cm] (sw) -- (se);
  		\node[circle, fill=black, minimum size=3pt,inner sep=0pt, outer sep=0pt] at (ne) {};
  		\node[circle, fill=black, minimum size=3pt,inner sep=0pt, outer sep=0pt] at (nw) {};
  		\node[circle, fill=black, minimum size=3pt,inner sep=0pt, outer sep=0pt] at (pe) {};
  		\node[circle, fill=black, minimum size=3pt,inner sep=0pt, outer sep=0pt] at (pww) {};		
  	\end{tikzpicture}
  };
  
  \node [scale=1.05,align=center, anchor=north] at (\xw + 2 * \dx+0.25, {(0.5*\ys)+0.15}) {\scriptsize (c) Unsound  \vrep};

  \node (Vertex) at (\xw + 3 * \dx, \yn) {
	\begin{tikzpicture}[scale=0.6]
		
  		\coordinate (ne) at (2, 0);
		\coordinate (nw) at (0, 0);
		\coordinate (se) at (3, -3);
		\coordinate (sw) at (1, -2);
		
		\path (ne) -- (se) coordinate[pos=0.2] (pe);
		\path (ne) -- (se) coordinate[pos=0.7] (ppe);
		\path (nw) -- (sw) coordinate[pos=1.0] (pw);
		
		\draw[fill=green!30,opacity=0.4] (pe) -- (ppe) -- (pw) -- (nw) -- cycle;	
		
		\draw[-,blue!90, shorten >= -0.5cm, shorten <= -0.5cm] (ne) -- (se);
		\draw[-,blue!90, shorten >= -0.5cm, shorten <= -0.5cm] (sw) -- (se);
		\draw[-,blue!90, shorten >= -0.5cm, shorten <= -0.5cm] (nw) -- (sw);
		\draw[-,blue!90, shorten >= -0.5cm, shorten <= -0.5cm] (nw) -- (ne);
		\node[circle, fill=black, minimum size=3pt,inner sep=0pt, outer sep=0pt] at (ppe) {};
		\node[circle, fill=black, minimum size=3pt,inner sep=0pt, outer sep=0pt] at (nw) {};
		\node[circle, fill=black, minimum size=3pt,inner sep=0pt, outer sep=0pt] at (pe) {};
		\node[circle, fill=black, minimum size=3pt,inner sep=0pt, outer sep=0pt] at (pw) {};

	\end{tikzpicture}
	};

  \node [scale=1.05,align=center, anchor=north] at (\xw + 3 * \dx +0.45, {(0.5*\ys)+0.15}) {\scriptsize (d) Sound \vrep};

   \node (Vertex) at (\xw + 4 * \dx, \yn) {
 	\begin{tikzpicture}[scale=0.6]
 		
 		\coordinate (ne) at (2, 0);
 		\coordinate (nw) at (0, 0);
 		\coordinate (se) at (3, -3);
 		\coordinate (sw) at (1, -2);
 		
 		\path (ne) -- (se) coordinate[pos=0.2] (pe);
 		\path (ne) -- (se) coordinate[pos=0.7] (ppe);
 		\path (nw) -- (sw) coordinate[pos=1.0] (pw);
 		
		\draw[fill=blue!90,opacity=0.15] (ppe) -- (pw) -- (pe) -- cycle;	
 		\draw[fill=green!30,opacity=0.4] (pe) -- (pw) -- (nw) -- cycle;
 		
 		\draw[-,blue!90, shorten >= -0.5cm, shorten <= -0.5cm] (ne) -- (se);
 		\draw[-,blue!90, shorten >= -0.5cm, shorten <= -0.5cm] (sw) -- (se);
 		\draw[-,blue!90, shorten >= -0.5cm, shorten <= -0.5cm] (nw) -- (sw);
 		\draw[-,blue!90, shorten >= -0.5cm, shorten <= -0.5cm] (nw) -- (ne);
 		\node[circle, fill=black, minimum size=3pt,inner sep=0pt, outer sep=0pt] at (nw) {};
 		\node[circle, fill=black, minimum size=3pt,inner sep=0pt, outer sep=0pt] at (pe) {};
 		\node[circle, fill=black, minimum size=3pt,inner sep=0pt, outer sep=0pt] at (pw) {};

 	\end{tikzpicture}
 };
 
 	\node [scale=1.05,align=center, anchor=north] at (\xw + 4 * \dx+0.55, {(0.5*\ys)+0.15}) {\scriptsize (e) A-irredundant a)};
   \node (Vertex) at (\xw + 5 * \dx, \yn) {
 	\begin{tikzpicture}[scale=0.6]
 		
 		\coordinate (ne) at (2, 0);
 		\coordinate (nw) at (0, 0);
 		\coordinate (se) at (3, -3);
 		\coordinate (sw) at (1, -2);
 		
 		\path (ne) -- (se) coordinate[pos=0.2] (pe);
 		\path (ne) -- (se) coordinate[pos=0.7] (ppe);
 		\path (nw) -- (sw) coordinate[pos=1.0] (pw);
 		
		\draw[fill=blue!90,opacity=0.15] (ppe) -- (nw) -- (pe) -- cycle;	
 		\draw[fill=green!30,opacity=0.4] (ppe) -- (pw) -- (nw) -- cycle;	
 		
 		\draw[-,blue!90, shorten >= -0.5cm, shorten <= -0.5cm] (ne) -- (se);
 		\draw[-,blue!90, shorten >= -0.5cm, shorten <= -0.5cm] (sw) -- (se);
 		\draw[-,blue!90, shorten >= -0.5cm, shorten <= -0.5cm] (nw) -- (sw);
 		\draw[-,blue!90, shorten >= -0.5cm, shorten <= -0.5cm] (nw) -- (ne);
 		\node[circle, fill=black, minimum size=3pt,inner sep=0pt, outer sep=0pt] at (ppe) {};
 		\node[circle, fill=black, minimum size=3pt,inner sep=0pt, outer sep=0pt] at (nw) {};
 		\node[circle, fill=black, minimum size=3pt,inner sep=0pt, outer sep=0pt] at (pw) {};

 	\end{tikzpicture}
 };
 
 \node [scale=1.05,align=center, anchor=north] at (\xw + 5 * \dx+0.35, {(0.5*\ys)+0.15}) {\scriptsize (f) A-irredundant b)};
\end{tikzpicture}}
	\vspace{-1mm}
	\caption{Illustration of the Partial Double Description. Input constraints $\bm{A}$ (a), exact vertex enumeration $\bc{R}_{DD}$ (b), unsound partial vertex enumeration violating the PDD definition (c), partial or approximate vertex enumeration $\bc{R}_{PDD}$ (d), and A-irredundant versions of the partial vertex enumeration (e).
	}
	\label{fig:PDD_ill}
	\vspace{-1mm}
\end{figure}

\subsection{\PDDMl (\PDDM)}

We now introduce the new \PDDM for computing \emph{fast, precise, and sound} over-approximations of the convex hull of two polyhedra. This is in contrast to existing approximation methods, which either optimize for closer approximations \cite{bentley1982approximation,khosravani2013simple,zhong2014finding,sartipizadeh2016computing} but sacrifice the soundness required for verification, or have exponential complexity \cite{xu1998approximate}, making them too expensive for our application. %

\paragraph{Double description method}
The well-known Double Description Method (DDM) \cite{motzkin1953double,fukuda1995double} for computing the convex hull of two polyhedra in Double Description works as follows: (i) translate both polyhedra to their dual representation (explained in \Secref{sec:PDDM}), (ii) intersect them in dual space by adding the constraints of one to the other, one-at-a-time, computing full Double Descriptions at every intermediate step, and (iii) translate the result back to primal space. Every step of adding an additional constraint generates quadratically many new vertices, leading to an overall increase exponential in the number of constraints (in dual space).

\paragraph{Partial double description}
We introduce the Partial Double Description (PDD), which guarantees soundness and also allows an approximate much cheaper intersection in dual space. We combine an exact \hrep, as their intersection is trivial, with an under-approximating\footnote{An under-approximation in dual space corresponds to an over-approximation in primal space, due to inclusion reversion.} \vrep, as their exact intersection carries exponential cost. We illustrate this in \Figref{fig:PDD_ill}, where we show the constraints $\bm{A}$ describing a polytope in (a), the corresponding exact \vrep in (b), an unsound approximate \vrep in (c), and three sound ones in (d), (e), and (f). Note that this definition of the \PDD allows many different \vreps for a given \hrep (see (d), (e), and (f) in \Figref{fig:PDD_ill}) some of which are quite imprecise (see (e) and (f)).

\begin{figure}[t]
	\centering		
	\scalebox{0.62}{\input{figures/overview_PDDM_figure}}
	\vspace{-2.5mm}
	\caption{\PDDMl for a $2$-dimensional example. The input polytopes (1\St column) are translated to their dual representation (2\Nd column), then all their constraints are added to the other dual polytope (3\Rd column). The points are separated based on whether they are included in the intersection of the \hreps. Now ray-shooting is used to discover vertices on the rays between points in the intersection (blue points) to those outside (red points) by intersecting the rays with the constraints added in the previous step (4\Th column). The vertices of both \vreps are then combined (5\Th column) before A-irredundancy is enforced (6\Th column) and the result is translated back to primal space (7\Th column).
	}
	\label{fig:overview_PDDM}
	\vspace{-4mm}
\end{figure}

\paragraph{Partial double description method}
Now, we define the \PDDM to compute approximate convex hulls in \PDD leveraging two key ideas: (i) instead of intersecting in dual space by adding the constraints of one polytope to the other one-at-a-time (as per DDM), we add them all in a single step. Crucially, this leads to an \emph{overall} number of vertices at most quadratic (instead of exponential) in the number of original vertices (in dual space), and (ii) this single-step approach is asymmetric and we can greatly increase the intersection accuracy, by performing it in both directions and combining the resulting vertices. Overall, our approach yields a polynomial complexity (see Theorem \ref{lem:complexity}) algorithm for sound convex hull approximations (see Theorem \ref{lem:soundness}), guarantees exactness for low dimensions (see Theorem~\ref{lem:inter_approx_exact}), and empirically is two orders of magnitude faster for the challenging cases in our experiments (see \Figref{fig:speedup}), while losing precision only slowly as dimensionality increases (see \Figref{fig:volume}).
We illustrate the \PDDMl in \Figref{fig:overview_PDDM} and provide more technical details in~\Secref{sec:PDDM}.

\subsection{Layerwise Abstraction}

So far we have seen how to compute the multi-neuron convex approximation for a single group of $k$ activations. To compute the final abstraction of the whole activation layer, we combine the constraints forming the \hreps of the computed output polyhedra of each group, thereby obtaining the \hrep of the polytope describing the layerwise over-approximation.

	\section{The Partial Double Description Method}\label{sec:PDDM}
\begin{wrapfigure}[14]{r}{0.60\textwidth}
	\centering
	\vspace{-4.5mm}
	\scalebox{0.74}{\input{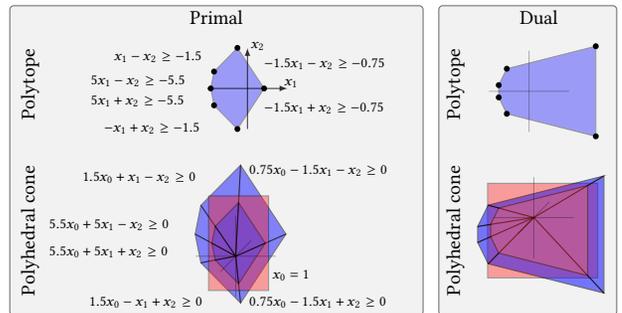}}
	\vspace{-2mm}
	\caption[]{Top: polytope in primal (left) and dual (right) space. Bottom: equivalent polyhedral cones in homogenized coordinates. In red: the plane the cone can be intersected with to recover the polytope.
	}
	\label{fig:poly_dual}
	\vspace{-3mm}
\end{wrapfigure}
In this section, we explain our \PDDM for computing convex hull approximations in greater detail. First, we introduce the needed notion of duality and our novel \PDDl (\PDD) representation for polyhedra. Then, we explain the \PDDM step by step as illustrated in~\Figref{fig:overview_PDDM}.

The \PDDM computes the convex hull of two $d$-dimensional polytopes $\bc{P}_1 = \bc{P}(\bm{A}_1,\bm{b}_1)$ and \mbox{$\bc{P}_2 = \bc{P}(\bm{A}_2,\bm{b}_2)$}, but uses the equivalent homogenized representation (see Section~\ref{sec:poly}) of $(d+1)$-dimensional cones {$\bc{P}_1'=\bc{P}(\bm{A}_1')$} and $\bc{P}_2'=\bc{P}(\bm{A}_2')$. Vertices in the original polytope now correspond to rays in the cone. In the following explanations we will use either term, depending on convenience.
The original polytope can be recovered from the cone, by intersecting it with the hyperplane $x'_0 = 1$ in primal, or with $x'_0 = -1$ in dual space (explained next) as visualized in~\Figref{fig:poly_dual}.

\paragraph{Duality}
The dual $\overline{\bc{P}}$ of a polytope $\bc{P}$ with a minimal set (containing no redundancy) of extremal vertices $\bc{R}$ enclosing the origin but not containing it in its boundary (to ensure a bounded dual) is defined as

\begin{equation}
	\overline{\bc{P}} = \{\bm{y} \in \R^d \; | \; \bm{x}^\top \bm{y} \leq 1 \; \forall \bm{x} \in \bc{P} \}
	= \bigcap_{x\in\bc{R}} \{\bm{y} \in \R^d \; | \; \bm{x}^\top \bm{y} \leq 1\},
	\label{eqn:dual_def}
\end{equation}
and for polyhedral cones $\bc{P}'$ as \cite{genov2015convex}
\begin{equation}
	\overline{\bc{P}'} = \{\bm{y}' \in \R^{d+1} \; | \; \bm{x}'^\top \bm{y}' \leq 0 \; \forall \bm{x}' \in \bc{P}' \}.
	\label{eqn:dual_def_cone}
\end{equation}
\Figref{fig:poly_dual} shows an example of the dual of a polytope.
Important for the remaining section are four properties of the transform between primal and dual.
\begin{inparaenum}[1)]
\item The dual of a polyhedron is also a polyhedron.
\item It is inclusion reversing: $\bc{P} \subset \bc{Q}$ if and only if $\overline{\bc{Q}} \subset \overline{\bc{P}}$,
\item the \vrep of the dual corresponds to the \hrep of the primal and vice versa: $\bc{P} = \bc{P}(\bm{A}',\bc{R}')$ implies $\overline{\bc{P}} = \bc{P}(\bc{R}'^\top,\bm{A}'^\top)$, where $(\cdot)^\top$ denotes transpose (note that this implies that the vertices of the primal correspond to the supporting hyperplanes of the dual and vice-versa), and
\item the dual of the dual of a polyhedron is the original primal polyhedron $\overline{\overline{\bc{P}}} = \bc{P}$.
\end{inparaenum}

\paragraph{Partial double description}
We leverage these duality properties in two ways: We translate the convex hull problem in primal space to an intersection problem in dual space (only involving a transpose given a DD or \PDD) where we compute a \vrep \emph{under-approximating} the intersection in dual space to obtain an \hrep \emph{over-approximating} the convex hull in primal space (using inclusion reversion). To compute these intersections efficiently, we introduce the \PDDl (\PDD) as a relaxation of the Double Description (DD) (Section~\ref{sec:poly}) as discussed in the overview.

Formally, the \PDD of a $(d+1)$-dimensional polyhedral cone is the pair of constraints and rays $(\bm{A}',\bc{R}')$ with $\bm{A}' \in \R ^{m \times (d+1)}$ and $\bc{R}' \in \R ^{n \times (d+1)}$ where the \vrep is an under-approximation of the \hrep or more formally, where for any row $\bm{r} \in \bc{R}'$ and constraint $\bm{a} \in \bm{A}'$, $\bm{a} \bm{r} \geq 0$ holds.

We call constraints $\bm{a}_j \in \bm{A}'$ \emph{active} for a given ray $\bm{r}_i \in \bc{R}'$, if they are fulfilled with equality, that is $\bm{a}_j\bm{r}_i=0$. We store this relationship as part of the PDD in what we call the incidence matrix \mbox{$\bc{I} \in \{0,1\}^{n \times m}$}: $\bc{I}_{i,j} = 1$ if $\bm{a}_j\bm{r}_i=0$ and $\bc{I}_{i,j} = 0$ otherwise. Further, we define the partial ordering on $\bc{I}$: $\bc{I}_i \subseteq \bc{I}_j$ iff $\bc{I}_{i,k} \leq \bc{I}_{j,k}$, $\forall$ $1\leq k\leq m$. Intuitively this corresponds to a row in the incidence matrix being only lesser than another if the set of active constraints of the associated ray is a strict subset of that of the other. Next, we describe \PDDM as illustrated in \Figref{fig:overview_PDDM}.

\subsection{Conversion to Dual} \label{sec:polyhedra_duality}

Given the two polyhedral cones $\bc{P}_1$ and $\bc{P}_2$ in \PDD representation $(\bm{A}_1',\bc{R}_1')$ and $(\bm{A}_2',\bc{R}_2')$ (1\St column in~\Figref{fig:overview_PDDM}), the first step of the \PDDM is to convert them to their dual space representations $(\bc{R}_1'^\top,\bm{A}_1'^\top)$ and $(\bc{R}_2'^\top,\bm{A}_2'^\top)$ \cite{furkuda20polyhedral} (2\Nd column).

\subsection{Intersection} \label{sec:eff_inter}
The next step in the \PDDM is the intersection in dual space (columns 3 to 5 in \Figref{fig:overview_PDDM}).
Recall that the standard approach (DDM) for the intersection of polyhedra in DD is to sequentially add the constraints of one polytope to the other, computing exact \vreps at every step. This however can increase the number of vertices quadratically in every step resulting in an exponential size of the intermediate representation. Instead, we add all constraints jointly in one step, leveraging our \PDD. In the following description of the intersection, we adopt the polytope (not cone) view and consider a general polytope $(\bm{A},\bc{R})$.

\begin{figure}[t]
	\centering
	\hspace{4mm}
	\scalebox{0.8}{\input{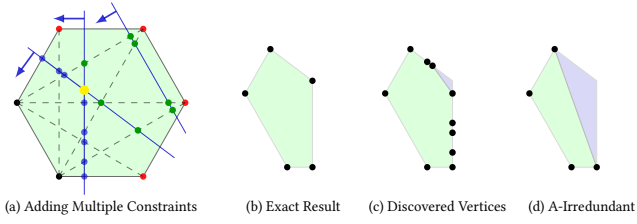}}
	\vspace{0mm}
	\caption{Adding a batch of three constraints (blue thick lines) to a polytope in \PDD. Vertices are separated into $\bc{R}'_+$ (black), $\bc{R}'_0$ (none), and $\bc{R}'_-$ (red). Ray-shooting discovers new vertices $\bc{R}'_*$ (blue), avoiding the superfluous green points, but missing an extremal vertex (yellow) (a). Exact intersection (b), result of joint constraint processing (c), and under-approximation after enforcing A-irredundancy (d).}
	\label{fig:PDD_mult_Const}
\end{figure}

\paragraph{Batch intersection}
To intersect a polytope $(\bm{A},\bc{R})$ in PDD with a batch of constraints represented by the matrix $\bm{\widetilde{A}}$ and inducing the polyhedron $\bc{P}(\bm{\widetilde{A}})$, we separate the vertices in $\bc{R}$ into three sets depending on whether they satisfy all to-be-added constraints with inequality ($\bc{R}_+$), some only with equality ($\bc{R}_0$), or violate at least one ($\bc{R}_-$). This corresponds to these points lying inside, on the boundary of, or outside of the polyhedron $\bc{P}(\bm{\widetilde{A}})$. An example is shown in \Figref{fig:PDD_mult_Const}(a): the three added constraints are shown in blue and the vertices in $\bc{R}'_+$ (black), $\bc{R}'_0$ (none), and $\bc{R}'_-$ (red).

Now we employ a technique called ray-shooting \cite{marechal2017efficient} and shoot
 a ray $\overrightarrow{\bm{r}_+\bm{r}_-}$ from a vertex $\bm{r}_+ \in \bc{R}_+$ inside the intersection $\bc{P}(\bm{A} \cap \bm{\widetilde{A}})$ to a vertex $\bm{r}_- \in \bc{R}_-$ outside the intersection. We record the first hyperplane $\bc{H} = \{ \bm{x} \in \R^d \,|\, \bm{\widetilde{a}}_i\bm{x} = 0\}$ corresponding to one of the new constraints $\bm{\widetilde{a}}_i \in \bm{\widetilde{A}}$ that intersects with the ray $\overrightarrow{\bm{r}_+\bm{r}_-}$. We add the point $\bm{r}_*$ at which $\overrightarrow{\bm{r}_+\bm{r}_-}$ intersects $\bc{H}$ to the set of discovered points $\bc{R}_*$.
Doing so for all combinations of $(\bm{r}_+,\bm{r}_-) \in \bc{R}_+ \times \bc{R}_-$ yields the set of points
\begin{equation*}
	\bc{R}_* = \{\bm{r}_* = \overrightarrow{\bm{r}_+\bm{r}_-} \cap \bc{H} \;| \, (\bm{r}_+,\bm{r}_-) \in \bc{R}'_+ \times \bc{R}'_-\}.
\end{equation*}
The \vrep of the resulting intersection is now the union $ \bc{R}_+ \cup \bc{R}_0 \cup \bc{R}_*$. In~\mbox{\Figref{fig:PDD_mult_Const} (a)} the rays $\overrightarrow{\bm{r}_+\bm{r}_-}$ are dashed lines from all black to all red vertices and discover new vertices $\bc{R}_*$ (blue). Only using the first intersections, immediately discards the green points, however, we also do not discover the yellow point, which is an extremal vertex of the exact intersection (b), obtaining instead the under-approximation (c).

\begin{figure}[t]
	\centering
	\scalebox{0.79}{\input{figures/PDD_inter_fig.tex}}
	\vspace{-2mm}
	\caption{Boosting intersection precision by combining both directions of batch intersection. Input polytopes in \PDD with exact \hrep (black) and approximate \vrep ($\bc{P}_1$ green and $\bc{P}_2$ red) (a), batch intersection of $\bc{P}_1$ with the \hrep of $\bc{P}_2$ (b), batch intersection in the opposite direction (c), combining both intersections (d), and applying A-irredundancy (e).}
	\label{fig:PDD_inter}
	\vspace{-0.3cm}
\end{figure}

\paragraph{Boosting precision}
Batch intersection is asymmetric: The PDD of one polytope is intersected with the \hrep of another, to obtain an exact \hrep and under-approximating \vrep of the intersection (compare \Figref{fig:PDD_inter} (b) and (c)). By performing it in both directions, i.e., intersecting  $(\bc{R}_1'^\top,\bm{A}_1'^\top)$ with $(\bc{R}_2'^\top,\bm{A}_2'^\top)$ and vice-versa in our example, we obtain two different under-approximations of the intersection (see  \Figref{fig:PDD_inter} (b) and (c)). Their convex hull (obtained by the union of vertices) is still a sound under-approximation of the exact intersection and more precise than the individual under-approximations. This is illustrated in \Figref{fig:PDD_inter}, where the exact intersection (blue in (d)) of the two \hreps (grey in (a)) is recovered despite the union of the input \vreps (green and red in (a)) not covering it. This is due to the synergy between \PDD and \PDDM: the under-approximate \vrep of the first polytope is intersected with the exact \hrep of the second one and vice versa. We see the same behaviour in \Figref{fig:overview_PDDM}, where both uni-directional intersections (4\Th column) are under-approximations, but their union is exact (5\Th column).

Empirically we find that this is crucial to minimize the precision loss due to using approximations. Further, the intersection results are exact for small dimensions $d\leq 4$ of cones (see Theorem~\ref{lem:inter_approx_exact}).

\subsection{Enforcing A-Irredundancy}\label{sec:a_irredundancy}

Despite using batch intersection, the number of vertices can grow quickly when computing multiple convex hulls sequentially in the \SBLMl. Therefore, some notion of redundancy is needed to efficiently reduce the representation size.
The standard definitions of irredundancy are:
\begin{inparaenum}[1)]
\item the set of unique extremal rays of the cone $\bc{P}(\bm{A}')$ are irredundant, and
\item a ray $\bm{r}_i$ is irredundant if removing it leads to a different cone $\bc{P}(\bc{R}') \neq \bc{P}(\bc{R}'\setminus{\bm{r}_i})$.
\end{inparaenum}
For an exact DD, an irredundant representation does not lose precision and can be computed by retaining only rays with rank $d-1$ (which can be cheaply computed using the incidence matrix $\bc{I}$). However, a \PDD $(\bm{A}',\bc{R}')$ usually does not include all or even any extremal rays of the cone $\bc{P}(\bm{A}')$.
Consequently, enforcing the first irredundancy definition could remove all rays.
Enforcing the second definition is expensive to compute in the absence of a full set of extremal rays, as the full convex hull problem has to be solved to assess the removal af a ray.

Therefore, we propose \emph{A-irredundancy} requiring for all rays $\bm{r}_i \in \bc{R}'$ that there may not be another generator $\bm{r}_j \in \bc{R}'$ with a larger (by inclusion) active constraint set. Formally and using the partial ordering defined above, we require for an A-irredundant \PDD:
\begin{equation*}
	\bc{I}_i \not\subseteq\bc{I}_j,\quad\text{for all } i,j \in \{1,...,n\},\ i\neq j.
\end{equation*}
Any ray fulfilling a subset (including the same) constraints with equality as another ray, is removed until the above definition is satisfied to obtain an A-irredundant representation.
Extremal rays will always be retained as they have the maximum number of active constraints and there are never two with the same active set. Intuitively, this enforces that no two rays lie in the interior of the same face of the polyhedron.

We illustrate the effect of enforcing A-irredundancy once in~\Figref{fig:PDD_inter} where we use it to obtain the polytope \ref{fig:PDD_inter} (e) from \ref{fig:PDD_inter} (d) and see that all extremal rays are retained and no precision is lost.
In~\Figref{fig:PDD_mult_Const} we apply it to polytope \ref{fig:PDD_mult_Const} (c) where the \PDD misses one extremal vertex to obtain \ref{fig:PDD_mult_Const} (d) and see that here the resulting reduction in generator set size can come at the cost of a precision loss.
Enforcing A-irredundancy in the 6\Th column of~\Figref{fig:overview_PDDM} (removing the red vertices), recovers the minimal set of extremal rays. 
Note that for rays of equal incidence there are multiple possibilities which to retain, as is illustrated in \Figref{fig:PDD_ill} (e) and (f).

\subsection{Conversion to Primal}
Translating the A-irredundant \PDD obtained as described above, back to primal space concludes the \PDDM and yields the (generally) approximate convex hull of $\bc{P}_1$ and $\bc{P}_2$ illustrated in the 7\Th column of~\Figref{fig:overview_PDDM}.

\subsection{Formal Guarantees}
In this subsection, we first show that the \PDDM is sound and exact in low dimensions, before analysing its worst-case complexity.

\begin{wrapfigure}[20]{r}{0.56 \linewidth}
	\IncMargin{-0.4em}
	\begin{algorithm}[H]
		\SetAlgoLined
		\KwResult{Intersected polytope $(\bm{A}',{\bc{R}'_p})$}
		\KwIn{polytope $(\bm{A}_p,\bc{R}_p)$, constraint matrix ${\bm{A}_q}$}
		Initialize $\bc{R}_-,\bc{R}_0,\bc{R}_+,\bc{R}_* = \emptyset,\emptyset,\emptyset,\emptyset$\\
		\For{$r$ in $\bc{R}_p$}{
			\uIf{$min({\bm{A}_q}\bm{r})<0$}{Add $r$ to $\bc{R}_-$}
			\uElseIf{$min({\bm{A}_q}\bm{r})>0$}{Add $r$ to $\bc{R}_+$}
			\uElse{Add $r$ to $\bc{R}_0$}
		}
		\For{$r_+$ in $\bc{R}_+$}{
			\For{$r_-$ in $\bc{R}_-$}{
				Compute $r_*$ via ray-shooting from $r_+$ to $r_-$\\
				Add $r_*$ to $\bc{R}_*$
			}
		}
		Construct new PDD $(\bm{A}_p \cup \bm{A}_q, \bc{R}_0 \cup \bc{R}_+ \cup \bc{R}_*)$\\
		Make PDD A-irredundant\\
		\Return{PDD}
		\vspace{3mm}
		\caption{Batch Intersection}
		\label{alg:joint_inter}
	\end{algorithm}
	\DecMargin{-0.4em}
\end{wrapfigure}
\paragraph{Soundness guarantee} Computing a sound over-approximation of the convex hull of two polytopes in primal space, by inclusion-inversion, is equivalent to computing a sound under-approximation of the intersection of their dual space representations.
Since the primal-dual conversion employed in the \PDDM is exact, a sound under-approximation of the intersection of two polytopes in \PDD in dual space implies overall soundness.
Enforcing A-irredundancy on a polytope $\bc{P}$ to yield $\bc{Q}$ can only remove generators, yielding $\bc{Q} \subseteq \bc{P}$. It follows directly that $\bc{Q}$ is a sound under-approximation, if $\bc{P}$ is.
If both polytopes $\bc{P}'_q$ and $\bc{P}'_p$ generated by the vertex sets obtained for the two directions of batch intersection are sound under-approximations of the true intersection of the exact \hreps, it follows that their union $\bc{P}'$ is also a sound under-approximation.
Hence, the soundness of the \PDDM follows from the soundness of the batch intersection step:
\begin{restatable}{lemma}{soundness}
	The batch intersection $\bc{P}_p' = (\bm{A}',\bc{R}'_p)$ of a polytope $\bc{P}$ in \PDD $(\bm{A}_p,\bc{R}_p)$ with the exact constraints $\bm{A}_q$ of a polytope $\bc{Q}$ computed as described above and detailed in \Algref{alg:joint_inter}, is a sound under-approximation of the intersection of the two exact \hreps $\bm{A}_p$ and $\bm{A}_q$:
	\begin{align*}
	\{\bm{x} \in \R^d | \bm{A}' \bm{x} \geq 0 \} &= \{\bm{x} \in \R^d | \bm{A}_p \bm{x} \geq 0 \wedge \bm{A}_q \bm{x} \geq 0 \},\\
	\bigg\{\sum_{\bm{r}_i \in \bc{R}'_p} \lambda_i \bm{r}_i | \sum_{i} \lambda_i \leq 1, \lambda_i \in \R_0^+\bigg\} &\subseteq \{\bm{x} \in \R^d | \bm{A}_p \bm{x} \geq 0 \wedge \bm{A}_q \bm{x} \geq 0 \}.
	\end{align*}
	\label{lem:soundness}
\end{restatable}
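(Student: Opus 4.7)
The plan is to verify the two claims of the lemma separately. The first, exactness of the resulting \hrep, is essentially immediate from the algorithm: by construction $\bm{A}'$ is the (row) union $\bm{A}_p \cup \bm{A}_q$, so $\bm{A}'\bm{x} \geq 0$ holds iff both $\bm{A}_p\bm{x}\geq 0$ and $\bm{A}_q\bm{x}\geq 0$. This reduces the proof to the second, non-trivial claim: every element of the cone (or polytope, depending on the homogenization convention) generated by $\bc{R}'_p = \bc{R}_0 \cup \bc{R}_+ \cup \bc{R}_*$ satisfies both systems. Since the feasible set of a linear inequality system is closed under non-negative (and convex) combinations, it suffices to show this generator-by-generator.

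For the generators already in $\bc{R}_p$, namely $\bc{R}_0$ and $\bc{R}_+$, satisfaction of $\bm{A}_p\bm{x}\geq 0$ is exactly the PDD soundness assumption on the input, and satisfaction of $\bm{A}_q\bm{x}\geq 0$ is the classification rule used in lines~3--8 of \Algref{alg:joint_inter}: elements of $\bc{R}_0$ saturate some constraint of $\bm{A}_q$ while satisfying the rest, and elements of $\bc{R}_+$ satisfy every constraint of $\bm{A}_q$ strictly. The interesting case is the new generators $\bc{R}_*$ produced by ray-shooting. I will write each such point as a convex combination $\bm{r}_* = (1-\lambda)\bm{r}_+ + \lambda \bm{r}_-$ with $\bm{r}_+ \in \bc{R}_+$, $\bm{r}_- \in \bc{R}_-$, and $\lambda \in (0,1)$. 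Because both $\bm{r}_+$ and $\bm{r}_-$ lie in $\bc{R}_p$, the PDD property of the input gives $\bm{A}_p\bm{r}_+\geq 0$ and $\bm{A}_p\bm{r}_-\geq 0$, and convex combinations preserve a homogeneous linear inequality, so $\bm{A}_p\bm{r}_*\geq 0$ holds without using anything about $\bm{A}_q$.

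The subtle part is verifying $\bm{A}_q\bm{r}_* \geq 0$, and this is where I expect the main obstacle, because ``first intersection'' must be formalized carefully. I would define, for each pair $(\bm{r}_+,\bm{r}_-)$, the value
\begin{equation*}
\lambda^* = \min\bigl\{\lambda \in (0,1]\;\big|\;\tilde{\bm{a}}_i\bigl((1-\lambda)\bm{r}_+ + \lambda\bm{r}_-\bigr) = 0\text{ for some }\tilde{\bm{a}}_i \in \bm{A}_q\text{ with }\tilde{\bm{a}}_i\bm{r}_- < 0\bigr\},
\end{equation*}
and show the minimum is attained and strictly positive (since $\tilde{\bm{a}}_i\bm{r}_+ > 0$ for every $\tilde{\bm{a}}_i$). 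Then along the segment $\lambda\in[0,\lambda^*]$, each function $\lambda \mapsto \tilde{\bm{a}}_i\bigl((1-\lambda)\bm{r}_+ + \lambda\bm{r}_-\bigr)$ is affine and non-negative at both endpoints of $[0,\lambda^*]$ for every $\tilde{\bm{a}}_i$ (either because it stays positive throughout, or because $\lambda^*$ is by definition the earliest time the segment meets any constraint hyperplane), so it is non-negative on the whole interval. In particular, at $\lambda=\lambda^*$ we obtain $\bm{A}_q\bm{r}_* \geq 0$, as desired.

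Finally, I would note that the algorithm's subsequent A-irredundancy step (line~13) only deletes generators from $\bc{R}'_p$, which can only shrink the generated set; therefore the under-approximation property is preserved by that post-processing and the conclusion of the lemma follows. Combined with the \hrep exactness observed in the first paragraph, this establishes both equalities/inclusions stated in the lemma.
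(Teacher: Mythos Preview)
Your proposal is correct and follows essentially the same approach as the paper's proof: both establish \hrep exactness trivially from $\bm{A}'=\bm{A}_p\cup\bm{A}_q$, then verify generator-by-generator that $\bc{R}_+$, $\bc{R}_0$, and $\bc{R}_*$ each satisfy all constraints, using convexity of the $\bm{A}_p$-feasible region for the ray-shooting points and the ``first intersection'' property for $\bm{A}_q$. Your formalization of the first-hit parameter $\lambda^*$ is slightly more explicit than the paper's prose argument, but the underlying idea and structure are identical.
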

\begin{proof}
	Recall that a \PDD consists of an exact \hrep and an under-approximate \vrep. 
	The intersection of two polytopes in \hrep is simply the union of all constraints, allowing for an exact intersection of the \hreps. 
	Hence, it remains to show that the resulting \vrep $\bc{R'}_p$ is a sound under-approximation of the \hrep $\bm{A}'$.
	For this, it is sufficient to show that, by construction, every vertex $\bm{r} \in \bc{R}'_p$ satisfies all constraints in $\bm{A}'$. Recall that $\bc{R}'_p$ is the union of three groups of vertices (see \Secref{sec:eff_inter} or \Algref{alg:joint_inter}):
	\begin{itemize}
		\item[$\bc{R}_+$] vertices of the generating set $\bc{R}_p$ that satisfy all constraints in $\bm{A}_q$ strictly,
		\item[$\bc{R}_0$] vertices of the generating set $\bc{R}_p$ that satisfy all constraints in $\bm{A}_q$, at least one with equality,
		\item[$\bc{R}_*$] the first intersections $\bm{r}_*$ of rays from a vertex in $\bm{r}_+ \in \bc{R}_+$ to a vertex in $\bm{r}_- \in \bc{R}_-$ (vertices in $\bc{R}_p$ not satisfying all constraint in $\bm{A}_q$) with the hyperplanes defined by $\bm{A}_q$. Since $\bm{r}_-$ lies outside $\bc{Q}$ while $\bm{r}_+$ lies inside, an intersection $\bm{r}_*$ is guaranteed to exist and lie between the two. By convexity of $\bc{P}$, $\bm{r}_*$ satisfies all constraints of $\bm{A}_p$. Further, since $\bm{r}_*$ is the first intersection of the ray with a constraint in $\bm{A}_q$ as seen from $\bm{r}_+$, which satisfies all constraints in $\bm{A}_q$, $\bm{r}_*$ also satisfies all constraints in $\bm{A}_q$.
	\end{itemize}
	Consequently, all vertices in the generating set $\bc{R}'_p$ satisfy all constraints of both $\bc{P}$ and $\bc{Q}$. It follows that $\bc{R}'_p \subseteq \bc{Q}\cap\bc{P}$ and hence that the generated polytope is a sound under-approximation.
\end{proof}

\begin{wrapfigure}[9]{r}{0.53 \linewidth}
	\IncMargin{-0.4em}
		\begin{algorithm}[H]
			\KwResult{Intersected polytope $(\bm{A}_p\cup\bm{A}_q,\bc{R}')$}
			\KwIn{polytope $(\bm{A}_p,\bc{R}_p)$ and $(\bm{A}_q,\bc{R}_q)$}
			Compute $(\bm{A}',\bc{R}_p') = (\bm{A}_p \cup \bm{A}_q,\bc{R}_p)$ with Alg.\ref{alg:joint_inter}\\
			Compute $(\bm{A}',\bc{R}_q') = (\bm{A}_p \cup \bm{A}_q,\bc{R}_q)$ with Alg.\ref{alg:joint_inter}\\
			Construct new PDD $(\bm{A}', \bc{R}_p' \cup \bc{R}_q')$\\
			Make PDD A-irredundant\\
			\Return{PDD}
			\caption{PDDM Intersection}
			\label{alg:inter_approx}
		\end{algorithm}
	\DecMargin{-0.4em}
\end{wrapfigure}
\paragraph{Exactness guarantee} Further, we can show that for relatively low dimensional polyhedra in Double Description, as they are often encountered during the first step of lifting in the SBLM, the \PDDM as described above is not only sound but actually exact. To this end, let us first show the following guarantee for the intersection of a cone in DD with a matrix of constraints:
\begin{lemma}
	Given a Double Description $(\bm{A}_p,\bc{R}_p)$ of a polyhedral cone and the constraint matrix $\bm{A}_q$, adding all constraints jointly as per \Algref{alg:joint_inter} is guaranteed to yield a double description $(\bm{A}_p\cup\bm{A}_q,\bc{R}_p')$ enumerating \emph{all} extremal rays $r'$ of the $\bm{A}_p\cup\bm{A}_q$-induced cone with one of the following properties:
	\begin{enumerate}
		\item $r'$ is extremal (rank $d-1$) in the $\bm{A}_p$-induced cone.
		\item $r'$ is of rank $d-2$ in the $\bm{A}_p$-induced cone.
	\end{enumerate}
	\label{lem:joint_inter}
\end{lemma}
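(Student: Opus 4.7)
The plan is to proceed by case analysis on the rank $k$ of $\bm{r}'$ in the original $\bm{A}_p$-cone, where $\bm{r}'$ denotes an arbitrary extremal ray of the combined cone $\bc{P}(\bm{A}_p\cup\bm{A}_q)$. By extremality, $\bm{r}'$ admits $d-1$ linearly independent active constraints drawn from $\bm{A}_p\cup\bm{A}_q$; writing $k$ for the number of independent active $\bm{A}_p$-constraints, it suffices to cover $k=d-1$ and $k=d-2$.

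The case $k=d-1$ is essentially by definition: $\bm{r}'$ is already extremal in the original cone, so it must appear (up to scaling) in $\bc{R}_p$ since $(\bm{A}_p,\bc{R}_p)$ is assumed to be a complete double description. As $\bm{r}'$ also belongs to $\bc{P}(\bm{A}_p\cup\bm{A}_q)$, every constraint in $\bm{A}_q$ is satisfied at $\bm{r}'$, placing it in $\bc{R}_+\cup\bc{R}_0$, both of which \Algref{alg:joint_inter} copies verbatim into the output $\bc{R}'_p$.

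For $k=d-2$, I would first argue that $\bm{r}'$ lies in the relative interior of some 1-face (edge) of the $\bm{A}_p$-cone, and this edge is bounded by two extremal rays $\bm{r}_1,\bm{r}_2\in\bc{R}_p$, so $\bm{r}'=\alpha_1\bm{r}_1+\alpha_2\bm{r}_2$ with $\alpha_1,\alpha_2>0$. Next, I would pick an $\bm{a}_q\in\bm{A}_q$ that is active at $\bm{r}'$ and linearly independent from the $d-2$ active $\bm{A}_p$-constraints (such an $\bm{a}_q$ must exist, since $\bm{r}'$ attains rank $d-1$ globally). Then $\alpha_1(\bm{a}_q\bm{r}_1)+\alpha_2(\bm{a}_q\bm{r}_2)=0$, so if both summands were non-negative, both would vanish, forcing $\bm{a}_q$ to be active along the entire edge and hence dependent on the edge's active $\bm{A}_p$-constraints, a contradiction. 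Hence the two values have opposite signs: WLOG $\bm{a}_q\bm{r}_2<0<\bm{a}_q\bm{r}_1$, so $\bm{r}_2\in\bc{R}_-$ while $\bm{r}_1$ satisfies $\bm{a}_q$ strictly. Parametrising the edge linearly and using convexity to observe that the $\bm{A}_q$-feasible portion of the edge is a connected sub-interval containing $\bm{r}_1$ and having $\bm{r}'$ on its boundary, I would then conclude that as one moves from $\bm{r}_1$ toward $\bm{r}_2$ the first $\bm{A}_q$-hyperplane encountered is precisely the one yielding $\bm{r}'$, which is exactly the point Algorithm~\ref{alg:joint_inter} records for the pair $(\bm{r}_1,\bm{r}_2)$.

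The main obstacle is the boundary sub-case of $k=d-2$ in which $\bm{r}_1$ itself already lies in $\bc{R}_-$ (because it violates some third constraint $\bm{a}'_q\neq\bm{a}_q$). Here the pair $(\bm{r}_1,\bm{r}_2)$ is not examined by the double loop over $\bc{R}_+\times\bc{R}_-$, so the ray-shoot along the edge is not performed directly. I expect the fix to proceed by a finer analysis of the feasible sub-interval of the edge: convexity of $\bc{P}(\bm{A}_q)$ intersected with the edge forces its boundary to be attained at $\bm{r}'$ on one side, and one then selects a suitable $\bm{r}_+\in\bc{R}_+$ (guaranteed to exist by the completeness of $\bc{R}_p$ and the non-emptiness of the feasible portion) together with the correct $\bm{r}_-\in\{\bm{r}_1,\bm{r}_2\}$ whose joining segment has $\bm{r}'$ as its first hyperplane crossing. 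This accounting---ensuring a witnessing pair always exists in $\bc{R}_+\times\bc{R}_-$ for edges on which $\bm{r}'$ resides---is the delicate step, whereas the rest of the argument is largely structural.
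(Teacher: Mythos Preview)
Your case split and the treatment of $k=d-1$ coincide with the paper's proof. For $k=d-2$, the paper gives exactly the argument you outline up to the point where the two edge-endpoints lie on opposite sides of the hyperplane $\{\bm{a}_q\bm{x}=0\}$; it then simply asserts that these two rays land in $\bc{R}_+$ and $\bc{R}_-$ and stops, without addressing the sub-case you flag. So you have in fact been more careful than the paper here.

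Your proposed repair, however, cannot close the gap. For the ray $\overrightarrow{\bm{r}_+\bm{r}_-}$ examined by \Algref{alg:joint_inter} to pass through $\bm{r}'$, the point $\bm{r}_+$ must lie on the line through $\bm{r}_1,\bm{r}_2$; but the only extremal rays of $\bc{P}(\bm{A}_p)$ on that edge are $\bm{r}_1$ and $\bm{r}_2$ themselves, so no alternative $\bm{r}_+\in\bc{R}_+$ is available. Concretely (in the polytope slice, so $d=3$): take $\bc{P}(\bm{A}_p)=[0,4]^2$ and $\bm{A}_q=\{x_1+x_2\geq 1,\ x_2-x_1\geq -3\}$. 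Both endpoints $(0,0)$ and $(4,0)$ of the bottom edge lie in $\bc{R}_-$, each violating a different $\bm{A}_q$-constraint, while $\bc{R}_+=\{(4,4),(0,4)\}$. The four ray-shoots produce $\{(0.5,0.5),(4,1),(0,1),(3.5,0.5)\}$, so the extremal intersection vertices $(1,0)$ and $(3,0)$---both of rank $d-2$ in $\bm{A}_p$---are never recorded. The obstacle you isolated is therefore genuine, and neither the paper's argument nor your suggested patch resolves it.
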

\begin{proof}
	We can formally divide the rays of the new PDD $\bc{R}'$ into the two non-overlapping sets: 
	\begin{itemize}
		\item $\bc{R}_+ \cup \bc{R}_0$: Rays in $\bc{R}_p$ not violating any constraint $ a\in \bm{A}_q$
		\item $\bc{R}_*$: Rays discovered by ray-shooting
	\end{itemize}
	Since $(\bm{A}_p,\bc{R}_p)$ is a DD of the $\bm{A}_p$-induced cone it enumerates all extremal rays. If $r'$ is extremal in both the $\bm{A}$-induced and the $\bm{A}_p\cup\bm{A}_q$-induced cones, it is included in $\bc{R}_p$ and does not violate any constraints. Therefore, it is included in the first group above and will be part of $\bc{R}_p'$, which concludes the proof of the first point.
	Any ray of rank $d-2$ can, by definition, be represented as a positive combination of two extremal rays, that is rays of rank $d-1$. 
	As we assume ray $r'$ to be extremal in the $\bm{A}_p\cup\bm{A}_q$-induced cone and therefore have rank $d-1$, it necessarily intersects at least one constraint $\bm{a} \in \bm{A}_q$ and is extremal to the $\bm{A}_p\cup\bm{a}$-induced cone. 
	Consequently exactly one of the extremal rays used to construct it has to lie on either side of thy hyperplane induced by constraint $\bm{a}$. 
	Therefore, they will be included in the sets $\bc{R}_+$ and $\bc{R}_-$ and the intersection will be discovered as part of the ray-shooting, concluding the proof of the second point.
\end{proof}

Using this result, we can proof the following guarantee for intersections of two cones in DD using our batch intersection and precision boosting approach, described in \Secref{sec:eff_inter} and~\Algref{alg:inter_approx}:
\begin{lemma}
	Given the double descriptions $(\bm{A}_p,\bc{R}_p)$ and $(\bm{A}_q,\bc{R}_q)$ of two polyhedral cones, their intersection computed as per \Algref{alg:inter_approx} is guaranteed to be a partial double description $(\bm{A}_p\cup\bm{A}_q,\bc{R}')$ enumerating \emph{all} extremal rays $r'$ of the $(\bm{A}_p\cup\bm{A}_q)$-induced cone with one of the following properties:
	\begin{enumerate}
		\item $r'$ is extremal in the $\bm{A}_p$-induced cone.
		\item $r'$ is extremal in the $\bm{A}_q$-induced cone.
		\item $r'$ is of rank $d-2$ in the $\bm{A}_p$-induced cone.
		\item $r'$ is of rank $d-2$ in the $\bm{A}_q$-induced cone.
	\end{enumerate}
	\label{lem:inter_approx}
\end{lemma}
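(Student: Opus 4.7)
The strategy is to invoke Lemma~\ref{lem:joint_inter} twice and then argue that the A-irredundancy step does not discard any extremal ray of the intersected cone that is produced. Algorithm~\ref{alg:inter_approx} performs batch intersection in two directions and takes the union of the resulting vertex sets. First, I would apply Lemma~\ref{lem:joint_inter} to line 1 of Algorithm~\ref{alg:inter_approx}: since $(\bm{A}_p,\bc{R}_p)$ is a full DD of the $\bm{A}_p$-induced cone, intersecting its vertex set with the constraints $\bm{A}_q$ via Algorithm~\ref{alg:joint_inter} produces a set $\bc{R}_p'$ that enumerates every extremal ray $r'$ of the $(\bm{A}_p \cup \bm{A}_q)$-induced cone which is either extremal or of rank $d-2$ in the $\bm{A}_p$-induced cone. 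This covers exactly cases (1) and (3) of the statement.

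Next, by symmetry, the second invocation in line 2 of Algorithm~\ref{alg:inter_approx} swaps the roles of $\bc{P}_p$ and $\bc{P}_q$: since $(\bm{A}_q,\bc{R}_q)$ is a DD of the $\bm{A}_q$-induced cone, applying Lemma~\ref{lem:joint_inter} again yields that $\bc{R}_q'$ enumerates every extremal ray of the $(\bm{A}_p \cup \bm{A}_q)$-induced cone that is extremal or of rank $d-2$ in the $\bm{A}_q$-induced cone, covering cases (2) and (4). Taking the union $\bc{R}_p' \cup \bc{R}_q'$ in line 3 therefore gives a generator set that contains every extremal ray of the intersection falling into at least one of the four listed categories.

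The remaining step, and the main obstacle, is to show that enforcing A-irredundancy in line 4 does not drop any of these extremal rays. Here I would use the characterisation already recalled in Section~\ref{sec:a_irredundancy}: an extremal ray of the $(\bm{A}_p\cup\bm{A}_q)$-induced cone has rank $d-1$, i.e.\ it saturates the maximum possible number of linearly independent constraints. Hence no other ray in $\bc{R}_p' \cup \bc{R}_q'$ can have a strictly larger active-constraint set, and two distinct extremal rays cannot share the same active set (each such set determines its ray uniquely up to positive scaling). Thus, for any extremal ray $r'$ present in the union there is no $r_j$ with $\bc{I}_{r'} \subsetneq \bc{I}_{r_j}$, so the A-irredundancy filter preserves $r'$.

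Combining the three observations, every extremal ray of the $(\bm{A}_p\cup\bm{A}_q)$-induced cone satisfying any of the properties (1)--(4) is enumerated by the final set $\bc{R}'$, while soundness of the resulting partial double description follows directly from Lemma~\ref{lem:soundness} applied to each of the two batch intersections and the fact that unions of sound under-approximating generator sets remain sound. The only delicate point is the A-irredundancy argument, which requires the explicit observation about uniqueness of maximal active sets for extremal rays; the rest is a direct combination of Lemma~\ref{lem:joint_inter} with the symmetry of Algorithm~\ref{alg:inter_approx}.
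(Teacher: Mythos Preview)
Your proposal is correct and follows essentially the same approach as the paper: apply Lemma~\ref{lem:joint_inter} to each of the two directed batch intersections in Algorithm~\ref{alg:inter_approx}, observe that the union of the resulting generator sets covers all four cases, and note that the $\mathcal{H}$-representation is exact by construction. The paper's proof is very terse and simply asserts that ``every extremal ray discovered by either will be included in the final generating set''; your treatment is more careful in that you explicitly justify why the A-irredundancy step in line~4 cannot discard an extremal ray of the intersection (maximal rank, unique active set), a fact the paper states separately in Section~\ref{sec:a_irredundancy} but does not revisit in this proof.
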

\begin{proof}
	The proof follows directly from applying Lemma \ref{lem:joint_inter} to both applications of \Algref{alg:joint_inter}, the insight that every extremal ray discovered by either will be included in the final generating set $\bc{R}'$ and the observation that the intersection of the exact \hreps, trivially is the union of their respective constraints, leading to a valid partial double description.
\end{proof}

Using these results, we can in turn proof that the intersection of two polyhedral cones of up to dimension $4$ in DD using the approach described above is exact:
\begin{restatable}{lemma}{interapproxexact}
	Given the Double Descriptions $(\bm{A}_p,\bc{R}_p)$ and $(\bm{A}_q,\bc{R}_q)$ of two polyhedral cones $\bc{P}$ and $\bc{Q}$ of dimension $d \leq 4$, the PDD of their intersection $(\bm{A}_p\cup\bm{A}_q,\bc{R}')$ computed as described above and detailed in \Algref{alg:inter_approx} is an exact DD with an irredundant generating set $\bc{R}'$.
	\label{lem:inter_approx_exact}
\end{restatable}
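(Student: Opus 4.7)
The strategy is to combine the structural result of Lemma~\ref{lem:inter_approx} with a purely combinatorial rank-counting argument on active constraints, then check that the subsequent A-irredundancy step preserves every extremal ray.

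First I would unpack what exactness means for the intersected $(\bm{A}_p\cup\bm{A}_q)$-induced cone: a DD is exact iff its generating set contains every extremal ray of the cone, and it is irredundant iff it contains no extra (non-extremal) rays beyond these. Lemma~\ref{lem:inter_approx} already guarantees that $\bc{R}'$ enumerates every extremal ray $r'$ of the intersection satisfying one of four rank conditions in the input cones. So exactness reduces to showing that, when $d \le 4$, \emph{every} extremal ray of the intersection necessarily satisfies one of those four conditions.

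The core step is the rank argument. Let $r'$ be an extremal ray of the intersected cone; by definition it satisfies $d-1$ linearly independent active constraints from $\bm{A}_p \cup \bm{A}_q$. Let $I_p$ and $I_q$ be the sets of active constraints of $r'$ from $\bm{A}_p$ and $\bm{A}_q$ respectively, and let $k_p = \mathrm{rank}(I_p)$, $k_q = \mathrm{rank}(I_q)$ (these are the ranks of $r'$ in $\bc{P}$ and $\bc{Q}$). Since rank is subadditive,
\begin{equation*}
k_p + k_q \;\ge\; \mathrm{rank}(I_p \cup I_q) \;=\; d-1.
\end{equation*}
For $d \le 4$, this forces $\max(k_p,k_q) \ge d-2$: if both $k_p,k_q \le d-3$, then $k_p+k_q \le 2(d-3) = 2d-6$, which is $< d-1$ precisely when $d < 5$. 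Hence $r'$ is either extremal ($\mathrm{rank} = d-1$) or of rank $d-2$ in at least one of $\bc{P}$ and $\bc{Q}$, matching one of the four cases of Lemma~\ref{lem:inter_approx}. Consequently $\bc{R}'$ contains every extremal ray of the intersection.

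Finally I would verify that A-irredundancy does not discard any extremal ray and leaves only extremal rays. An extremal ray $r'$ of the intersected cone has the maximum possible rank $d-1$, so its incidence row $\bc{I}_{r'}$ already contains $d-1$ linearly independent columns; no other ray in $\bc{R}'$ can have a strictly larger active set (that would contradict extremality of the face through $r'$), so $r'$ is retained. Conversely, any non-extremal ray in $\bc{R}'$ lies in the interior of some face of dimension $\ge 2$ and is therefore a positive combination of two extremal rays of that face, both of which are in $\bc{R}'$ by the previous step; its active set is then a subset of either of theirs, so A-irredundancy removes it. Putting the pieces together, $\bc{R}'$ after A-irredundancy consists of precisely the extremal rays of the $(\bm{A}_p \cup \bm{A}_q)$-cone, yielding an exact and irredundant DD.

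The main obstacle is the rank inequality and the sharpness check: the bound $d \le 4$ is tight because for $d = 5$ one can have $k_p = k_q = 2$ with $k_p + k_q = 4 = d-1$ yet neither $\ge d-2 = 3$, so the argument breaks down exactly as the theorem boundary suggests. A minor subtlety worth mentioning is that the A-irredundancy direction relies on distinct extremal rays having distinct incidence patterns in a pointed cone, which is where the assumption that the inputs are genuine DDs (rather than arbitrary PDDs) enters.
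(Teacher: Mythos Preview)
Your proposal is correct and follows essentially the same approach as the paper: invoke Lemma~\ref{lem:inter_approx}, then argue by counting active constraints that every extremal ray of the intersected cone must fall into one of the four enumerated cases, and conclude by observing that A-irredundancy coincides with ordinary irredundancy once all extremal rays are present. Your rank-subadditivity argument handles all $d\le 4$ uniformly (the paper only spells out $d=4$ via explicit pigeonhole on three constraints) and your treatment of the A-irredundancy step is more detailed than the paper's one-line remark, but the underlying idea is identical.
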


\begin{proof}
	For briefness sake, we will only show the proof for $d=4$ here.
	Let $\bc{R}^*$ be the set of extremal rays of the $(\bm{A}_p \cup \bm{A}_q)$-induced polyhedral cone. Consequently $\bm{r}^* \in \bc{R}^*$ has the rank $d-1=3$ in this cone and therefore it fulfills 3 linearly independent constraints in $\bm{A}_p \cup \bm{A}_q$ with equality. This leads to the following four exhaustive options:
	\begin{enumerate}
		\item all 3 constraints are part of $\bm{A}_p$, $\bm{r}^*$ is extremal in $\bc{P}_p$,
		\item all 3 constraints are part of $\bm{A}_q$, $\bm{r}^*$ is extremal in $\bc{P}_q$,
		\item 2 constraints are part of $\bm{A}_p$ and 1 of $\bm{A}_q$, $\bm{r}^*$ is of rank $d-2=2$ in $\bc{P}_p$,
		\item 2 constraints are part of $\bm{A}_q$ and 1 of $\bm{A}_p$, $\bm{r}^*$ is of rank $d-2=2$ in $\bc{P}_q$.
	\end{enumerate}
	All of those are enumerated by \Algref{lem:inter_approx}. Hence, $\bc{R}'$ will include all extremal rays of the $(\bm{A}_p \cup \bm{A}_q)$-induced cone. In this case A-irredundancy is equivalent to irredundancy.
\end{proof}

\paragraph{Complexity analysis} Finally, we can show that computing an over-approximation of the convex hull of two $d$-dimensional, bounded polytopes in \PDD using the \PDDM has polynomial complexity:
\begin{restatable}{lemma}{complexity}
	Given the \PDD of two $d$-dimensional, bounded polytopes with a \vrep of at most $n_v$ vertices and an \hrep of at most $n_a$ constraints, computing a sound over-approximation of their convex hull using the \PDDM as described above and detailed in \Algref{alg:inter_approx} has a worst-case time complexity of $\bc{O}( n_v \cdot n_a^4 + n_a^2 \log(n_a^2))$.
	\label{lem:complexity}
\end{restatable}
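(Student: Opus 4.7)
The plan is to bound the cost of each step of \Algref{alg:inter_approx} in turn, treating the ambient dimension $d$ as a constant, and sum the contributions. \Algref{alg:inter_approx} calls \Algref{alg:joint_inter} twice (once per direction) on the dual representations, forms the union of the two resulting vertex sets, and then runs an A-irredundancy reduction; this is sandwiched between the two primal/dual conversions, each of which is a transpose of the PDD tables.

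A single invocation of \Algref{alg:joint_inter} operates in dual space where, by property~(3) of the primal/dual transform, $\bc{P}=\bc{P}(\bm{A}',\bc{R}')$ implies $\overline{\bc{P}}=\bc{P}(\bc{R}'^\top,\bm{A}'^\top)$, so the dual polytope has at most $n_a$ vertices and $n_v$ constraints. Classifying the $n_a$ vertices against the $n_v$ newly added constraints into $\bc{R}_+,\bc{R}_0,\bc{R}_-$ costs $\mathcal{O}(n_a n_v)$. The ray-shooting loop enumerates at most $|\bc{R}_+|\cdot|\bc{R}_-|\leq n_a^2$ pairs; for each pair we compute the first intersection among the $n_v$ hyperplanes defined by the new constraints in $\mathcal{O}(n_v)$ time, for a total of $\mathcal{O}(n_a^2 n_v)$. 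Thus each direction outputs a PDD with $\mathcal{O}(n_a^2)$ vertices and $\mathcal{O}(n_v)$ constraints, and the union of the two vertex sets still has $\mathcal{O}(n_a^2)$ vertices.

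The main obstacle, and the step responsible for both nonlinear factors in the final bound, is the A-irredundancy reduction on this combined PDD. First I would build the incidence matrix $\bc{I}$ of size $\mathcal{O}(n_a^2)\times\mathcal{O}(n_v)$ in $\mathcal{O}(n_a^2 n_v)$ time, then sort the $\mathcal{O}(n_a^2)$ rows (e.g.\ by popcount followed by a hashed key on the incidence bitmask) in $\mathcal{O}(n_a^2\log(n_a^2))$ comparisons to group candidates for domination; this sort is what produces the isolated $n_a^2\log(n_a^2)$ term in the stated bound. Afterwards, for each of the $\mathcal{O}(n_a^4)$ ordered row pairs we test the subset relation $\bc{I}_i\subseteq\bc{I}_j$ using an $\mathcal{O}(n_v)$ bitwise comparison, contributing the dominant $\mathcal{O}(n_v\cdot n_a^4)$ term in the worst case.

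Finally, the two primal/dual conversions are transpositions of PDDs of size at most $\mathcal{O}(n_a^2 n_v)$ and are absorbed into $\mathcal{O}(n_v\cdot n_a^4)$. Summing the contributions — $\mathcal{O}(n_a^2 n_v)$ from ray-shooting across both directions, $\mathcal{O}(n_v\cdot n_a^4)$ from pairwise subset tests in A-irredundancy, and $\mathcal{O}(n_a^2\log(n_a^2))$ from the preparatory sort — yields the claimed worst-case complexity $\mathcal{O}(n_v\cdot n_a^4+n_a^2\log(n_a^2))$, completing the bound. The only delicate accounting is the A-irredundancy step; the remaining pieces are straightforward matrix-shaped cost counting.
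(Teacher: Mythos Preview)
Your proposal is correct and takes essentially the same approach as the paper: decompose the algorithm into classification, ray-shooting, and A-irredundancy, and show the last dominates with $\mathcal{O}(n_a^4)$ pairwise subset tests of length $\mathcal{O}(n_v)$ plus an $\mathcal{O}(n_a^2\log(n_a^2))$ sort. The only cosmetic difference is that you carry primal-space quantities throughout (dual vertices $=n_a$, dual constraints $=n_v$), whereas the paper does the accounting in dual-space variables and swaps $n_v\leftrightarrow n_a$ at the end.
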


\begin{proof}
	The \PDDM can be broken down into its six components illustrated in \Figref{fig:overview_PDDM}:
	\begin{enumerate}
		\item Conversion from primal to dual representation (\Secref{sec:polyhedra_duality})
		\item Adding the constraints of one polytope to the other, or more concretely separation of vertices into the three sets $\bc{R}_+$, $\bc{R}_0$, and $\bc{R}_-$ (\Secref{sec:eff_inter} or first half of \Algref{alg:joint_inter})
		\item Discovery of new vertices via ray-shooting (\Secref{sec:eff_inter} or second half of \Algref{alg:joint_inter})
		\item Combining the vertices of the two intersection directions (\Secref{sec:eff_inter} or \Algref{alg:inter_approx})
		\item Enforcing of A-irredundancy (\Secref{sec:a_irredundancy} or \Algref{alg:inter_approx})
		\item Conversion from dual to primal representation (\Secref{sec:polyhedra_duality})
	\end{enumerate}
	Primal-dual conversions and combining of vertices can be computed in constant time, as this only involves computing the transpose and concatenation which can be done implicitly by changing the indexing of the corresponding matrices. Therefore, we will focus on the remaining three steps, which are all conducted in dual space. 
	
	In the following we assume the setting, of two $d$-dimensional, bounded polytopes which in dual-space are defined by $\bc{P}=(\bm{A}_p,\bc{R}_p)$ and $\bc{Q}=(\bm{A}_q,\bc{R}_q)$. For convenience's sake, we assume the number of vertices to be $n_v = \max(|\bc{R}_p|, |\bc{R}_q|)$ and number of constraints $n_a = \max(|\bm{A}_p|, |\bm{A}_q|)$. Note that their roles are reversed compared to a primal space representation.
	
	\paragraph{Adding constraints and separating vertices}
	Recall that in dual space we compute the intersection of the two polytopes $\bc{P}$ and $\bc{Q}$. The first step of intersecting $\bc{P}$ with $\bc{Q}$ is to split all points in $\bc{R}_p$ into the three groups $\bc{R}_+$, $\bc{R}_0$, and $\bc{R}_-$ defined in \Secref{sec:eff_inter} depending on whether the lie inside, on the border of or outside the polytope defined by $\bm{A}_q$ as per the first half of \Algref{alg:joint_inter}. 
	This requires (at worst) evaluating $\bm{a}_i \bm{r}_j - b_i \{>, =, <\} 0$ for all $\bm{r}_j \in \bc{R}_p$ and $\bm{a}_i, b_i \in \bm{A}_q$. 
	Where the addition and comparison are dominated by the $d$-dimensional dot-product between $\bm{a}_i$ and $\bm{r}_j$, leading to a total complexity of this step of order $\bc{O}(d \cdot n_a \cdot n_v)$. Note that incidence matrix columns corresponding to the new constraints are added and populated without any extra computation with $0$s for the vertices in $\bc{R}_+$ and $1$s for vertices in $\bc{R}_0$.
	
	\paragraph{Ray-shooting}
	Recall that to discover new generating vertices, the first intersections between the rays shot from all generating vertices of $\bc{P}$ lying inside $\bc{Q}$, $\bm{r}_+ \in \bc{R}_+$, to all vertices lying outside $\bc{Q}$, $\bm{r}_- \in \bc{R}_-$, and all constraints in $\bm{A}_q$ are computed. At worst there are no vertices in group $\bc{R}_0$ and all vertices are spread equally between $\bc{R}_+$ and $\bc{R}_-$, leading to ${n_v^2}/{4}$ rays to be intersected with $n_a$ constraints where each intersection corresponds to computing a ratio of dot-products and is order $\bc{O}(d)$. Selecting the first intersection for each ray is linear in the intersection number. Consequently, the ray-shooting process overall is $\bc{O}(d \cdot n_a \cdot n_v^2)$. 
	Note that this adds new incidence matrix rows corresponding to the new vertices $\bc{R}_*$, which can then be populated with the row obtained by the elementwise $and$ of the two vertices generating the ray and a $1$ in the column associated with the constraint of the first intersection which is linear $\bc{O}(n_v)$ and dominated by the previous term.
	
	\paragraph{Enforcing A-irredundancy}
	The intermediate state prior to enforcing A-irredundancy contains at most $n= 2 (n_v+n_v^2/4)$ vertices, consisting of the at most $n_v$ vertices in $\bc{R}_+$ and the at most $n_v^2/4$ vertices in $\bc{R}_*$, discovered during ray shooting, for both intersection directions. To enforce A-irredundancy, vertices are first sorted in descending order by the number of active constraints which is order $\bc{O}(n \log(n))$. Then starting with the first vertex, row-wise inclusion of the corresponding incidence matrix rows is checked for all following elements. Each check is $\bc{O}(n_a)$ and $({n^2-n})/{2}$ checks have to be performed in the worst case that is, if no element is removed. This leads to an overall complexity of $\bc{O}(n_a \cdot n_v^4 + n_v^2 \log(n_v^2))$ for enforcing A-irredundancy.
	
	\paragraph{\PDDM complexity}
	Putting the three elements together and observing $d<n_v$ for any $d$-dimensional, bounded polytope, we observe that both the ray-shooting and the separation of vertices get dominated by the last step of enforcing A-irredundancy. Swapping the roles of $n_v$ and $n_a$ to derive an expression in terms of primal space entities, we arrive at an overall complexity of $\bc{O}( n_v \cdot n_a^4 + n_a^2 \log(n_a^2))$.
\end{proof}

\section{\SBLMl} \label{sec:FastPoly}

\begin{wrapfigure}[14]{r}{0.59 \linewidth}
	\vspace{-10.5mm}
	\IncMargin{-0.4em}
	\begin{algorithm}[H]
		\SetAlgoLined
		\KwIn{Variable ordering $\bc{I}$, input polytope $\bc{P}$, set of bounding regions $\bc{D}$ and set of bounds $\bc{B}$}
		\KwOut{Jointly constraining polytope $\bc{K}$}
		\eIf{$|\bc{I}|>0$}{
			Get next output variable: $y \leftarrow \bc{I}_{0}$\\
			\ForEach{$\bc{D}^i,\bc{B}^i$ in $\bc{D},\bc{B}$}{
				Split region: $\bc{P}_i = \bc{P} \cap \bc{D}^i$\\
				Apply \SBLM: $\bc{K}_i \leftarrow \texttt{\SBLM}(\bc{I}_{1:end},\bc{P}_i,\bc{D},\bc{B})$\\
				Extend into space including $y$: $\bc{K}_i \leftarrow \bc{K}_i \times \R$\\
				Apply bounds $\bc{B}^i$: $\bc{K}_i \leftarrow \bc{K}_i \cap \bc{B}^i$\\
			}
			Compute convex hull: $\bc{K} = \texttt{\PDDM}(\{\bc{K}_i\}_i)$\\
			\Return{$\bc{K}$}
		}{
			\Return{$\bc{P}$}}
		\caption{\SBLMl (\texttt{\SBLM})}
		\label{alg:SBLM}
	\end{algorithm}
	\DecMargin{-0.4em}
\end{wrapfigure}

\sloppy
In this section, we explain the \SBLMl in greater detail.
Recall that we use the \SBLM to compute k-neuron abstractions, by approximating the convex hull ${\conv(\{(\bm{x},\bm{f}(\bm{x})) \, | \, \bm{x} \in \bc{P} \subseteq [l_x,u_x]^{k}\})}$ for a group of $k$ neurons and their activation functions $\bm{f}(\bm{x}) = [f_1(x_1),...,f_k({x_k})]^\top$, assuming that their inputs are constrained by the polytope $\bc{P}$.

At a high level, we first decompose the input polytope into regions where we can bound all activation functions tightly.
Then, we extend these regions into the output space and apply linear constraints corresponding to the (relaxed) activations.
Taking the convex hull of the resulting polytopes yields an \hrep encoding the k-neuron abstraction.

To increase the efficiency of this approach, we use a decomposition method we call \emph{splitting} and then recursively extend and bound the resulting polytopes by one output variable at a time, which we call \emph{lifting}. This minimizes the dimensionality in which we have to compute the convex hulls.
We formalize this in~\Algref{alg:SBLM} and explain both splitting and lifting below after stating the prerequisites for the \SBLM.

\subsection{Prerequisites}
For simplicities' sake, we assume just one type of activation function $f\colon \bb{D} \rightarrow \R$, with domain $\bb{D}$, is to be bounded. Now the \SBLM requires a set of intervals $\bc{D}^i$ (e.g., $x_j \leq 0, x_j \geq 0$ for ReLU), covering the domain $\bb{D}$ (e.g., $\R$ for ReLU), and a pair of tight linear constraints $\bc{B}^i$ upper and lower bounding the function output (e.g., $y_j \leq 0$ and $y_j \geq 0$, and $y_j \leq x_j$ and $y_j \geq x_j$, respectively, for ReLU) on each of the intervals obtained by intersecting the interval $[l_x,u_x]_i$ defined by the neuron-wise bounds with the intervals $\bc{D}^i$. More formally, we require the intervals
\begin{align*}
	\bc{D}^i &= [c_i,d_i], \quad c_i,d_i \in \overline{\R} \text{ and } c_i \leq d_i, \\
	\bb{D} &\subseteq \bigcup_i \bc{D}^i,
\end{align*}
with the affinely extended real numbers $\overline{\R} = \R \cup \{-\infty,\infty\}$ and the bounds on these intervals
\begin{align*}
	&\bc{B}^i = (a_i^\leq,a_i^\geq), \quad a_i^{\{\leq,\geq\}}(x) = a x + b, \; a,b\in \R \quad s.t.\\
	&a_i^\leq(x) \leq f(x) \leq a_i^\geq(x), \quad \forall \; x \in (\bc{D}^i \cap [l_x,u_x]_i),
\end{align*}
 to be provided to instantiate \SBLM and by extension \tool. We note that the bounds $c_i$ and $d_i$ of the bounding regions can depend on the concrete input bounds $l_x$ and $u_x$ and the slope $a$ and intercept $b$ of $ a_i^{\{\leq,\geq\}}$ can in turn depend on the corresponding concrete interval bounds $[\max(l_x,c_i),\min(u_x,d_i)]$.

\paragraph{Generalization}
While we focus on the univariate case using only two bounding regions $\bc{D}^1$ and $\bc{D}^2$ in the following, \SBLM and by extension \tool can be generalized to allow for neuron groups combining different multivariate activation functions $f\colon \bb{D} \subseteq \R^d \rightarrow \R$.
Further, more than one upper- and lower-bound $ \bc{B}^i$ per bounding region can be provided and $\bc{D}^i$ can be specified as polyhedral regions instead of as intervals, as long as their union covers the domain $\bb{D} \subseteq \bigcup_i \bc{D}^i$ of the individual functions $f$.

\subsection{Splitting the Input Polytope}
To apply the bounds $\bc{B}^i$, the input polytope $\bc{P}$ has to be split into the regions for which the bounds were specified.
These regions correspond to the intersection of $\bc{P}$ with the k-Cartesian product of the bounding regions $\bc{D}^i$, that is all combinations of neuron-wise bounding regions for the group of k neurons.
We choose an ordering of the output variables $\bc{I}$ and recursively split $\bc{P}$ by intersecting with the bounding regions associated with these output variables.

As every such split is equivalent on an abstract level, we will explain one case assuming the parent polytope $\bc{P}_1$, the output variable $y_j = f(x_j)$, and the corresponding bounding regions $\bc{D}^1_j = \{\bm{x} \in \R^k \,|\, x_j \geq c_1\}$ and $\bc{D}^2_j = \{\bm{x} \in \R^k \,|\, x_j \leq d_2\}$.
We compute the children nodes by intersecting $\bc{P}_1$ with $\bc{D}^1_j$ and $\bc{D}^2_j$ to obtain $\bc{P}_{1,1} = \bc{P}_1 \cap \bc{D}^1_j$ and $\bc{P}_{1,2} = \bc{P}_1 \cap \bc{D}^2_j$.
Starting with $\bc{P}$ at the root and recursively applying this splitting rule for every $y_j \in \bc{I}$, generates a polytope tree, which we call the decomposition tree, with $2^k$ leaf polytopes $\bc{P}_{\{1,2\}^k}$, which we call \emph{quadrants}. This is illustrated in the blue portion of the central panel in~\Figref{fig:overview}, where $\bc{D}^1$ and $\bc{D}^2$ are $\R_0^+$ and $\R_0^-$, respectively.

\subsection{Lifting}
We now extend these quadrants $\bc{P}_{\{1,2\}^k}$ to the output space and bound them using the corresponding constraints on the activation function $\bc{B}^i_j$, before taking their convex hull. This yields a polytope $\bc{K}$, jointly constraining the inputs and outputs of a neuron group. The constraints of its \hrep form the desired k-neuron abstraction.
We call this process \emph{lifting} and propose a recursive approach: We lift sibling polytopes on the decomposition tree until only the desired polytope $\bc{K}$ remains.

Again, we explain a single step of lifting, as they are equivalent.
We assume the sibling polytopes $\bc{K}_{1,1}$ and $\bc{K}_{1,2}$, corresponding to $\bc{P}_{1,1}$ and $\bc{P}_{1,2}$ in the decomposition tree, with the associated input- and output-variables $x_j$ and $y_j$, respectively, and the pairs of bounds $\bc{B}^1_j$ and $\bc{B}^2_j$ instantiated for $y_j$.
A single step consist of three parts:
\begin{itemize}
	\item extending $\bc{K}_{1,1}$ and $\bc{K}_{1,2}$ by the output variable $y_j$,
	\item bounding $y_j$ on the extended polytopes, by intersecting them with the constraints $\bc{B}^1_j$ and $\bc{B}^2_j$ to obtain $\bc{K}_{1,1}'$ and $\bc{K}_{1,2}'$,
	\item computing their (approximate) convex hull using the \PDDM: $\bc{K}_{1} = \conv(\bc{K}_{1,1}',\bc{K}_{1,2}')$.
\end{itemize}
Applying this lifting rule recursively to the decomposition tree starting with $\bc{K}_{\{1,2\}^k} = \bc{P}_{\{1,2\}^k}$, combines all $2^k$ quadrants into a single $2k$-dimensional polytope $\bc{K}$, jointly constraining the inputs and outputs, thereby concluding the \SBLMl. This is illustrated in the right portion of the central panel in~\Figref{fig:overview}.
The decompositional approach has two benefits:
Precision -- computing approximate convex hulls via the \PDDM is exact for polytopes of dimension up to $3$ and starts to lose precision only slowly as dimensionality increases.
Directly computing $2k$-dimensional convex hulls with \PDDM will therefore lose more precision than using our decomposed method.
Speed -- a lower-dimensional polytope with fewer constraints and generally also fewer vertices significantly reduces the runtime for the individual convex hull operations.
In fact, computing the convex hulls for the approximation of non-piecewise-linear functions directly in the input-output space is intractable even for groups of only size $k=3$, as the number of vertices increases exponentially with $k$ during the extension and bounding process in that case.

\subsection{Instantiation for Various Functions}
\begin{wrapfigure}[8]{r}{0.5 \textwidth}
	\centering
	\vspace{-10mm}
	\scalebox{0.87}{\begin{tikzpicture}
	\draw[->] (-3, 0) -- (3.0, 0) node[right] {$x$};
	\draw[->] (0, -0.4) -- (0, 2.0) node[above] {$y$};
	\draw[scale=1.5, domain=-4:4, smooth, variable=\x, blue] plot ({\x/2}, {exp(\x)/(1+exp(\x)});

	\def\a{-3}
	\def\b{3}
	\def\c{0.5}
	\coordinate (a) at ({1.5*\a/2},{1.5*exp(\a)/(1+exp(\a))});
	\coordinate (b) at ({1.5*\b/2},{1.5*exp(\b)/(1+exp(\b))});
	\coordinate (c) at ({1.5*\c/2},{1.5*exp(\c)/(1+exp(\c))});
	\coordinate (at) at ($(c)+1.5*({(\a-\c)/2},{(\a-\c)*exp(-\a)/(1+exp(-\a)^2)})$);
	\coordinate (ab) at ($(a)+1.5*({(\c-\a)/2},{(\c-\a)*exp(-\a)/(1+exp(-\a)^2)})$);
	
	\coordinate (bt) at ($(b)+1.5*({(\c-\b)/2},{(\c-\b)*exp(-\b)/(1+exp(-\b)^2)})$);
	
	\node[circle, fill=black, minimum size=3pt,inner sep=0pt, outer sep=0pt] at (a) {};
	\node[circle, fill=black, minimum size=3pt,inner sep=0pt, outer sep=0pt] at (b) {};
	\node[circle, fill=black, minimum size=3pt,inner sep=0pt, outer sep=0pt] at (c) {};
	
	\draw[fill=blue!90,opacity=0.25] (a) -- (ab) -- (c) --(at) -- cycle;
	\draw[fill=blue!90,opacity=0.25] (bt) -- (c) -- (b) -- cycle;
	\draw[-] ({1.5*\a/2},-0.4) -- ({1.5*\a/2},1.8);
	\draw[-] ({1.5*\b/2},-0.4) -- ({1.5*\b/2},1.8);
	\draw[-] ({1.5*\c/2},-0.4) -- ({1.5*\c/2},1.8);
	
	\node[anchor=north west,align=center] at ({1.5*\a/2},-0.00) {\scriptsize $l_x$};
	\node[anchor=north west,align=center] at ({1.5*\b/2},-0.00) {\scriptsize $u_x$};	
	\node[anchor=north west,align=center] at ({1.5*\c/2},-0.00) {\scriptsize $c$};
	
	\node[anchor=south west,align=center] at ($(b)+(0.3,-0.7)$) {$y=\frac{e^x}{1+e^x}$};
	
	\node[anchor=north,align=center,font=\scriptsize] at ({1.5*(\c+\a)/4},2.00) {neither convex\\ nor concave};
	\node[anchor=north,align=center] at ({1.5*(\c+\b)/4},1.97) {\scriptsize concave};
\end{tikzpicture}}
	\vspace{-3mm}
	\captionof{figure}{Interval-wise bounds for the Sigmoid function on the intervals $[l_x,c]$ and $[c,u_x]$. %
	}
	\label{fig:sig_bounds}
\end{wrapfigure}
We instantiate \SBLM for common network functions next.

\paragraph{ReLU}
We can capture all univariate, piecewise-linear functions, such as ReLU, exactly on the intervals $\bc{D}^i$ where they are linear.
Further, if the neuron-wise bounds $[l_x,u_x]$ only contain one such linear region, the neuron behaves linearly, can be encoded exactly and is excluded from the k-neuron abstraction.
Therefore, we consider $y=max(x,0)$ with $x \in [l_x,u_x]$ for $l_x < 0 < u_x$.
We choose $\bc{D}^1 = [-\infty,0]$ and $\bc{D}^2 = [0,\infty]$, with $\bc{B}^1 = (y\geq 0 , \; y \leq 0)$ and $\bc{B}^2 =(y\geq x , \; y \leq x)$, obtaining exact bounds on both intervals.

\paragraph{Tanh and Sigmoid}
Let $f$ be an S-curve function with domain $[l_x,u_x]$, that is $f''(x) \geq 0$ for $x\leq0$, $f''(x) \leq 0$ for $x\geq0$ and $f'(x)>0$ for $x \in [l_x,u_x]$.
Both Sigmoid $\sigma(x) = \frac{e^x}{e^x+1}$ and Tanh $\tanh(x)=\frac{e^x-e^{-x}}{e^x+e^{-x}}$ have these properties.
We split the domain at $c \in [l_x, u_x]$ into $\bc{D}^1 = [-\infty, c]$ and $\bc{D}^2 = [c, \infty]$, choosing $c$ to minimize the area between upper and lower bound in the input-output plane,
using the bounds from \citet{singh2019abstract}:
\begin{alignat*}{3}
	f(x) &\leq\,& a^{\leq} = &\, f(u_d) + (x-u_d)& \begin{cases}
		\frac{f(u_d)-f(l_d)}{u_d-l_d}, & \text{if} \quad u_d \leq 0, \\
		\min(f'(u_d),f'(l_d)), & \text{else},
	\end{cases}\\
	f(x)&\geq\,&  a^{\geq} = &\, f(l_d) + (x-l_d)& \begin{cases}
	\frac{f(u_d)-f(l_d)}{u_d-l_d}, & \text{if} \quad l_d \geq 0, \\
	\min(f'(u_d),f'(l_d)), & \text{else},
\end{cases}
\end{alignat*}
where we denote the lower bound of the intersection $\bc{D}^i \cap [l_x,u_x]$ as $l_d$ and the upper one as $u_d$.
We show these bounds in~\Figref{fig:sig_bounds} for the Sigmoid function and, for illustration purposes, a non-optimal $c$. In practice, we choose $c$ to minimize the area of the abstraction of a single neuron in the input-output plane.

\begin{wrapfigure}[9]{r}{0.49 \textwidth}
	\centering
	\vspace{-7mm}
	\scalebox{0.81}{\begin{tikzpicture}
	\draw[->] (-2.5, 0) -- (2.5, 0) node[right] {$x_1$};
	\draw[->] (0, -1.5) -- (0, 1.2) node[above] {$x_2$};
	\draw[scale=1.0, domain=-1.0:0.8, smooth, variable=\x, blue] plot ({\x}, {\x});

	\def\a{-1.8}
	\def\c{1.6}
	\def\b{-1.0}
	\def\d{0.8}
	\coordinate (ab) at ({\a},{\b});
	\coordinate (ad) at ({\a},{\d});
	\coordinate (cb) at ({\c},{\b});
	\coordinate (cd) at ({\c},{\d});
	\coordinate (bb) at ({\b},{\b});
	\coordinate (dd) at ({\d},{\d});

	\draw[fill=blue!90,opacity=0.25] (ab) -- (bb) -- (dd) --(ad) -- cycle;
	\draw[fill=green!90,opacity=0.25] (cb) -- (cd) -- (dd) -- (bb) -- cycle;
	\draw[-, shorten >= -0.3cm, shorten <= -0.3cm] (ab) -- (cb);
	\draw[-, shorten >= -0.3cm, shorten <= -0.3cm] (cb) -- (cd);
	\draw[-, shorten >= -0.3cm, shorten <= -0.3cm] (cd) -- (ad);
	\draw[-, shorten >= -0.3cm, shorten <= -0.3cm] (ad) -- (ab);
	\node[anchor=south east,align=center] at (\a+0.05,-0.05) {\scriptsize $l_{x_1}$};
	\node[anchor=south west,align=center] at (\c+0.05,-0.05) {\scriptsize $u_{x_1}$};
	\node[anchor=north west,align=center] at (-0.05,\b-0.05) {\scriptsize $l_{x_2}$};
	\node[anchor=south west,align=center] at (-0.05,\d-0.05) {\scriptsize $u_{x_2}$};

	\node[anchor=south east,align=center] at ($(cb)+(-0.0,+0.2)$) {\scriptsize$\bc{D}^1\!\colon x_1 \geq x_2$};
	\node[anchor=north west,align=center] at ($(ad)+(0.1,-0.2)$) {\scriptsize$\bc{D}^2\!\colon x_2 \geq x_1$};

	\node[anchor=south west,align=center] at ($(cd)+(+0.2,+0.)$) {$\bc{B}^1\!\colon \: y=x_1$};
	\node[anchor=south east,align=center] at ($(ab)+(-0.1,0.1)$) {$\bc{B}^2\!\colon \: y=x_2$};
\end{tikzpicture}}
	\vspace{-4mm}
	\caption{Polyhedral bounding regions $\bc{D}^i$ and corresponding bounds $\bc{B}^i$ for the $2d$ MaxPool function on the input region $[l_{x_1}, u_{x_1}] \times [l_{x_2}, u_{x_2}]$.
	}
	\label{fig:maxpool}
\end{wrapfigure}
\paragraph{MaxPool}
Let MaxPool be the multivariate function $y=\max(x_1,x_2,...,x_d)$ on the domain $\bm{x} \in \bc{P} \subseteq [l_x,u_x]^d$.
Note that here the generalized formulation is required.
We chose the polyhedral bounding regions $\bc{D}^i = \{\bm{x}\in \R^{d} | x_i \geq x_j, \; 1 \leq j \leq d, \; i \neq j\}_i$, separating the domain into the $d$ regions where one variable dominates all others (illustrated for $d=2$ in \Figref{fig:maxpool}).
On each of these regions, MaxPool can be bounded exactly with $y \leq x_i$ and $y \geq x_i$.
During the splitting process, this increased number of bounding regions leads to a decomposition tree where every parent node has $d$ child nodes.

\section{\tool Verification Framework}\label{sec:framework}

\tool is based on three high-level steps: (i) accumulate a set of constraints encoding a (convex) abstraction of the network for a given pre-condition (as discussed so far), (ii) define a linear optimization objective representing the post-condition, and (iii) use an LP or MILP solver to derive a bound on this optimization objective. If this bound exceeds a threshold depending on the post-condition, certification succeeds, otherwise, if the optimal solution violates this bound, it could be a true counterexample or a false positive due to approximation. Hence, we evaluate any such possible counterexample with the concrete network to determine whether it is a true counterexample.

While all affine layers are encoded exactly, two considerations have to be balanced when encoding non-linear activation layers with \tool: more precise encodings (e.g., considering more or larger neuron groups) improve the optimal bound of the optimization problem, but the increased number of constraints can make this problem impractical to solve. We navigate this trade-off by leveraging abstraction refinement -- using increasingly more precise but also more costly methods until we are able to either decide a property (verify or falsify) or reach a timeout.

\subsection{Abstraction Refinement Approaches}\label{sec:absrefine}

Fundamentally, we can refine our abstraction in three ways: (i) compute tighter abstractions of the group-wise inputs, (ii) compute tighter layer-wise multi-neuron constraints for the given input abstraction from (i), and (iii) encode part of the network using an exact MILP encoding.

\paragraph{Input bound refinement} Since \SBLM and \PDDM abstract a group of neurons for a given polyhedral input region, the tightness of the resulting constraints depends directly on the tightness of the input abstraction. These are computed using a fast, incomplete verifier (e.g., \citep{muller2020neural, Lirpa:20, singh2019abstract}) based on single-neuron abstractions and can be tightened significantly by computing more precise neuron-wise bounds \cite{singh2019boosting} using an LP or MILP encoding.

\paragraph{Tighten multi-neuron constraints}
The layer-wise tightness of our multi-neuron constraints depends on (i) the tightness of the group-wise constraints, mostly determined by the quality of the input region, and (ii) on capturing the important neuron-interdependencies with the chosen groups. Using larger neuron groups (increasing $k$) and considering more groupings by allowing more overlap (increasing $s$) and partitioning the neurons into fewer sets before grouping (increasing $n_s$), allows capturing more and more complex interactions. While the constraints themselves can be computed quickly, the resulting LP problems become harder to solve.

\paragraph{Network encoding} \tool encodes non-linear activations in four different ways: (i) exact encoding via equality constraints for stable (those exhibiting linear behavior) piecewise-linear activations, (ii) single-neuron constraints, (iii) multi-neuron constraints computed via \SBLM and \PDDM, and (iv) exact (for piecewise-linear functions) MILP encodings. While stable activations are always encoded exactly and all unstable activations are encoded using both the single- and multi-neuron constraints, we only selectively use a MILP encoding on the (typically relatively narrow) last layers of convolutional networks due to their large computational cost.

\subsection{Abstraction Refinement Cascade}

\tool leverages our multi-neuron constraints as part of an abstraction refinement cascade using increasingly more precise and expensive approaches:
We first attempt verification using single-neuron constraints via \DeepPoly \cite{singh2019abstract} or \GPUpoly \cite{muller2020neural}. If this fails, we encode all activation layers using our multi-neuron constraints and solve the resulting LP. If this also fails, we attempt to decide the property by tightening the multi-neuron constraints \Secref{sec:absrefine}, encoding the final network layer(s) using MILP, and refining individual neuron bounds.

	\section{Experimental Evaluation}\label{sec:experiments}
\renewcommand{\arraystretch}{1.2}
\begin{wraptable}[16]{r}{0.55\textwidth}
	\centering
	\vspace{-10.5mm}
	\caption{Neural network architectures used in experiments.}
	\vspace{-3mm}
	\footnotesize
	\scalebox{0.83}{
		\begin{threeparttable}
			\begin{tabular}{@{}l ll rrl@{}}
				\toprule
				Dataset  & Model & Type & Neurons & Layers & Activation \\
				\midrule
				MNIST
				& $5 \times 100$\tnote{\ref{fot:nets}} & FC   & 510     & 5 & ReLU           \\
				& $6 \times 100$ & FC   & 600     & 6 & Tanh/Sigm      \\
				& $8 \times 100$\tnote{\ref{fot:nets}} & FC   & 810     & 8 & ReLU           \\
				& $9 \times 100$ & FC   & 900     & 9 & Tanh/Sigm      \\
				& $5 \times 200$\tnote{\ref{fot:nets}} & FC   & 1\,010  & 5 & ReLU           \\
				& $6 \times 200$ & FC   & 1\,200  & 6 & Tanh/Sigm      \\
				& $8 \times 200$\tnote{\ref{fot:nets}} & FC   & 1\,610  & 8 & ReLU           \\
				& \convsmall     & Conv & 3\,604  & 3 & Relu/Tanh/Sigm \\
				& \convbig       & Conv & 48\,064 & 6 & ReLU           \\
				\midrule
				CIFAR10
				& \convsmall     & Conv & 4\,852  & 3 & ReLU \\
				& \CNNAmix   	 & Conv & 6\,244  & 3 & ReLU \\
				& \CNNBadv     	 & Conv & 16\,634  & 3 & ReLU \\
				& \convbig       & Conv & 62\,464  & 6 & ReLU \\
				& \resnet        & Residual & 107\,496  & 10 & ReLU \\
				\midrule
				Self-Driving
				& \NVIDIA     & Conv & 107\,032  & 8 & ReLU + Tanh \\
				\bottomrule
			\end{tabular}
		\end{threeparttable}
	}
	\label{Ta:networks}
\end{wraptable}
In this section, we evaluate the effectiveness of \tool and show that it significantly improves over state-of-the-art verifiers on a range of challenging benchmarks yielding up to 14\%, 30\% and 34\% precision gains on \mbox{ReLU-,} \mbox{Sigmoid-,} and Tanh-based networks, respectively.
Further, we show that \tool can scale to real-world problems, obtaining tight bounds in an autonomous driving steering-angle-prediction task. Finally, we demonstrate the effectiveness and benefits of computing relaxations with \SBLM and \PDDM compared to directly using the exact convex hull.

\subsection{Experimental Setup}

The neural network certification benchmarks for fully connected networks were run on a 20 core 2.20GHz Intel Xeon Silver 4114 CPU with 100 GB of main memory and those for convolutional networks on a 16 Core 3.6GHz Intel i9-9900K with 64GB of main memory and an NVIDIA RTX 2080Ti. We use Gurobi 9.0 for solving MILP and LP problems \cite{gurobi2018}.

\subsection{Benchmarks}

We evaluate \tool on a wide range of networks based on ReLU, Tanh, and Sigmoid activations:%
\begin{itemize}
	\item The set of fully-connected and convolutional ReLU networks\footnote{\label{fot:nets}The networks referred to as $6\times \cdot\,00$ and $9\times \cdot\,00$ in previous work only include $5$ and $8$ hidden layers, respectively, and have therefore been renamed.} from \cite{singh2019beyond} trained using DiffAI \cite{mirman2018diffai}, PGD \cite{madry2017towards}, Wong \cite{wong2018scaling}, and natural training (see results on MNIST and CIFAR10 in \Tableref{Ta:Relu}).
	\item The published set of CIFAR10 convolutional networks from \cite{dathathri2020enabling}, trained using either just PGD or a mix of standard and PGD training (see results on CIFAR10 in \Tableref{Ta:Relu_SDP}).
	\item The set of fully-connected and convolutional Tanh and Sigmoid networks from \cite{singh2019beyond} trained using natural training (see results on MNIST in \Tableref{Ta:SCurve}).
	\item The NVIDIA self-driving car network architecture \NVIDIA \cite{bojarski2016end} trained on a steering angle prediction task using the Udacity self-driving car dataset \cite{udacity2016selfdriving} with 31\,834 train and 1\,974 test samples\footnote{The labels of the original test set are not available (anymore), so we used videos 1, 2, 5, and 6 as train and video 4 (instead of 3) as test dataset.}, an input resolution of $3 \times 66 \times 200$, and PGD \cite{madry2017towards} training (see results in \Tableref{Ta:Relu_Dave}).
\end{itemize}
While we evaluate performance for the widely considered and challenging $\ell_\infty$ perturbations\footnote{That is, $y := c(\bm{x})_i = c(\bm{x}'), \forall \bm{x}' \in \bb{B}^\infty_\epsilon := \{\bm{x} \in \bc{X} \mid ||\bm{x}-\bm{x}'||_\infty \leq \epsilon\} \Leftrightarrow \min_{\bm{x}' \in \bb{B}^\infty_\epsilon} \bm{h}(\bm{x}')_y - \bm{h}(\bm{x}')_i > 0, \forall i \neq y$}, \tool can also be applied to other specifications including individual fairness \cite{ruoss2020learning}, global safety properties \cite{katz2017reluplex}, acoustic  \cite{ryou2020fast}, geometric \cite{balunovic2019certifying}, and spatial \cite{ruoss2020efficient} based perturbations.

For classification tasks and ReLU networks, we compare \tool with a range of state-of-the-art incomplete verifiers %
notably also the ReLU-specialized \kPoly \cite{singh2019beyond}, \optcv \cite{tjandraatmadja2020convex}, and additionally the highly optimized and fully GPU-based \BCrown \cite{wang2021beta} (in incomplete mode).
For classification using Tanh and Sigmoid activations, fewer verifiers are available and thus we compare with the state-of-the-art incomplete verifier \DeepPoly \cite{singh2019abstract}.
Few verification methods consider the regression setting and to the best of our knowledge, we are the first to analyze the full-size \NVIDIA network.  \Neurify \cite{wang2018neurify} analyses a heavily scaled-down version in a binary classification setting, but in complete mode it does not scale to the much larger networks analysed here. In incomplete mode, it uses the same bounds as \DeepZono \cite{singh2018fast} and is less precise than \GPUpoly \cite{mller2021neural} to which we compare. \BCrown does not support regression tasks and while an extension might be possible, it is non-trivial. It is also unclear if the approach scales to networks of this size.

\begin{table*}[t]
	\caption{Number of verified adversarial regions of the first 1\,000 samples and runtime for \tool, \optcv\cite{tjandraatmadja2020convex}, and \kPoly\cite{singh2019beyond}. Natural (NOR), adversarial (PGD \cite{madry2017towards}), or provable (DiffAI \cite{mirman2018diffai}, Wong \cite{wong2018scaling}) training was used.}
	\centering
	{  \footnotesize
		\begin{adjustbox}{max width=\textwidth}
			\small
			\begin{threeparttable}
				\begin{tabular}{@{}llllrr@{}rrrrrrr@{}}
					\toprule
					Dataset & Model & Training & Accuracy & $\epsilon$ & $n_s$ & \multicolumn{2}{c}{\kPoly}& \multicolumn{2}{c}{\optcv$^\dagger$}& \multicolumn{2}{c}{\tool (ours)}&  \# Upper Bound \\
					\cmidrule(lr){7-8}   \cmidrule(lr){9-10}  \cmidrule(lr){11-12}
					&               &    &       & & & \hspace{5pt} \# Ver & Time & \# Ver & Time & \# Ver & Time &\\
					
					\midrule
					MNIST
					& $5 \times 100$ & NOR & 960   & 0.026 & 100  &  441 & 307 &429 & 137  & \textbf{510} & 159  & 842 \\
					& $8 \times 100$ & NOR & 947   & 0.026 & 100  &  369 & 171 &384 & 759  & \textbf{428} & 301 & 820 \\
					& $5 \times 200$ & NOR & 972   & 0.015 & 50   &  574 & 187 &601 & 403  & \textbf{690} & 224 & 901 \\
					& $8 \times 200$ & NOR & 950   & 0.015 & 50   &  506 & 464 &528 & 3451 & \textbf{612} & 395 & 911 \\
					& \convsmall     & NOR & 980   & 0.120 & 100  &  347 & 477 &436 & 55   & \textbf{640} & 51  & 733 \\
					& \convbig       & DiffAI & 929 & 0.300 & 100 &  736 & 40 &771 & 102   & \textbf{775} & 5.5  & 790 \\
					\midrule
					CIFAR10
					& \convsmall     & PGD & 630  & 2/255 & 100 &  399 & 86  & 398 & 105 & \textbf{458} & 16  & 481 \\
					& \convbig       & PGD & 631  & 2/255 & 100 &  459 & 346 & $\text{n/a}^\dagger$   & $\text{n/a}^\dagger$   & \textbf{482} & 128  & 550 \\
					& \resnet		 & Wong & 290 & 8/255 & 50 & 245 & 91  & $\text{n/a}^\dagger$  & $\text{n/a}^\dagger$   & \textbf{248} & 1.9  & 248 \\
					\bottomrule
				\end{tabular}
				\begin{tablenotes}
					\item[]{$^\dagger$The \optcv \cite{tjandraatmadja2020convex} code has not been released; we report their runtimes and results where available.}
				\end{tablenotes}
			\end{threeparttable}
		\end{adjustbox}
	}
	\label{Ta:Relu}
\end{table*}

\begin{figure*}[t]
	\centering
	\begin{subfigure}[b]{0.31\textwidth}
		\centering
		\includegraphics[width=\textwidth]{./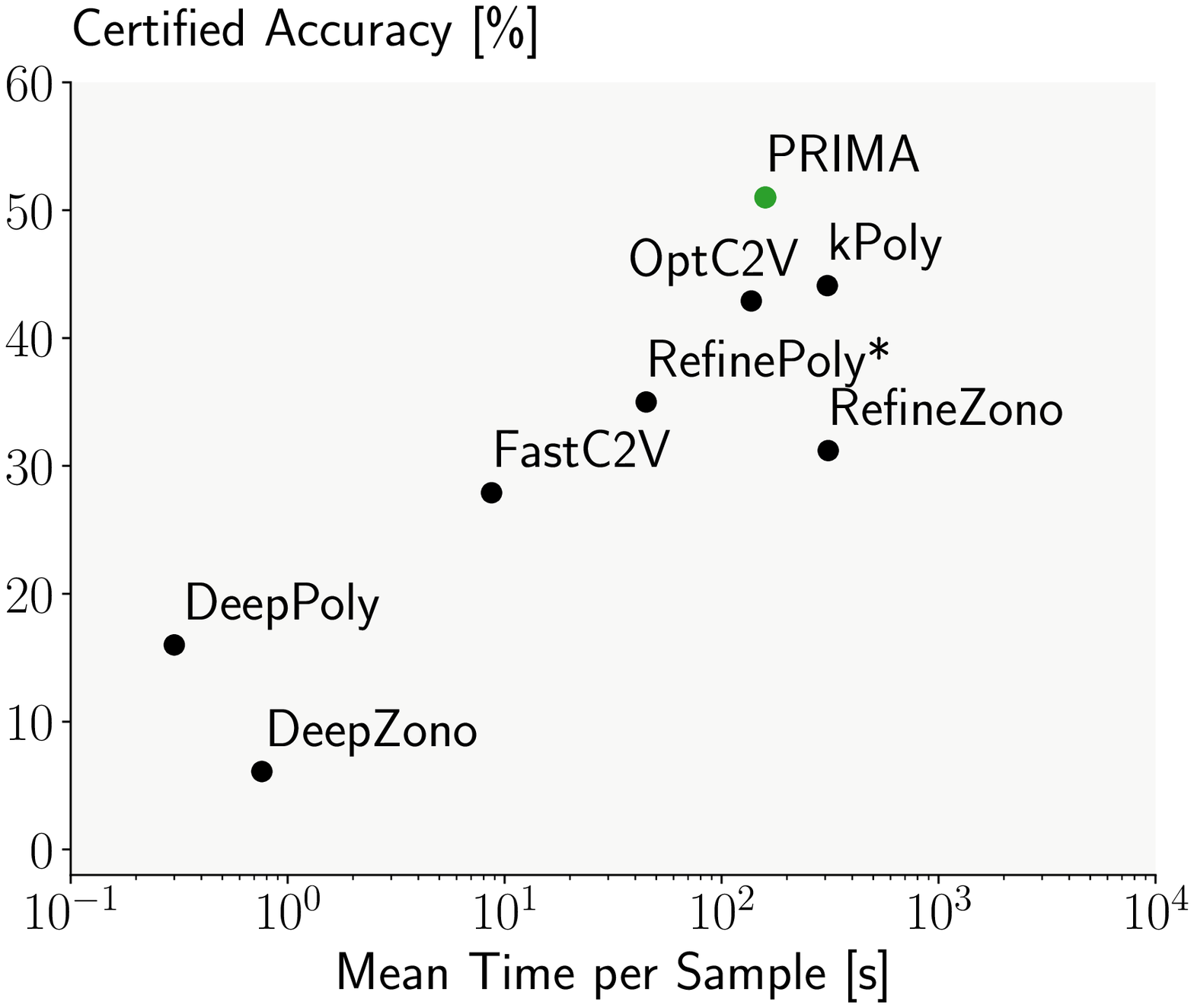}
		\vspace{-5.0mm}
		\subcaption{MNIST $5 \times 100$, $\epsilon =0.026$}
	\end{subfigure}
	\hfill
	\begin{subfigure}[b]{0.31\textwidth}
		\centering
		\includegraphics[width=\textwidth]{./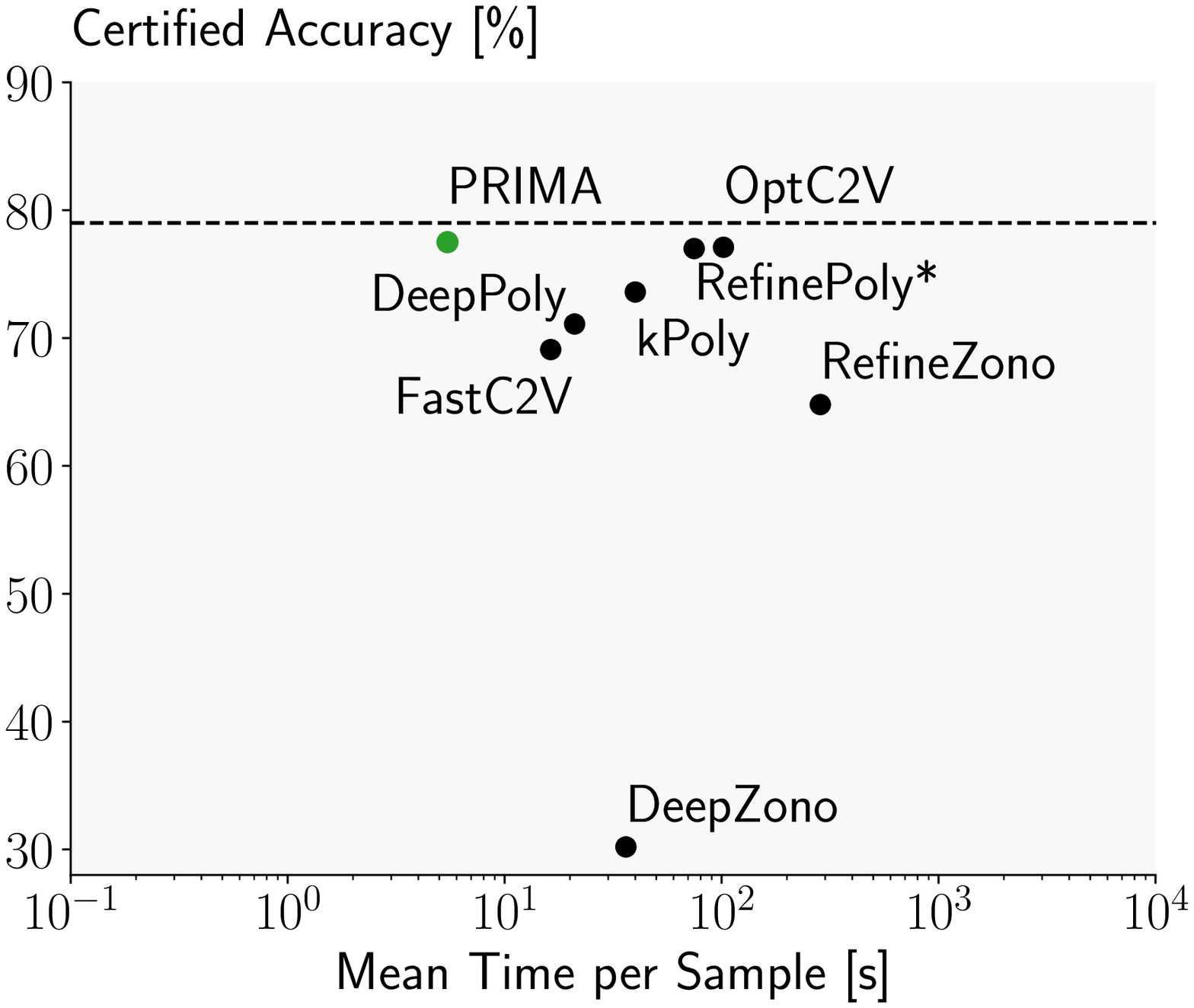}
		\vspace{-5.0mm}
		\subcaption{MNIST \convbig, $\epsilon =0.3$}
		\label{fig:method_comp_cb}
	\end{subfigure}
	\hfill
	\begin{subfigure}[b]{0.31\textwidth}
		\centering
		\includegraphics[width=\textwidth]{./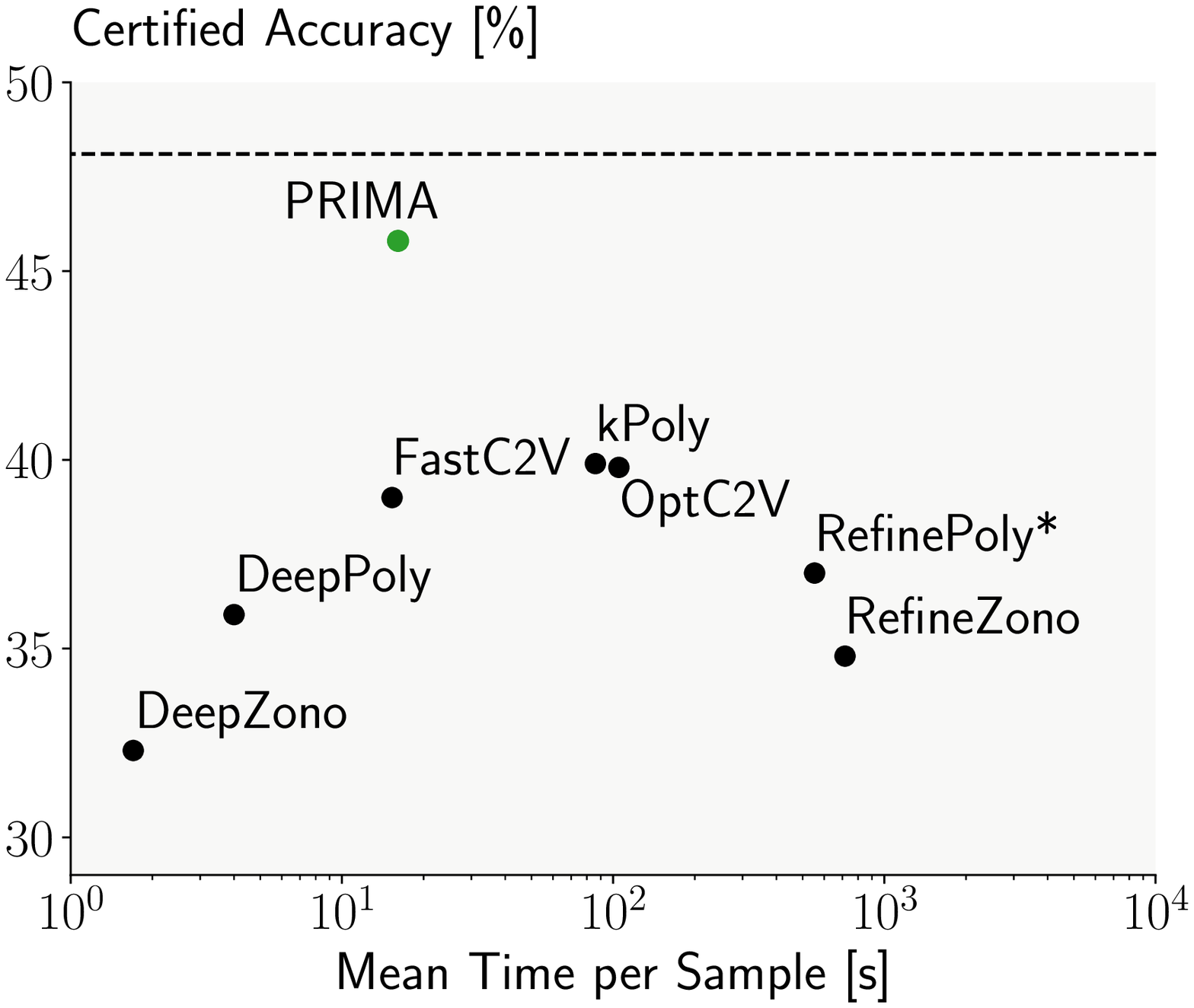}
		\vspace{-5.0mm}
		\subcaption{CIFAR10 \convsmall, $\epsilon = 2/255$}
	\end{subfigure}
	\vspace{-3mm}
	\caption{Comparison of the runtime/accuracy trade-off of \tool (ours), \optcv \cite{tjandraatmadja2020convex}, \fastcv \cite{tjandraatmadja2020convex}, \kPoly \cite{singh2019beyond}, \RefinePoly \cite{singh2019abstract}, \DeepPoly \cite{singh2019boosting} (equivalent bounds to \Crown \cite{zhang2018crown} and \CNNCert \cite{boopathy2019cnncert}), \RefineZono \cite{singh2019boosting} and \DeepZono \cite{singh2018fast} (equivalent bounds to \FastLin \cite{weng2018fastlin} and \Neurify \cite{wang2018neurify} in incomplete mode), evaluated on the first 1000 samples (100 for \RefinePoly) of the corresponding test sets. The tightest known upper bound to the certifiable accuracy is shown as dashed line. Higher and further left is better.}
	\label{fig:method_comp_small}
	\vspace{-4.5mm}
\end{figure*}

For our experiments, we use the setup outlined in \Secref{sec:framework} which is similar to \kPoly in \cite{singh2019beyond}. We use \DeepPoly or \GPUpoly (for convolutional networks) to determine the octahedral input bounds required to compute the multi-neuron constraints with \tool.
For fully-connected networks, we refine the neuron-wise bounds of unstable neurons using the MILP encoding from \citet{tjeng2017evaluating} for the second activation layer (the first layer bounds are already exact) and an LP encoding for the remaining layers.
We note that encoding more layers with MILP does not scale on these networks.
For convolutional networks, we encode some of the neurons in the last one or two layers using the MILP encoding from \cite{tjeng2017evaluating}. We note that the concurrent bound optimization in \BCrown corresponds to simultaneous bound-refinement on all neurons of all layers, which is orthogonal to our approach and a promising direction to be explored in future work (though intractable without a GPU-based LP solver).
We report as {\em Accuracy} the number of correctly classified samples out of the considered test set, as {\em \# Upper Bound} the number of properties that could not be falsified and hence form an upper bound to the number of certifiable properties, as {\em \# Ver} the number of verified regions, and as {\em Time} the average runtime per correctly classified sample in seconds.

\subsection{Image Classification with ReLU Activation}

We compare \tool against the state-of the art methods \kPoly and \optcv in \Tableref{Ta:Relu} and \BCrown in \Tableref{Ta:Relu_SDP}.
Computing multi-neuron constraints for groups of $k=4$ ReLU neurons becomes feasible with \SBLM and \PDDM reducing the time per group from several minutes, when directly computing exact convex hulls as in \kPoly, to less than $50$ milliseconds.
Nevertheless, we find empirically that the best strategy to leverage this speed-up is to evaluate a large variety of small groups. Unless reported differently, we consider overlapping groups of size $k=3$ with $n_s=100$.

\paragraph{Comparison with the state-of-the-art}
Figure~\ref{fig:method_comp_small} shows scatter plots comparing the runtime and precision of \tool with those of other state-of-the-art verifiers on the robustness certification of a normally trained $5\times 100$ MLP, a provably trained \convbig (MNIST) and an adversarially trained \convsmall (CIFAR10).
We note that adversarially and provably trained networks sacrifice accuracy for ease of certification, making normally trained networks more relevant and challenging. Here, fast, purely propagation-based, incomplete verifiers like \DeepPoly verify only about $16\%$ of the images. In contrast, \tool verifies $51\%$ in $< 160$ seconds per image.
The closest verifiers in terms of precision are \kPoly and \optcv, which verify $44\%$ and $43\%$ of samples and take around $310$ and $140$ seconds, respectively.
Based on these observations, we compare \tool with \kPoly and \optcv on the remaining benchmarks from \cite{singh2019beyond}.

\begin{wraptable}[11]{r}{0.55\textwidth}
	\vspace{-3.5mm}
	\caption{Number of verified adversarial regions of the 100 random samples from the CIFAR10 test set evaluated by \cite{wang2021beta}. \CNNAmix is trained using a combination of adversarial and natural training and \CNNBadv only adversarially. Both are taken from \cite{dathathri2020enabling}.}
	\vspace{-1.5mm}
	\centering
	{  \footnotesize
		\begin{adjustbox}{max width=0.98\linewidth}
			\small
			\begin{threeparttable}
				\begin{tabular}{@{}lllrrrrrrr@{}}
					\toprule
					Model & $\epsilon$ &  Acc &  \multicolumn{2}{c}{\BCrown} & \multicolumn{2}{c}{\tool (ours)}&  \# Bound  \\
					\cmidrule(rl){4-5} \cmidrule(lr){6-7}
					& & & \# Ver &Time \hspace{3pt}& \# Ver & Time \hspace{3pt}& \\
					\midrule
					\CNNAmix     & 2/255 & 100 & 43 & 209\hspace{3pt} & \textbf{57} & 53 \hspace{3pt} & 68 \\
					\CNNBadv     & 2/255 & 100 & \textbf{46} & 234\hspace{3pt} & 43 & 260\hspace{3pt}  & 81 \\
					
					\bottomrule
				\end{tabular}
			\end{threeparttable}
		\end{adjustbox}
	}
	\label{Ta:Relu_SDP}
\end{wraptable}
\paragraph{Comparison with \kPoly and \optcv}
For all normally trained networks, \tool is significantly more accurate than both \kPoly \cite{singh2019beyond} and \optcv \cite{tjandraatmadja2020convex}, verifying between $44$ and $201$ more regions than the better of the two while sometimes also being significantly faster. These results are summarized in~\Tableref{Ta:Relu}.
For the, comparatively easy to verify (as can be seen in \Figref{fig:method_comp_small}(b)), DiffAI trained \convbig MNIST network, we gain less precision verifying only $4$ more regions than \optcv.
However, the easier proofs come at the cost of reduced accuracy, making them less relevant for real-world applications.
For both PGD-trained CIFAR10 networks, \tool verifies between $23$ and $59$ more regions than \kPoly and \optcv while being around four times faster.
On the provably trained \resnet, \tool is $50$x faster than \kPoly and able to decide all properties. However, this network is so heavily regularized that even complete verification via a MILP encoding is tractable.
In summary, \tool is usually faster than \kPoly and \optcv, especially on larger networks, and is always more precise, sometimes substantially so.

\begin{wraptable}[22]{r}{0.52\textwidth}
	\vspace{-4.7mm}
	\caption{Evaluation of a range of parameters for grouping set size $n_s$, group size $k$, and overlap $s$, partial MILP refinement, and neuron-wise bound refinement for the first $100$ samples of the \mnist test set and the normally trained $5 \times 100$. Of the first $100$ samples, $99$ are classified correctly and for $9$ of those a counterexample is known.}
	\vspace{-2mm}
	\centering
	{\scalebox{0.79}{
			\small
			\renewcommand{\arraystretch}{1.1}
			\begin{threeparttable}
				\begin{tabular}{rccrrccrr}
					\toprule
					\multirow{2.2}{*}{$n_s$} & \multirow{2.2}{*}{$k$} & \multirow{2.2}{*}{$s$} & \multicolumn{2}{c}{Partial MILP} & \multicolumn{2}{c}{Refinement} & \multirow{2.2}{*}{\# Ver} & \multirow{2.2}{*}{Time [s]} \\
					\cmidrule(lr){4-5}   \cmidrule(lr){6-7}
					& & & \# layers & \# neurons & LP & MILP & & \\
					\midrule
					1   & 1 & - & - & -   & - & - & 21  & 2.56   \\
					10  & 3 & 1 & - & -   & - & - & 26  & 5.75   \\
					20  & 3 & 1 & - & -   & - & - & 28  & 6.52   \\
					20  & 3 & 2 & - & -   & - & - & 28  & 67.79  \\
					20  & 4 & 1 & - & -   & - & - & 28  & 54.05  \\
					100 & 3 & 1 & - & -   & - & - & 28  & 16.59  \\
					\cmidrule(lr){1-3}
					1   & 1 & - & 1 & 30  & - & - & 23  & 4.58   \\
					100 & 3 & 1 & 1 & 30  & - & - & 30  & 42.00  \\
					100 & 3 & 1 & 1 & 100 & - & - & 30  & 44.03  \\
					100 & 3 & 1 & 2 & 100 & - & - & 35  & 117.37 \\
					\cmidrule(lr){1-5}
					1   & 1 & - & - & -   & y & - & 27 & 24.15   \\
					100 & 3 & 1 & - & -   & y & - & 45 & 99.40   \\
					100 & 3 & 1 & - & -   & y & y & 54 & 115.24  \\
					100 & 3 & 1 & 2 & 100 & y & y & 60 & 189.21 \\
					\bottomrule
				\end{tabular}
			\end{threeparttable}
	}}
	\label{tab:param_study}
	\vspace{-5mm}
\end{wraptable}

\paragraph{Comparison with \BCrown}
\mbox{\BCrown} \cite{wang2021beta} is a
highly optimized, fully GPU-based complete BaB \cite{morrison2016branch}
solver, supporting only ReLU activations\footnote{Extensions to
piecewise-linear activations with more than $m=2$ linear regions would
significantly increase runtime ($\bc{O}(m^d)$ with split depth $d$), while
precision would be significantly lower for non-piecewise linear activations.}
and the classification setting. When comparing complete and incomplete
verifiers on accuracy, it is crucial to ensure that similar runtimes were
achieved, as complete verifiers can, given sufficient time, decide any
property. The GPU-based LP solver underlying \BCrown is an orthogonal
development to the \tool multi-neuron constraints. \tool currently uses a much slower CPU-based solver which is the main bottleneck for large networks as the runtime for computing multi-neuron constraints becomes small via our improved algorithms (see \Secref{sec:SBLM_PDDM_eval}). We consider combining the GPU-based solver from \BCrown with our multi-neuron approximations as an interesting item for future work.
Despite the discrepancy in LP-solver performance distorting the comparison, \tool is still significantly faster on \CNNAmix while also achieving notably higher precision. On the larger network \CNNBadv, where LP-solver performance is more dominant, \BCrown achieves slightly higher precision and smaller runtime.
Unfortunately, we could not run the public version of \BCrown without soundness issues on the networks from \cite{singh2019beyond} and consequently only compare on networks they provide.
The recent SDP-based (semidefinite programming) \SDPFO \cite{dathathri2020enabling} takes many hours per sample and is outperformed by \BCrown. Thus we do not compare to it directly.

\subsection{Parameter Study}

In \Tableref{tab:param_study}, we compare the effect of different parameter combinations on runtime and accuracy for the $5\times 100$ MLP, which allows also more expensive settings to be evaluated while still representing a challenging verification problem with $\epsilon = 0.026$.
Using the single-neuron triangle relaxation ($k=1$) only $21$ regions can be verified.
Adding our multi-neuron constraints with partition sizes of $n_s=10$ and $n_s=20$ increases this to $26$ and $28$ regions, respectively.
Neither considering a larger overlap ($s=2$), nor larger groups ($k=4$), nor larger partition sizes ($n_s=100$) can increase the number of verified regions, despite significantly increased the runtime.
While using triangle relaxations with a partial MILP encoding is relatively fast it also only increases the accuracy to $23$ regions. In contrast, combining a partial MILP encoding with multi-neuron constraints yields, depending on the exact setting, an almost $75\%$ increase to $35$ verified regions, although at the price of increased runtime.
Refining the neuron-wise bounds using a triangle relaxation and LP encoding only improves the number of verified regions to $27$, while additionally using multi-neuron constraints yields a significant jump to $45$. This further improves to $54$ when using MILP to refine the second layer bounds and $60$ when additionally encoding the last two layers with MILP.
The significant increase in precision when combining tight multi-neuron constraints computed via \SBLM and \PDDM with other methods demonstrates their utility and highlights the potential of our abstraction-refinement-based approach.

\subsection{Effect of Grouping Strategy}
\begin{wrapfigure}[15]{r}{0.37\textwidth}
	\centering
	\vspace{-8.5mm}
	\includegraphics[width=1.0\linewidth]{./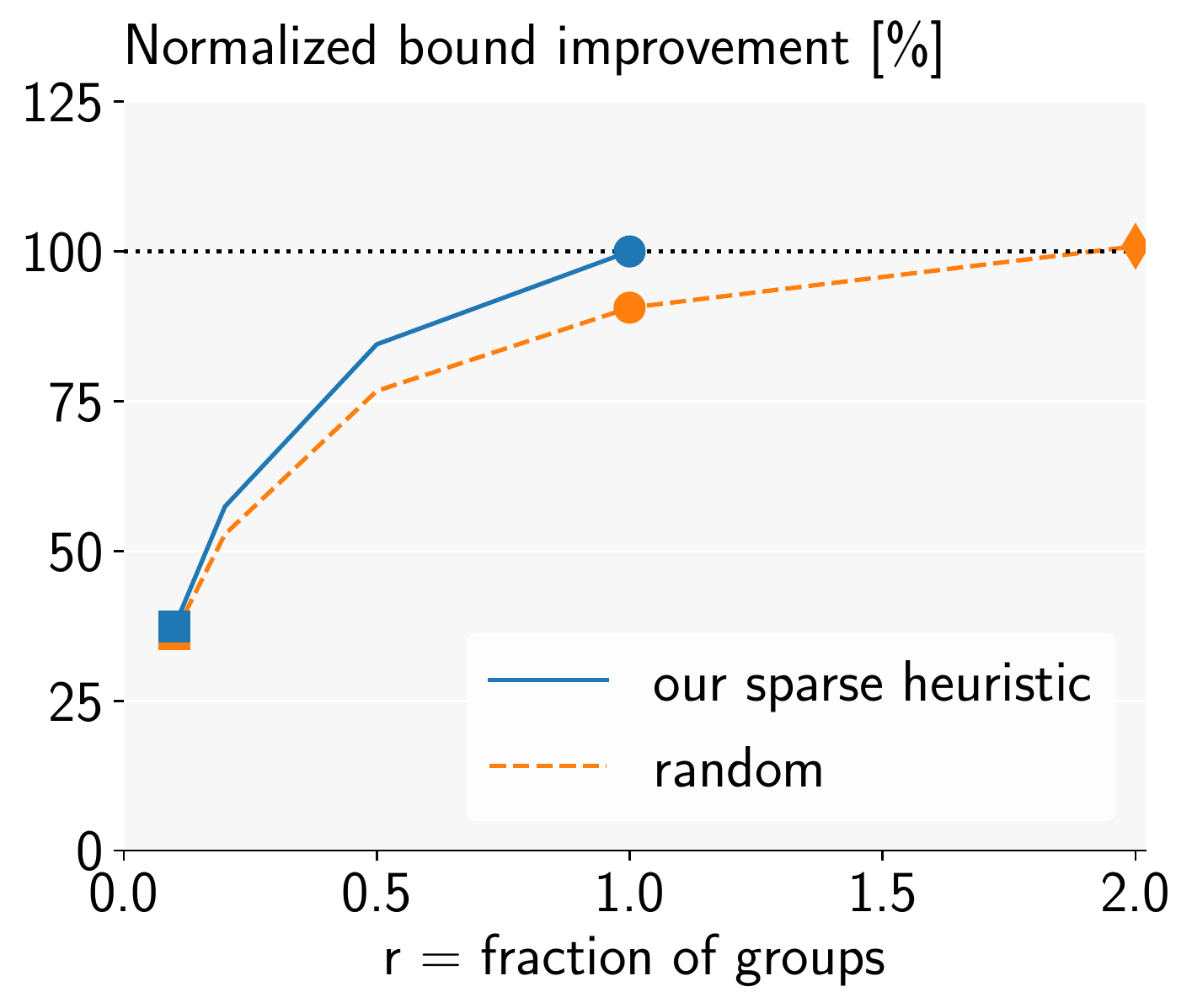}
	\vspace{-7mm}
	\caption{Normalized bound improvement over the fraction of groups used to compute multi-neuron constraints, $r$. Our method is the blue circle, whose gain is normalized to $100$\%.}
	\label{fig:grouping_strategy}
\end{wrapfigure}
We evaluate the sensitivity of \tool to the chosen neuron groupings, by comparing the performance\footnote{Concretely, we compare the obtained improvement of $\underline{h_{y,i}}$, the lower bound to the optimization objective $\min_{\bm{x}' \in \bb{B}^\infty_\epsilon} \bm{h}(\bm{x}')_y - \bm{h}(\bm{x}')_i$, over the triangle relaxation ($\Delta$) normalized using our standard sparse heuristic ($\tool$): $({\underline{h_{y,i}} - \underline{h_{y,i}^\Delta}})/({\underline{h_{y,i}^{\tool}} - \underline{h_{y,i}^\Delta}})$} of random groups with those generated by our sparse grouping heuristic in \Figref{fig:grouping_strategy} for the first $100$ test images of \cifar and the \convsmall network.
Concretely, we first generate a deterministic sparse grouping with our heuristic for a group size of $k=3$, a partition size of $n_s=100$, and a maximum overlap of $s=1$. Then we (randomly) reduce this grouping to a fraction $r$ (x-axis in \Figref{fig:grouping_strategy}) of the original number of groups. The random groupings are generated to have the same size (number of groups) by repeatedly drawing $k$ indices uniformly at random and rejecting duplicates.

We observe that considering fewer groups from our heuristic (blue in \Figref{fig:grouping_strategy}) reduces the bound improvement notably, e.g., to $37$\% at $r=0.1$ (blue square). Choosing random groups (orange in \Figref{fig:grouping_strategy}) is consistently worse (vertical gap in \Figref{fig:grouping_strategy}); by around $10$\% at $r=1.0$ (circles) closing to $3.4$\% at $r=0.1$ (squares). 
While our heuristic generates groups with small overlap to evenly cover all neurons, random sampling can lead to some groups with large overlap, while potentially not covering some neurons at all, leading to worse performance. Considering fewer groups makes overlaps between groups less likely, making the groupings resulting from the two sampling strategies more similar and explaining the shrinking performance gap. 
To obtain the same precision with random groups as with our heuristic, about twice as many ($r=2.0$, diamond) groups are needed (horizontal gap in \Figref{fig:grouping_strategy}). We repeated these experiments several times with different random seeds and obtained consistent results.

Overall, we conclude that while our heuristic consistently outperforms random groups, \tool is relatively insensitive to the exact groupings, as long as sufficiently many are used.

\subsection{Image Classification with Tanh and Sigmoid Activations}

\begin{wraptable}[16]{r}{0.52\textwidth}
	\vspace{-4.5mm}
	\caption{Number of verified adversarial regions and runtime in seconds of \tool vs. \DeepPoly for Tanh/Sigmoid on 100 images from the MNIST dataset.}
	\vspace{-2mm}
	\centering
	{  \footnotesize
		\scalebox{0.9}{
			\begin{tabular}{@{}lllrr@{}rrrr@{}}
				\toprule
				Act. & Model & Acc. & $\epsilon$ && \multicolumn{2}{c}{\DeepPoly} & \multicolumn{2}{c}{\tool} \\
				\cmidrule(lr){5-7}     \cmidrule(l){8-9}
				& 				 &    &       & & Ver. & Time & Ver. & Time \\
				\midrule
				Tanh
				& $6 \times 100$ & 97 & 0.006 & & 38 & 0.3 & \textbf{61} & 72.5 \\
				& $9 \times 100$ & 98 & 0.006 & & 18 & 0.4 & \textbf{52} & 186.0 \\
				& $6 \times 200$ & 98 & 0.002 & & 39 & 0.6 & \textbf{68} & 170.0 \\
				& \convsmall 		  & 99 & 0.005 & & 16 & 0.4 & \textbf{30} & 27.8 \\
				\midrule
				Sigm
				& $6 \times 100$ & 99 & 0.015 & & 30 & 0.3 & \textbf{53} & 96.9 \\
				& $9 \times 100$ & 99 & 0.015 & & 38 & 0.5 & \textbf{56} & 336.4 \\
				& $6 \times 200$ & 99 & 0.012 & & 43 & 1.0 & \textbf{73} & 267.0 \\
				& \convsmall 		   & 99 & 0.014 & & 30 & 0.5 & \textbf{51} & 47.0 \\
				\bottomrule
			\end{tabular}
		}
	}
	\label{Ta:SCurve}
\end{wraptable}
While using the exact convex hull algorithm for ReLU relaxations is merely slow, it becomes infeasible for non-piecewise-linear activations such as Tanh and Sigmoid. Computing the constraints for a single group of $k=3$ neurons can take minutes using direct exact convex hull computation, whereas \SBLM using \PDDM takes only $10$ milliseconds.
This dramatic speed-up is a result of \SBLM's decompositional approach of solving the problem in lower dimensions (see~\Secref{sec:FastPoly}), significantly reducing its complexity. Note that both methods compute only approximations of the optimal group-wise convex relaxation for these cases, as the underlying interval-wise bounds are not exact.

We evaluate our method on normally trained, fully-connected and convolutional networks for the \mnist dataset. We choose an $\epsilon$ for the $B^\infty_\epsilon$ region such that the state-of-the-art verifier for Tanh and Sigmoid activations, \DeepPoly, verifies less than 50\% of the regions. We remark that \DeepPoly is based on the same principles and has similar precision as other state-of-the-art verifiers for these activations such as \CNNCert \cite{boopathy2019cnncert} and \Crown \cite{zhang2018crown}.

We use overlapping groups with $n_s=10$ and again refine neuron-wise lower- and upper-bounds for fully-connected networks.
We verify between $14\%$ and $34\%$ more regions than the current state-of-the-art, in some cases doubling the number of verified samples, while maintaining a reasonable runtime comparable to that for ReLU networks (see \Tableref{Ta:SCurve}).

\subsection{Autonomous Driving}
\begin{wraptable}[11]{r}{0.54\textwidth}
	\vspace{-5.5mm}
	\caption{Standard (std.), empirically maximal (emp.) and certifiably maximal (cert.) mean absolute steering angle error (MAE) (smaller is better) for \tool vs. \GPUpoly evaluated on every 20$^{th}$ sample and mean evaluation time.}
	\vspace{-3.5mm}
	\centering
	{  \footnotesize
		\begin{adjustbox}{max width=\linewidth}
			\small
			\begin{tabular}{@{}llrrrrr@{}}
				\toprule
				$\epsilon$ & Method & \makecell{std. \\MAE} & \makecell{emp. \\MAE} & \makecell{cert. \\MAE} & \makecell{cert. \\Width} & Time [s] \\
				\midrule
				$1/255$
				& \GPUpoly & 7.37° & 9.41° & 10.35° & 5.75° &  1.55 \\
				& \tool    & 7.37° & 9.41° & \textbf{10.17°} & \textbf{5.30°} & 154.2 \\
				\midrule
				$2/255$
				& \GPUpoly & 7.37° & 11.46° & 18.35° & 19.63° &  2.41 \\				
				& \tool    & 7.37° & 11.46° & \textbf{17.05°} & \textbf{17.03°} & 239.5 \\ %
				\bottomrule
			\end{tabular}
		\end{adjustbox}
	}
	\label{Ta:Relu_Dave}
\end{wraptable}
We evaluate \tool in the setting of autonomous driving, deriving upper and lower bounds to the predicted steering angle under an $\ell_\infty$ threat-model in a regression setting. We thereby demonstrate scalability to large networks ($>100$k neurons and over $27$ million connections) and inputs ($3\times66\times200$) of real-world relevance. We report the certified maximum absolute steering angle error and the width of reachable steering angles. We use PGD \cite{madry2017towards} to compute empirical bounds (emp). We use the CNN architecture proposed by \citet{bojarski2016end} and adversarial training \cite{madry2017towards} on the Udacity autonomous driving dataset \cite{udacity2016selfdriving} to obtain the network evaluated here.
\begin{wrapfigure}[14]{r}{0.50\textwidth}
	\centering
	\vspace{-4.5mm}
	\begin{subfigure}[b]{1.0\linewidth}
		\centering
		\vspace{-0.7cm}
		\includegraphics[width=1.0\textwidth]{./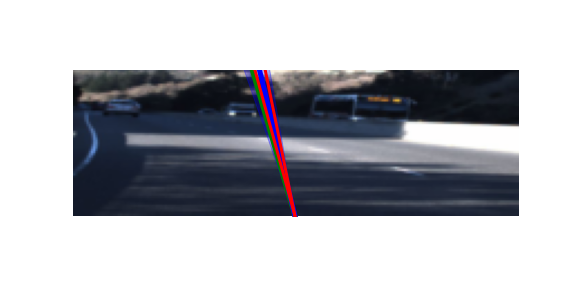}
		\vspace{-1.3cm}
	\end{subfigure}
	\begin{subfigure}[b]{1.0\linewidth}
		\centering
		\vspace{-0.7cm}
		\includegraphics[width=1.0\textwidth]{./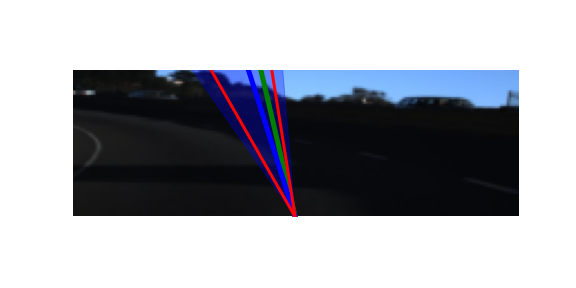}
		\vspace{-1.5cm}
	\end{subfigure}
	\vspace{-4mm}
	\caption{Samples from the self-driving car dataset. The target steering angle is illustrated in green, the predicted one in blue. The empirical bounds for $\eps = 2/255$ are shown in red and the certified range is shaded blue.}
	\label{fig:DAVE}
\end{wrapfigure}

When the permissible perturbation size is small and the standard error of the model is larger than the perturbation effect, cheaper methods such as \GPUpoly already yield good results. However, for larger perturbations, \tool reduces the gap between empirical and certified error around $20$\% (see \Tableref{Ta:Relu_Dave}). In \Figref{fig:DAVE}, we show two representative samples, where the certified steering angle range for $\eps = 2/255$ is shaded blue, the empirical bounds on the steering angle are shown in red, the target in green and the prediction on the unperturbed sample in blue.
Qualitatively, we find that while the network often still performs well on unperturbed samples with poor lighting or contrast (see lower example in \Figref{fig:DAVE}) the sensitivity to perturbations and consequently the width of the reachable steering angle range is much larger than for samples in better conditions (see upper example in \Figref{fig:DAVE}).

\begin{figure*}[t]
	\centering
	\vspace{2mm}
	\begin{subfigure}[t]{0.31\textwidth}
		\centering
		\includegraphics[width=1.05\textwidth]{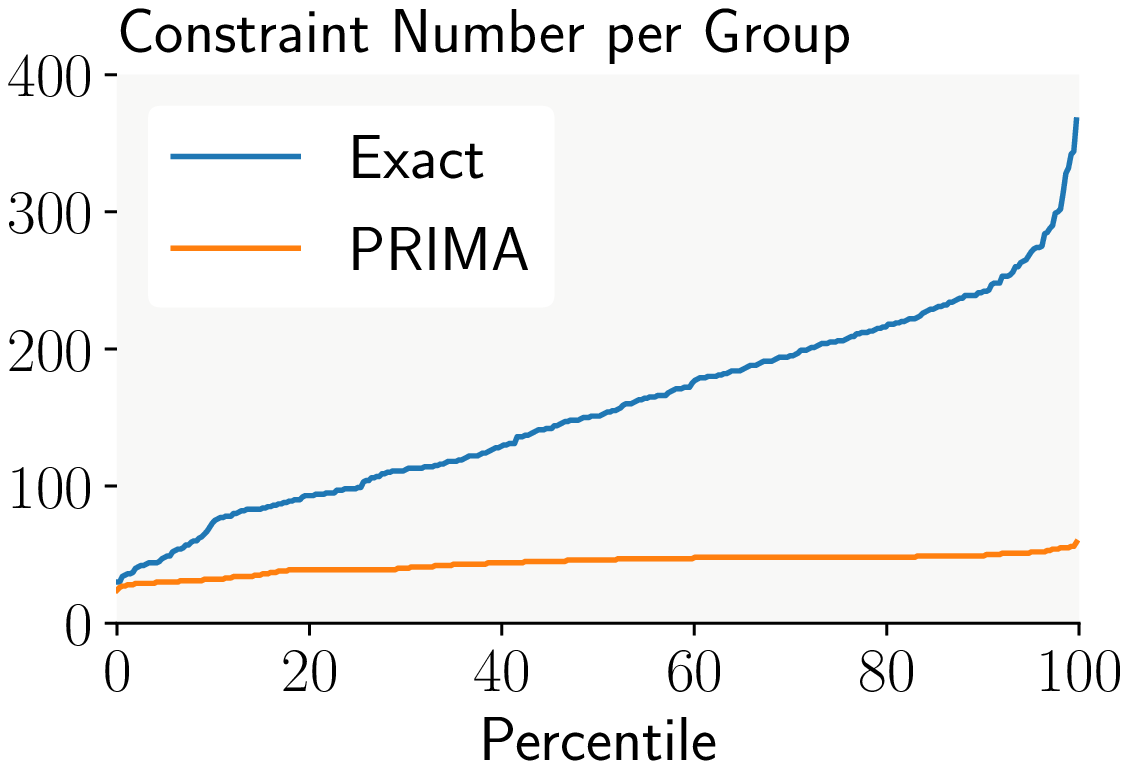}
		\subcaption{Number of constraints for individual $k$-neuron abstractions.}
		\label{fig:nconstraints}
	\end{subfigure}
	\hfill
	\begin{subfigure}[t]{0.31\textwidth}
		\centering
		\includegraphics[width=1.05\textwidth]{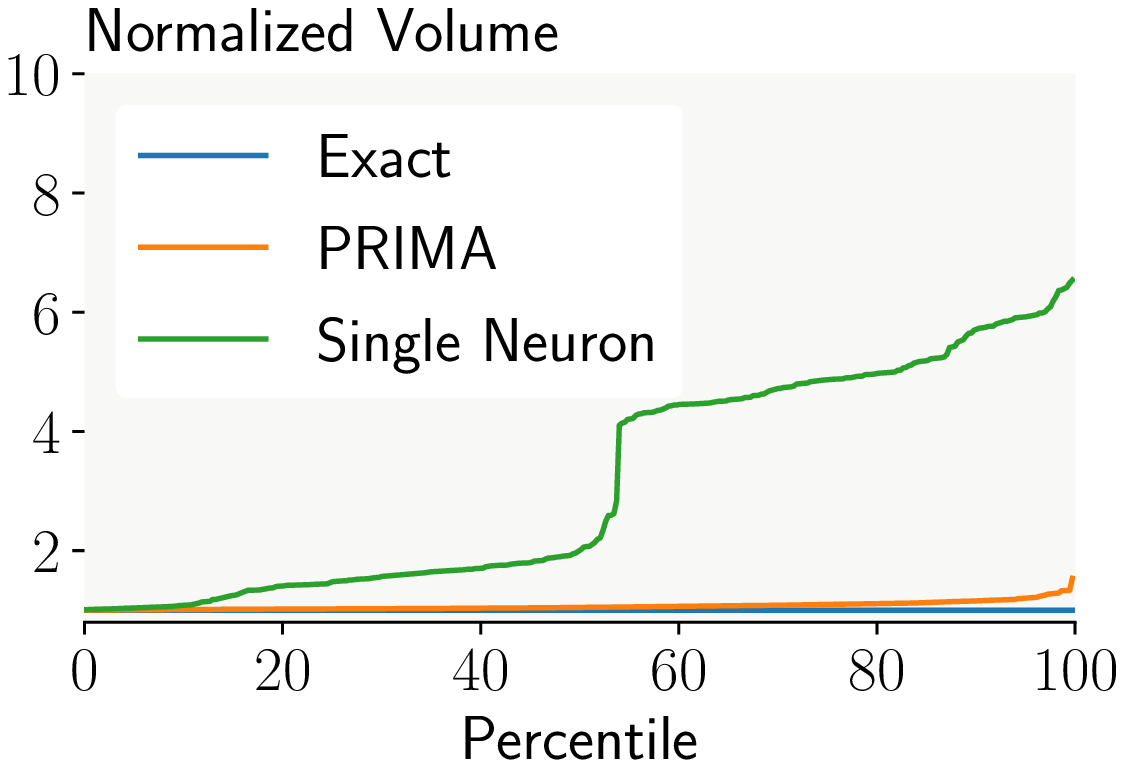}
		\subcaption{Volume of \tool and single-neuron constraint polytopes compared to the exact convex hull.}
		\label{fig:volume}
	\end{subfigure}
	\hfill
	\begin{subfigure}[t]{0.31\textwidth}
		\centering
		\includegraphics[width=1.05\textwidth]{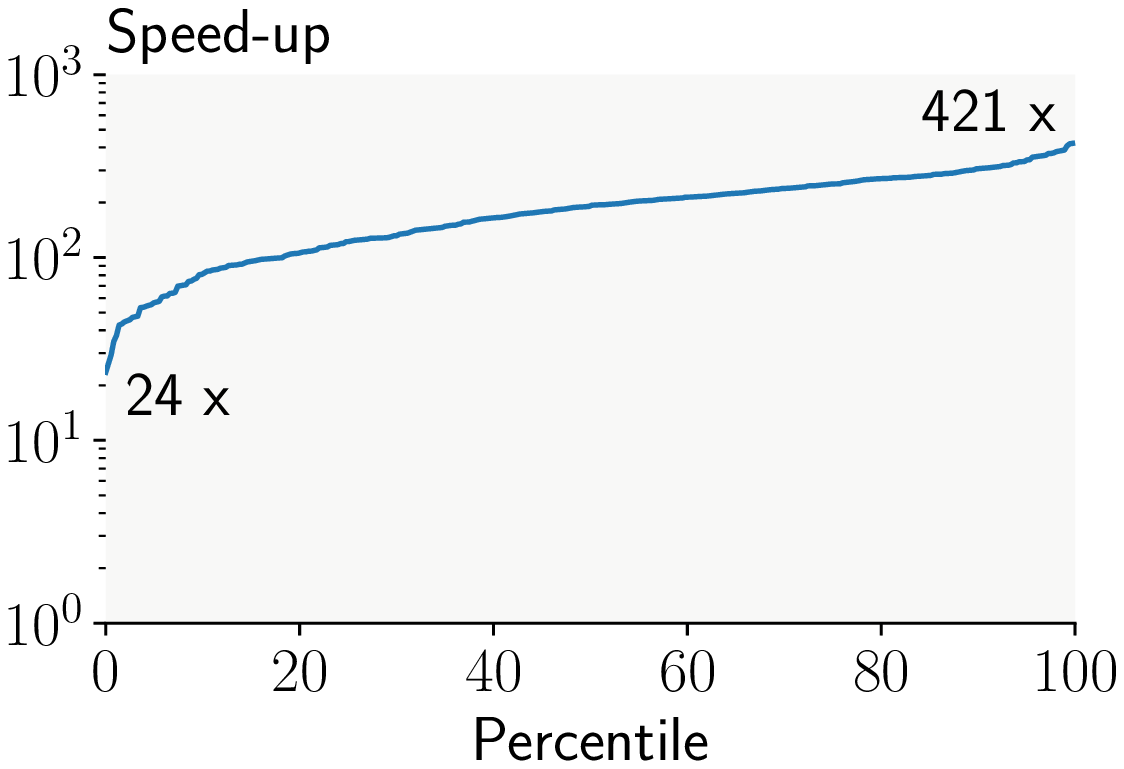}
		\subcaption{Speedup of constraint computation using \SBLM and \PDD compared to an exact convex hull.}
		\label{fig:speedup}
	\end{subfigure}
	\vspace{-3mm}
	\caption{Case study: Analysis of the distribution of the number of discovered constraints, abstraction volume, and runtime over all ($\approx 360$) individual $3$-neuron groups processed during the verification of a single MNIST image on the $5\times 100$ ReLU network.}
	\label{fig:casestudy}
\end{figure*}

\begin{figure*}[t]
	\centering
	\begin{subfigure}[t]{0.32\textwidth}
		\centering
		\includegraphics[width=\textwidth]{./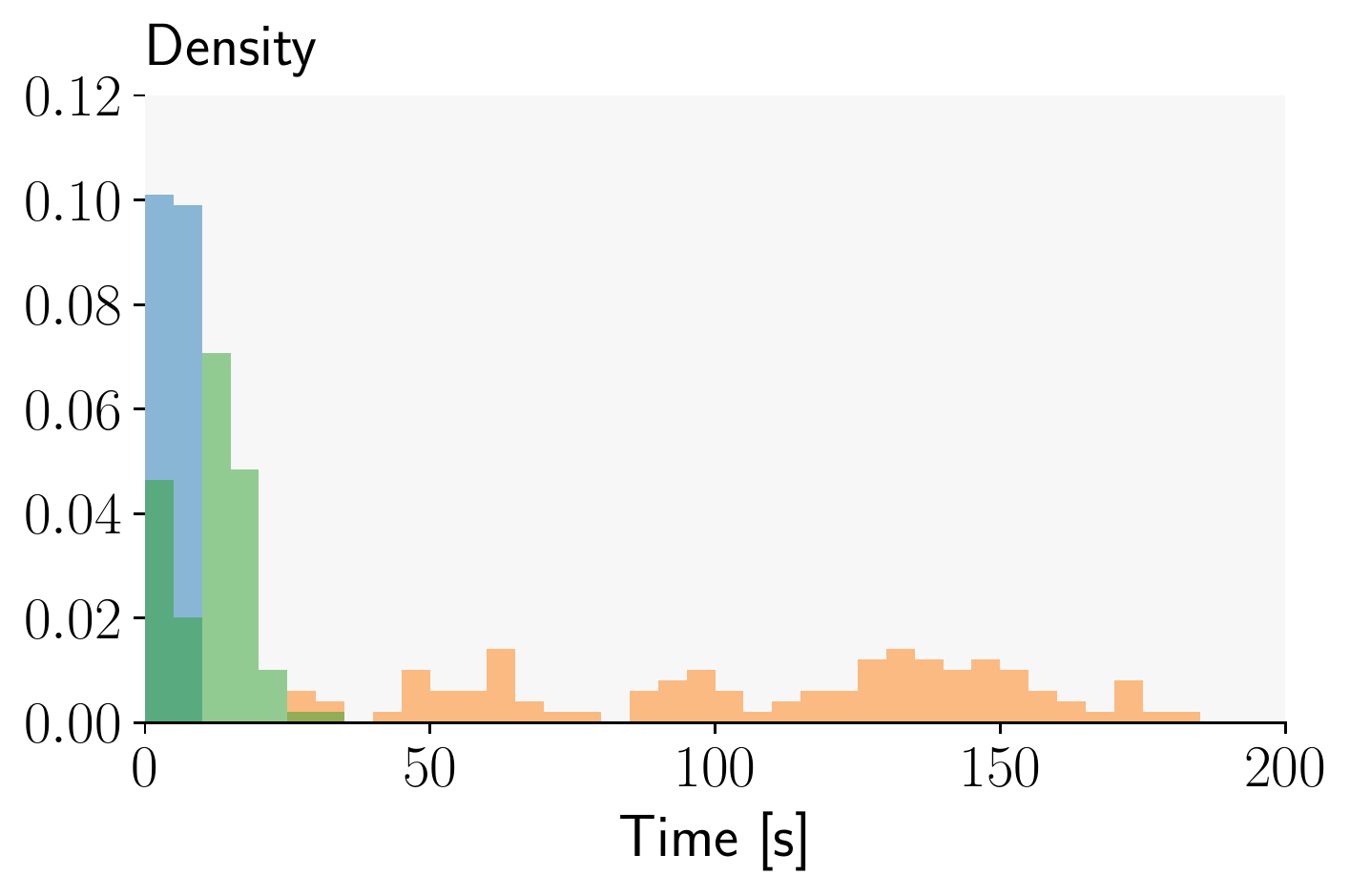}
		\vspace{-6mm}
		\subcaption{\scriptsize Exact -- MNIST $5 \times 100$}
	\end{subfigure}
	\hfill
	\begin{subfigure}[t]{0.32\textwidth}
		\centering
		\includegraphics[width=\textwidth]{./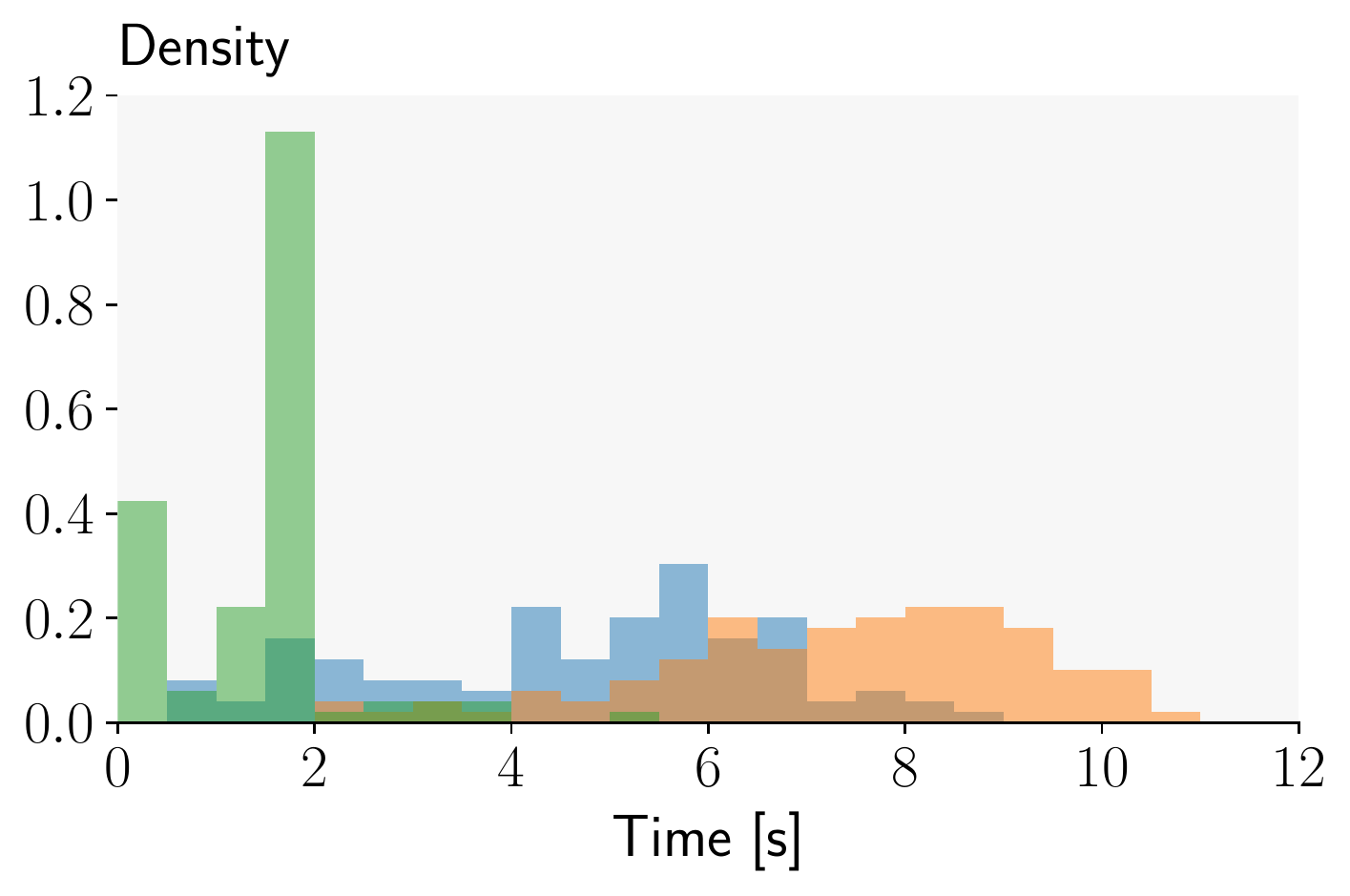}
		\vspace{-6mm}
		\subcaption{\scriptsize SBLM + PDDM -- MNIST $5 \times 100$}
	\end{subfigure}
	\hfill
	\begin{subfigure}[t]{0.32\textwidth}
		\centering
		\includegraphics[width=\textwidth]{./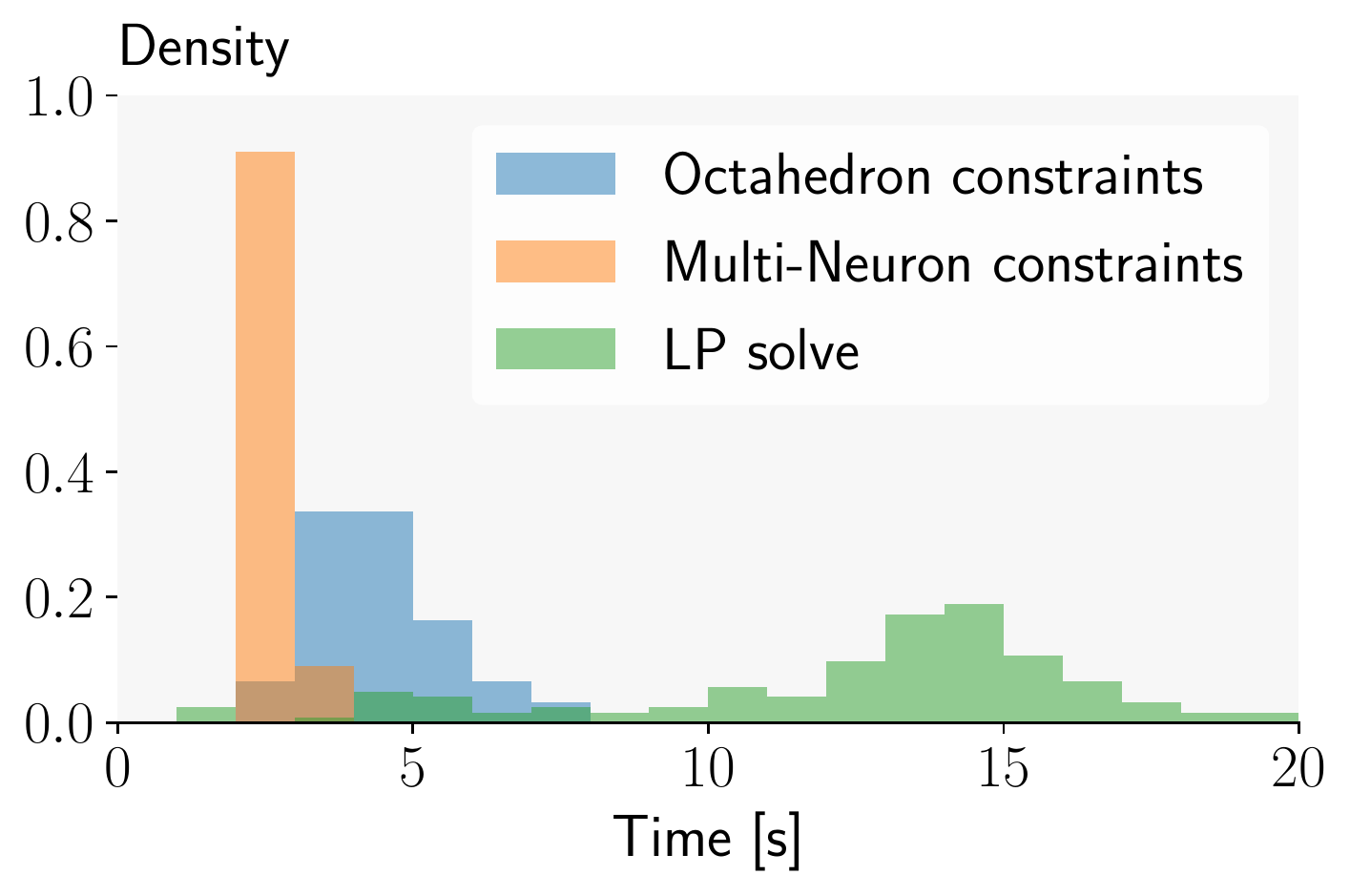}
		\vspace{-6mm}
		\subcaption{\scriptsize SBLM + PDDM -- CIFAR10 \convsmall}
	\end{subfigure}
	\vspace{-2mm}
	\caption{Comparison of the runtime contribution of the octahedral input constraint computation, multi-neuron constraint computation and LP solve.}%
	\label{fig:runtime_comp}
	\vspace{-1mm}
\end{figure*}

\subsection{Effectiveness of \SBLM and \PDDM for Convex Hull Computations} \label{sec:SBLM_PDDM_eval}
Computing approximations with \SBLM using \PDDM has two main advantages compared to the direct convex hull approach: It is significantly faster and produces fewer constraints, making the resulting LP easier to solve, while barely losing any precision.

For example, verifying the $5\times 100$ network with \tool and comparing abstractions for groups of $k=3$ computed with \SBLM and \PDDM or naively and neuron-wise triangle relaxation (Figure \ref{fig:casestudy}), we observe the following:
Using \SBLM and \PDDM we reduce the mean number of constraints computed per neuron-group by over $70\%$ from $156$ to $44$ significantly reducing the number of constraints in the resulting LP, as many hundred such neuron groups are considered.
The mean volume of the constraint polytopes defined by these constraints in the $6$-dimensional input-output space of the individual neuron groups, meanwhile, is only around $5\%$ larger. Single neuron constraints, in contrast, yield $4$-times larger volumes. Additionally, computing the approximate constraints is about $200$ times faster than the exact convex hull.

Not only are \tool constraints faster to generate and allow the verification of the same properties, but a runtime analysis for the first $100$ samples (illustrated in~\Figref{fig:runtime}) shows that they also speed up the final LP solve $8$-fold compared to the naive approach, as significantly fewer constraints have to be considered.
This effect is also observed in the time-intensive neuron-wise bound-refinement where \tool constraints reduce the runtime by $70$\% while allowing $3$ additional regions to be verified. This can be explained by the fewer but more diverse \tool constraints also speeding up the final LP solve in the refinement step reducing the number of timeouts and allowing tighter neuron-wise bounds to be computed.
Using neuron-refinement with \tool is in fact still quicker than the naive approach without any refinement, while almost verifying twice as many samples.
\SBLM combined with exact convex hulls computations already yields a small speed-up of around $20\%$, but the synergy with \PDDM is key to unlock its full potential.

\begin{figure}
	\centering
	\vspace{-2.5mm}
	\includegraphics[width=0.68\linewidth]{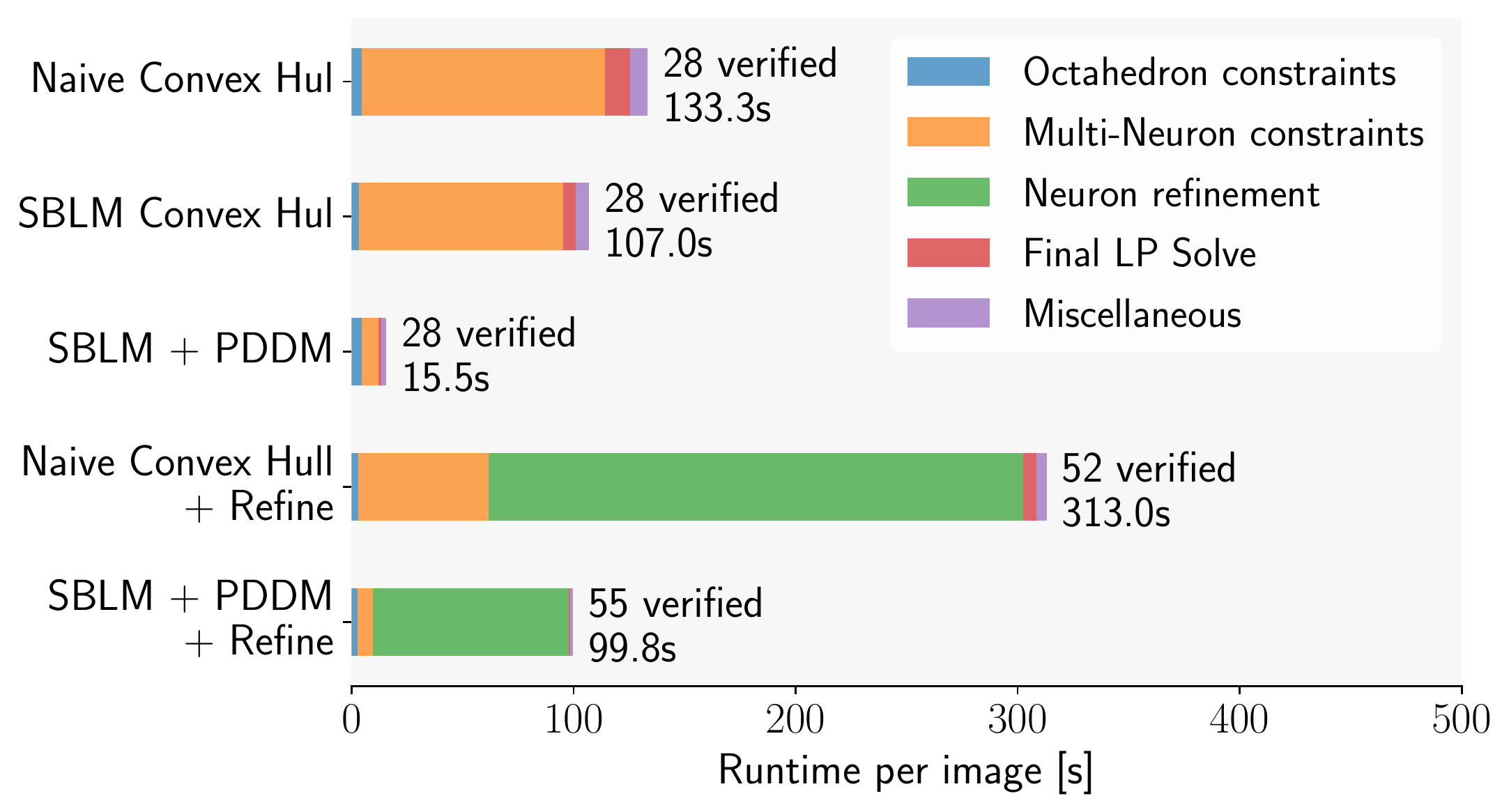}
	\vspace{-2mm}
	\caption{Runtime comparison of using \SBLM vs. exact convex hull for computing relaxations in \tool. Evaluated on 100 images and the MNIST $5 \times 100$ ReLU network.}
	\label{fig:runtime}
	\vspace{-2.5mm}
\end{figure}
An analysis of the runtime contributions of the octahedral input constraint computation, the multi-neuron constraint computation and the final LP solve (illustrated in \Figref{fig:runtime_comp}), shows the following:
Using the naive approach, the multi-neuron constraint computation clearly dominates the runtime, while only contributing around $50$\% when using \SBLM and \PDDM.
For larger networks, the input constraint computation and LP-solve become more expensive, reducing the multi-neuron constraints computation runtime contribution further and further, e.g., $7\%$ for the \cifar \convsmall, and shifting the performance bottleneck to the LP-solver, especially when neuron-wise bound-refinement or partial MILP encodings are used.

	\section{Related Work}

The importance of certifying the robustness of neural networks to input perturbations has
created a surge of research activity in recent years. 
The approaches with deterministic guarantees can be divided into exact and incomplete methods. Incomplete methods are much faster and more scalable than exact ones, but they can be imprecise, i.e., they may fail to certify a property even if it holds.

Complete methods are mostly based on satisfiability modulo theory (SMT) \cite{katz2019marabou,ehlers2017formal,katz2017reluplex,huang2017safety} or the branch-and-bound approach \cite{anderson2020strong,botoeva2020efficient,bunel2020branch,tjeng2017evaluating,Lu2020Neural,xu2020fast,wang2021beta,depalma2021scaling}, often implemented using mixed integer linear programming (MILP). These methods offer exactness guarantees but are based on solving NP-hard optimization problems, which can make them intractable even for small networks.
Incomplete methods can be divided into bound propagation approaches \cite{gowal2019scalable,mirman2018diffai,singh2018fast, weng2018fastlin, 	singh2019abstract,zhang2018crown,mller2021neural} and those that generate polynomially-solvable optimization problems \cite{bunel2020efficient, lyu2019fastened, singh2019beyond, raghunathan2018semidefinite,	xiang2018output,tjandraatmadja2020convex,dathathri2020enabling} such as linear programming (LP) or semidefinite programming (SDP) optimization problems. 
Compared to deterministic certification methods, randomized smoothing \cite{lecuyer2018dp,cohen2019smoothing,salman2019provably} is a defence method providing only probabilistic guarantees and incurring significant runtime costs at inference time, with the generalization to arbitrary safety properties still being an open problem.

A new avenue towards more precision are methods \cite{singh2019beyond, tjandraatmadja2020convex, depalma2021scaling} breaking the so-called convex barrier \cite{salman2019convex} by considering activation functions jointly.
However, their scalability is limited by the need to solve NP-hard convex
hull problems. There are many approaches for solving the convex hull problem
for polyhedra exactly \cite{joswig2003beneath, edelsbrunner2012algorithms, fukuda1995double, motzkin1953double, barber1993quickhull, dantzig1998linear, avis1991basis, avis1992pivoting},
in contrast to few approximate methods which either sacrifice soundness
\cite{bentley1982approximation,khosravani2013simple,zhong2014finding,sartipizadeh2016computing}
or still exhibit exponential complexity \cite{xu1998approximate}, prohibiting
their use in neural network verification.

Our work follows the line of convex barrier-breaking methods, generalizing the concept to arbitrary bounded, multivariate activations. In contrast to prior work, we decompose the underlying convex hull problem into lower-dimensional spaces and solve it approximately using a novel relaxed Double Description, irredundancy formulation, and a new ray-shooting-based algorithm to add multiple constraints jointly. The resulting speed-ups make \tool tractable for non-piecewise-linear activations, a first for convex barrier-breaking methods.

	\section{Conclusion}

We presented \tool, a general framework that substantially advances the state-of-the-art in neural network verification by providing efficient multi-neuron abstractions for arbitrary, bounded, multivariate non-linear activation functions. Our key idea is to compute tighter overall abstractions by considering many overlapping neuron groups thereby capturing more inter-neuron dependencies. To enable this, we decompose the bottleneck convex hull computation into lower-dimensional spaces and solve it approximately. Our extensive experimental evaluation shows that our algorithmic advances shift the bottleneck to the LP-solver while significantly improving both precision and scalability over prior work.

	\message{^^JLASTBODYPAGE \thepage^^J}

	\clearpage
	\bibliography{references}

	\message{^^JLASTREFERENCESPAGE \thepage^^J}

	\setbool{includeappendix}{false}
	
	\ifbool{includeappendix}{%
		\clearpage
		\appendix

\section{Architectures}\label{app:architectures}
In this section, we provide an overview over all the architectures evaluated in \Secref{sec:experiments}.
The fully connected networks with the naming format \texttt{AxB} have \texttt{A} hidden layers with \texttt{B} neurons each and additionally an input and output layer.
The architecture of the convolutional networks for MNIST are detailed in \Tableref{tab:architectures}.

\begin{table}[htb]
	\centering
	\small
	\caption{Network architectures of the convolutional networks for \cifar, \mnist, and the steering angle prediction task. All layers listed below are followed by an activation layer. The output layer is omitted. The output of the \NVIDIA network is passed through a tnah function. `\textsc{Conv} c h$\times$w/s/p' corresponds to a 2D convolution with c output channels, an h$\times$w kernel size, a stride of s in both dimensions, and an all around zero padding of p.}
	\scalebox{0.85}{
	\begin{tabular}{ccccc}
		\toprule
		\convsmall & \convbig & \CNNA & \CNNB & \NVIDIA \\
		\midrule
		\textsc{Conv} 16 4$\times$4/2/0 & \textsc{Conv} 32 3$\times$3/1/1 & \textsc{Conv} 16 4$\times$4/2/1  & \textsc{Conv} 32 5$\times$5/2/0  & \textsc{Conv} 24 5$\times$5/2/0 \\
		\textsc{Conv} 32 4$\times$4/2/0 & \textsc{Conv} 32 4$\times$4/2/1 & \textsc{Conv} 32 4$\times$4/2/1  & \textsc{Conv} 128 4$\times$4/2/1 & \textsc{Conv} 36 5$\times$5/2/0 \\
		\textsc{FC} 100                 & \textsc{Conv} 64 3$\times$3/1/1 & \textsc{FC} 100                  & \textsc{FC} 250                  & \textsc{Conv} 48 5$\times$5/2/0 \\
		                                & \textsc{Conv} 64 4$\times$4/2/1 &                                  &                                  & \textsc{Conv} 64 3$\times$3/1/0 \\ 
		                                & \textsc{FC} 512                 &                                  &                                  & \textsc{Conv} 64 3$\times$3/1/0 \\
		                                & \textsc{FC} 512                 &                                  &                                  & \textsc{FC} 100 \\
  		                                &                                 &                                  &                                  & \textsc{FC} 50 \\
  		                                &                                 &                                  &                                  & \textsc{FC} 10 \\
		\bottomrule
	\end{tabular}
	}
	\label{tab:architectures}
\end{table}

\begin{table}[htb]
	\centering
	\small
	\caption{Network architecture of the \resnet. All layers listed below are followed by a ReLU activation layer, except if they are followed by a \textsc{ResAdd} layer. The output layer is omitted. `\textsc{Conv} c h$\times$w/s/p' corresponds to a 2D convolution with c output channels, an h$\times$w kernel size, a stride of s in both dimensions, and an all around zero padding of p.}
	\scalebox{0.85}{
		\begin{tabular}{cc}
			\toprule
			\multicolumn{2}{c}{\resnet}\\
			\midrule
			\multicolumn{2}{c}{\textsc{Conv} 16 3$\times$3/1/1} \\ 
			\multirow{1}{*}{\textsc{Conv} 16 1$\times$1/1/0} & \textsc{Conv} 16 3$\times$3/1/1   \\
			& \textsc{Conv} 16 3$\times$3/1/1 \\
			\multicolumn{2}{c}{\textsc{ResAdd}} \\ 
			\multirow{1}{*}{\textsc{Conv} 16 1$\times$1/1/0} & \textsc{Conv} 16 3$\times$3/1/1   \\
			& \textsc{Conv} 16 3$\times$3/1/1 \\
			\multicolumn{2}{c}{\textsc{ResAdd}} \\ 
			\textsc{Conv} 32 2$\times$2/2/0 & \textsc{Conv} 32 3$\times$4/2/1 \\
			&\textsc{Conv} 32 3$\times$3/1/1 \\
			\multicolumn{2}{c}{\textsc{ResAdd}} \\ 
 			\textsc{Conv} 64 2$\times$2/2/0 & \textsc{Conv} 64 3$\times$4/2/1 \\
			&\textsc{Conv} 64 3$\times$3/1/1 \\
			\multicolumn{2}{c}{\textsc{ResAdd}} \\ 
			\multicolumn{2}{c}{\textsc{FC} 1000} \\
			\bottomrule
		\end{tabular}
	}
	\label{tab:resnet_architecture}
\end{table}
	}{}
	
	\message{^^JLASTPAGE \thepage^^J}
	
\end{document}